\newtheorem{assumption}{Assumption}
\newcommand{\nbb}{\mathbb{N}}
\newcommand{\bw}{\mathbf{w}}
\newcommand{\bv}{\mathbf{v}}
\newcommand{\ibb}{\mathbb{I}}
\newcommand{\xcal}{\mathcal{X}}
\newcommand{\wcal}{\mathcal{W}}
\newcommand{\zcal}{\mathcal{Z}}
\newcommand{\ycal}{\mathcal{Y}}
\newcommand{\ebb}{\mathbb{E}}
\newcommand{\rbb}{\mathbb{R}}
\def\ga{\alpha}
\def\gep{\varepsilon}
\def\gl{\lambda}
\def\gO{\Omega}
\def\O{\mathcal{O}}
\def\R{\mathbb{R}}
\def\X{\mathcal{X}}
\def\Y{\mathcal{Y}}
\def\EX{{\mathbb{E}}}
\def\gl{\lambda}
\def\R{\mathbb{R}}
\def\bI{{\bf I}}
\def\bw{\mathbf{w}}
\def\bv{\mathbf{v}}
\newcommand\numberthis{\addtocounter{equation}{1}\tag{\theequation}}
\renewenvironment{proof}[1][Proof]{\par\noindent{\bf #1\ }}{\hfill$\square$\\[2mm]}
\numberwithin{equation}{section}
\begin{document}

\title{Stochastic Proximal AUC Maximization}

\author{Yunwen Lei \email leiyw@sustc.edu.cn \\
       \addr Department of Computer Science and Engineering\\
       Southern University of Science and Technology\\
       Shenzhen, China
       \AND
       \name Yiming Ying\thanks{Corresponding author} \email yying@albany.edu \\
       \addr Department of Mathematics and Statistics\\
        State University of New York at Albany\\
       Albany, USA}

\editor{}

\maketitle

\begin{abstract}In this paper we consider the problem of maximizing the Area under the ROC curve (AUC)  which is a widely used performance metric in  imbalanced  classification and anomaly detection. Due to the pairwise nonlinearity of the objective function, classical SGD algorithms do not apply to the task of AUC maximization.    We propose a novel stochastic proximal algorithm for AUC maximization which is scalable to large scale streaming data. Our algorithm can accommodate general penalty terms and is easy to implement with favorable $\O(d)$ space and per-iteration time complexities. We establish a high-probability convergence rate $\O(1/\sqrt{T})$ for the general convex setting, and improve it to  a fast convergence rate $\O(1/T)$ for the cases of strongly convex regularizers and no regularization term (without strong convexity). Our proof does not need the uniform boundedness assumption on the loss function or the iterates which is more fidelity to the practice.  Finally, we perform extensive experiments over various benchmark data sets from real-world application domains which show the superior performance of our algorithm over the existing AUC maximization algorithms.
\end{abstract}

\begin{keywords}
  AUC maximization, Imbalanced Classification, Stochastic Gradient Descent, Proximal Operator
\end{keywords}

\section{Introduction}
Area under the ROC curve (AUC) \citep{Hanley} measures the probability for a randomly drawn positive instance to have a higher decision value than a randomly  sampled negative instance.
  It is a widely used metric for measuring the performance of machine learning algorithms in  imbalanced  classification and anomaly detection \citep{bradley1997use,fawcett2006introduction}. In particular, minimization of the rank loss in bipartite ranking  is equivalent to maximizing the AUC criterion \citep{agarwal2005generalization,guvenir2013ranking,kotlowski2011bipartite}.  At the same time, we are experiencing the fundamental change of the sheer size of commonly generated datasets where {\em streaming data} is  continuously arriving in a real time manner.  Hence, it is of practical importance to develop efficient optimization algorithms for maximizing the AUC score which is scalable to large-scale streaming datasets for real-time predictions.

Stochastic (proximal) gradient descent (SGD), also known as stochastic approximation or incremental gradient,  has become the workhorse in machine learning \citep{Bach,Bottou,Orabona,RS,rosasco2014convergence,ST}. It can be regarded as online learning \citep{cesa2006prediction,hazan2016introduction,shalev2012online} in the stochastic setting  where the individual data point is assumed to be drawn randomly from a (unknown) distribution.  These algorithms are iterative and incremental in nature and process each new sample (input) with a computationally cheap update, making them amenable for streaming data analysis. The working mechanism behind classical SGD algorithms is to perform gradient descent using unbiased (random) samples of the true gradient. In the sense, the objective function is required to be {\em linear} in the sampling distribution.  For example, in binary classification, let $\rho$ be a probability measure (sampling distribution) defined on input/output space $\X\times \Y$ with $\X\subseteq \R^d$ and $\Y = \{\pm 1\}.$ The linearity with respect to the sampling distribution $\rho$ in this case means that the objective function  (true risk) is the expectation of a {\em pointwise} loss function $\ell:\rbb^d\times \X \times \Y \to [0, \infty)$, i.e.   $$R(\bw)=\EX[\ell(\bw, x,y)] = \iint_{\X\times \Y} \ell(\bw, x , y)d\rho(x,y).$$   This linearity plays a pivotal role in studying the convergence  of SGD and deriving many of its appealing properties.

In contrast, the problem of AUC maximization involves the expectation of a {\em pairwise} loss function which depends on pairs of data points.  Consequently, the objective function in AUC maximization is {\em pairwise nonlinear} with respect to the sampling distribution $\rho$. To be more precise, recall \citep{Hanley,Clem} that  the AUC score of a function $h_\bw(x) = \bw^\top x$ is defined by
\begin{equation}\label{auc}
  \text{AUC}(\bw)  =\text{Pr}\{\bw^\top x\geq \bw^\top x'|y=+1,y'=-1\}=\ebb\big[\ibb_{[ \bw^\top x\geq \bw^\top x']}|y=+1,y'=-1\big],\numberthis
\end{equation}
where $\ebb[\cdot]$ is with respect to $(x,y)$ and $(x',y')$ independently drawn from $\rho$.
Since the indicator function $\ibb[\cdot]$ is discontinuous,  one often resorts to a convex surrogate loss $\ell:\rbb\mapsto\rbb^+$ and two common choices are the least square loss $\ell(a)=(1-a)^2$ and the hinge loss
$\ell(a)=\max\{0,1-a\}$. In this paper, we consider the least square loss since it is statistically consistent with AUC while the hinge loss is not \citep{gao2015consistency}.  Hence, we have \begin{equation}\label{eq:auc-obj}
	p(1-p)\Bigl[1- \text{AUC}(\bw)\Bigr] \le f(\bw):= p(1-p) \EX[( 1 -\bw^\top (x - x') )^2 | y=1,y'=-1 ],
\end{equation}
where $p =\text{Pr}(y=1)$.
Now the regularization framework for maximizing the AUC score can be formulated as follows
\begin{equation}\label{auc-regularizaiton}
  \min_{\bw\in\rbb^d}\;\Bigl\{\phi(\bw):=  f(\bw) +\Omega(\bw) \Bigr\},
\end{equation}
where $\Omega:\rbb^d\mapsto\rbb^+$ is a convex regularizer. This pairwise nonlinearity in the sampling distribution makes the direct deployment of standard SGD infeasible.

\subsection{Related Work} There are considerable efforts on developing optimization algorithms for AUC maximization, which can roughly be divided into three categories.

The first category is batch learning algorithms for AUC maximization with focus on the empirical risk minimization \citep{Cortes2} which use the training data at once.  For instance, the early work \citep{Joachims,herschtal2004optimising} proposed to use the cutting plane method and  gradient descent algorithm, respectively.  \cite{Xinhua} developed an appealing algorithmic framework for optimizing the multivariate performance measures \citep{Joachims} including the AUC score and precision-recall break-even point. The algorithms there used the smoothing techniques \citep{Nest:2005} and the Nesterov's accelerated gradient algorithm \citep{nesterov1983method}.
Support Vector Algorithms were proposed to maximize the partial area under the ROC curve between any two false positive rates, which is interesting in several applications, e.g., ranking, biometric screening and medicine~\citep{NA17}.
Such batch learning algorithms generally require $\O\bigl(\min\bigl({1\over \gep},{1\over \sqrt{\gl \gep}}\bigr)\bigr)$ iterations to achieve an accuracy of $\epsilon$, but have a high per-iteration cost of $\O(n d).$ Here, $\gl, n$, and $d$ are the regularization parameter, the number of samples, and the dimension of the data, respectively.
Such algorithms train the model on the whole training data which are not suitable for analyzing massive streaming data that arrives continuously.

The second category of work \citep{kar2013generalization,wang2012generalization,Ying} extended the classical online gradient descent (OGD)  \citep{zinkevich2003online,hazan2016introduction,shalev2012online} to the setting of pairwise learning and hence is applicable to the problem of AUC maximization.  Regret bounds were established there which can be converted to generalization bounds in the stochastic setting as shown by \cite{kar2013generalization,wang2012generalization}. Such algorithms, however, need to compare the latest arriving data with previous data which require to store the historic data. This leads to expensive space and per-iteration complexities  $\O(td)$ at the $t$-th iteration which is not feasible for streaming data.  For the specific least square loss, \citet{gao2013one} developed an one-pass AUC maximization method by updating the covariance matrices of the training data, which has $\O(d^2)$ space and per-iteration time complexity which could be problematic for high-dimensional data.

The third category of work \citep{YWL,liu2018fast,natole2018stochastic} considered the expected risk and used primal-dual SGD algorithms. In particular, \cite{YWL,natole2018stochastic} formulated AUC maximization \eqref{auc-regularizaiton} as a saddle point problem as follows
\begin{equation}\label{auc-saddle}
  \min_{\bw,a,b\in\rbb}\max_{\alpha\in\rbb}\;\ebb_{z}\big[F(\bw,a,b,\alpha;z)\big] + \gO(\bw),
\end{equation}
where $F(\bw,a,b,\alpha;z)=p(1-p)+(1-p)(\bw^\top x-a)^2\ibb_{[y=1]}+p(\bw^\top x-b)^2\ibb_{[y=-1]}
    +2(1+\alpha)\bw^\top x\big(p\ibb_{[y=-1]}-(1-p)\ibb_{[y=1]}\big)-p(1-p)\alpha^2.$
Then, they proposed to perform SGD on both the primal variables $\bw,a$ and $b$, and  the dual variable $\alpha.$ This algorithm has per-iteration and space cost of $\O(d)$, making them amenable for streaming data analysis.  It enjoys a moderate convergence rate $\O(1/\sqrt{T}).$  The most recent work by \citet{liu2018fast} also used this saddle point formulation  and developed a novel multi-stage scheme for running primal-dual stochastic gradient algorithms which enjoy a fast convergence of $\widetilde{\O}({1/T})$\footnote{We use the notation $\widetilde{\O}$ to hide polynomial of logarithms.} for non-strongly-convex objective functions. Both algorithms in \cite{YWL,liu2018fast} require a critical assumption of uniform boundedness for model parameters. i.e. $\|\bw\|\le R$ which might be difficult to adjust in practice. \citet{natole2018stochastic} developed a stochastic proximal algorithm for AUC maximization with a convergence rate $\widetilde{\O}(1/T)$ for strongly convex objective function. The potential limitation of this method is that it assumes the conditional expectations $\EX[x|y=1]$ and $\EX[x'| y'=-1]$ are known a priori which is hard to satisfy in practice.

There are some other related work. For instance, \cite{palaniappan2016stochastic} developed an appealing stochastic primal-dual algorithm for saddle point problems with convergence rate of $\O({1\over T})$ which, as a by-product,  can be applied to AUC maximization with the least square loss.  However, their saddle point formulation focused on the empirical risk minimization  and can not be applied to the population risk in our case. In addition, the primal-dual algorithm there requires  strong convexity on both the primal and dual variables, and the algorithm has per-iteration complexity $\O(n+d)$ where $n$ is the total number of training samples and $d$ is the dimension of the data.

Our work fall in the regime where the aim is to minimize an expected-valued objective function which is nonlinear with respect to the sampling distribution.  This research area is attracting more and more attention in optimization and machine learning with important applications to reinforcement learning and robust learning.  For example, \cite{wang2016accelerating,wang2017stochastic} proposed a stochastic compositional gradient descent (SCGD) for solving the problem
\begin{equation}\label{eq:composite} \min_{\bw\in \gO}  \EX[f_v (\EX (g_w (\bw)| \bv))],\end{equation}
where $\gO$ is a closed convex set of $\R^n$,  $f_v: \R^m \mapsto \R$ and $g_w: \R^n \mapsto \R^m$ are functions parametrized by the random variables $w$ and $v$.  However, it is not clear how to formulate the problem of AUC maximization as \eqref{eq:composite}. In addition, the SCGD algorithms proposed in \citep{wang2017stochastic,wang2016accelerating}   require that both the gradients of $f_v$ and $g_w$ are bounded which is not the case for our setting since we use the least square loss. As we show soon in the next section,  we explore the intrinsic structure of AUC maximization to show our proposed algorithms are guaranteed to converge with high probability without boundedness assumptions. Moreover, it can achieve a fast convergence rate of $\widetilde{\O}({1\over T})$ without strong convexity.

\subsection{Main Contributions}
In this paper, we propose novel SGD algorithms for AUC maximization which does not need the boundedness assumptions and can achieve a fast convergence rate without strong convexity. Our key idea is the new decomposition technique (see Proposition \ref{lem:unbiased}) which directly works with the objective function motivated by the saddle point formulation \citep{YWL,natole2018stochastic}.  From this new decomposition, we are able to design approximately unbiased estimators for the true gradient $\nabla f (\bw)$. Our algorithms do not need to store the previous data points in contrast to the approaches in \citep{wang2012generalization,kar2013generalization,zhao2011online} or accessing true conditional expectations as in \citep{natole2018stochastic}.  A comparison of our algorithm with other methods is summarized in Table \ref{tab:comparison}.

\begin{table}[t]
\small
\setlength{\tabcolsep}{4pt}
  \centering\def\arraystretch{1.2}
\centering
  \begin{tabular}{|c|c|cc|c|}\hline
  Algorithm & storage/per-iteration & bound type & rate & penalty \\ \hline
  OAM \citep{zhao2011online} & $\O(Bd)$   & regret & $\O(1/\sqrt{T})$ & $\ell_2$   \\ \hline
  OPAUC \citep{gao2013one} & $\O(d^2)$  & regret & $\O(1/\sqrt{T})$  & $\ell_2$   \\ \hline
  SOLAM \citep{YWL} & $\O(d)$   & w.h.p. & $\O(1/\sqrt{T})$ & $\ell_2$- constraint \\ \hline
  FSAUC \citep{liu2018fast} & $\O(d)$  & w.h.p. & $\widetilde{\O}(1/T)$ & $\ell_1$- constraint \\ \hline
  SPAM \citep{natole2018stochastic} & $\O(d)$   & expectation & $\widetilde{\O}(1/T)$ & strongly convex   \\ \hline
  SPAUC (this work) & $\O(d)$   & w.h.p. & $\widetilde{\O}(1/\sqrt{T})$ &   convex regularizer \\ \hline
  \multirow{2}{*}{SPAUC (this work)}   & \multirow{2}{*}{$\O(d)$} & \multirow{2}{*}{ w.h.p.}  & \multirow{2}{*}{$\widetilde{\O}(1/T)$ }  & strongly convex \\
  &&  & & or no regularizer \\ \hline
  \end{tabular}
  \normalsize
  \caption{Comparison of different AUC maximization methods. The notation $B$ refers to the buffer size in \citet{zhao2011online}. For the bound type, ``regret'' refers to regret bounds, ``expectation'' refers to convergence rates in expectation and ``w.h.p.'' refers to convergence rates with high probabilities. If the bound type is ``regret'', we use the rate $\O(1/\sqrt{T})$ to mean regret bounds $\O(\sqrt{T})$ for a consistent comparison.
		\label{tab:comparison}} 		
\end{table}

From the side of technical novelty, we develop techniques to control the norm of iterates with high probabilities, and hence there is no boundedness assumptions on the iterates. Essential components includes controlling (weighted) summation of function values by self-bounding property of loss functions (Corollary \ref{lem:boundness}), probabilistic bounds on approximating unbiased stochastic gradients with empirical counterparts (Lemma \ref{lem:approx}) and the trick of offsetting the conditional variances of some martingales by some other terms due to the intrinsic property of the objective function.  Our major contributions can be summarized as follows.

\begin{itemize}

\item We propose a novel stochastic proximal algorithm for AUC maximization which accommodates general convex regularizers with favorable $\O(d)$ space and per-iteration time complexities. Our algorithm is gradient-based and hence is simple and easy to implement which does not need the multi-stage design \citep{liu2018fast} and bounded assumption on model parameters \citep{liu2018fast,YWL}.

\item  We establish a convergence rate $\widetilde{O}(1/\sqrt{T})$ with high probability for our algorithm with $T$ iterations, and improve it to  a fast convergence $\widetilde{\O}(1/T)$ for both cases of no regularization term (non-strong convexity) and strongly convex regularizers.

\item We perform a comprehensive empirical comparison against five state-of-the-art AUC maximization algorithms over sixteen  benchmark data sets from real-world application domains. Experimental results show that our algorithm can achieve superior performance  with a consistent and significant reduction in running time.
\end{itemize}

\noindent {\bf Organization of the paper}. The remainder of this paper is organized as follows. We state the algorithm with motivation in Section \ref{sec:alg}. Theoretical and experimental results are presented in Section \ref{sec:theory} and Section \ref{sec:exp}, respectively. The proofs of theoretical results are given in Section \ref{sec:proof}. We conclude the paper in Section \ref{sec:conclusion}.

\section{Proposed Algorithm \label{sec:alg}}
 Our objective is to develop  efficient SGD-type algorithms for AUC maximization scalable to large scale streaming data.  In particular, we aim to design an (approximately) unbiased estimator for the true gradient $\nabla f(\bw)$ with per-iteration cost $\O(d)$ to perform SGD-type algorithms.   In particular, our new design is mainly motivated by the saddle point formulation in \citep{YWL,natole2018stochastic}.

 To illustrate the main idea, let
 \begin{multline}\label{tf}
  \widetilde{F}(\bw;z)=p(1-p)+ (1-p)\big(\bw^\top\big(x-\ebb[\tilde{x}|\tilde{y}=1]\big)\big)^2\ibb_{[y=1]}\\
  +p\big(\bw^\top\big(x-\ebb[\tilde{x}|\tilde{y}=-1]\big)\big)^2\ibb_{[y=-1]}
  +2p(1-p)\bw^\top\Big(\ebb[x'|y'=-1]-\ebb[x|y=1]\Big)\\
  +p(1-p)\big(\bw^\top\big(\ebb[x'|y'=-1]-\ebb[x|y=1]\big)\big)^2.
\end{multline}
It was shown in \citep{YWL,natole2018stochastic} that  the saddle point formulation \eqref{auc-saddle} implies that $f(\bw) =  \min_{\bw,a,b}\max_\alpha   \EX[F(\bw,a,b,\alpha; z)].$ In particular, for any fixed $\bw$ the optima $a,b,\alpha$ have a closed-form solution of $a(\bw),b(\bw)$ and $\alpha(\bw)$ which are given by
\begin{equation}\label{closed-form}
a(\bw)=\bw^\top\ebb[x|y=1],\; b(\bw)=\bw^\top\ebb[x'|y'=-1],\; \alpha(\bw)=b(\bw) - a(\bw).
\end{equation}
Indeed,   let $F_1(\bw;z)=F(\bw, a(\bw),b(\bw), \ga(\bw); z)$ and then
\begin{multline*}
  F_1(\bw;z)=(1-p)\big(\bw^\top\big(x-\ebb[\tilde{x}|\tilde{y}=1]\big)\big)^2\ibb_{[y=1]}
  +p\big(\bw^\top\big(x-\ebb[\tilde{x}|\tilde{y}=-1]\big)\big)^2\ibb_{[y=-1]}\\
  +2\big(1+\bw^\top\big(\ebb[x'|y'=-1]-\ebb[x|y=1]\big)\big)\bw^\top x\big(p\ibb_{[y=-1]}-(1-p)\ibb_{[y=1]}\big)\\
  +p(1-p)-p(1-p)\big(\bw^\top\big(\ebb[x'|y'=-1]-\ebb[x|y=1]\big)\big)^2.
\end{multline*}
Note that $\bw^\top \ebb\big[x\big(p\ibb_{[y=-1]}-(1-p)\ibb_{[y=1]}\big)\big]=p(1-p)\bw^\top\big(\ebb[x'|y'=-1]-\ebb[x|y=1]\big).$ After organizing the terms,  one can easily see that $\ebb[\widetilde{F}(\bw;z)]=\ebb[F_1(\bw;z)] =f(\bw)$ for any $\bw$.   Consequently, one can see that both $\nabla \widetilde{F}(\bw;z)$  and $\nabla {F}_1(\bw,z)$ are both unbiased estimators of $\nabla f(\bw)$, i.e. $\EX[\nabla F_1(\bw;z)]= \EX[\nabla \widetilde{F}(\bw;z) ] = \nabla f(\bw).$ The work of \citet{natole2018stochastic} proposed to use $\nabla F(\bw,a(\bw),b(\bw),\alpha(\bw);z)$ as an unbiased gradient and the convergence analysis was proved in expectation.   It is easy to see that $F_1(\bw; z)$ is not convex, i.e.  the Hessian of $F_1(\bw;z)$ is not positive-semi-definite (PSD). The non-convexity of $F_1(\bw; z)$ presents daunting difficulties to bound the iterates and  deriving the convergence of the algorithm in high-probability using concentration inequalities.  In contrast, the new design of $\widetilde{F}(\bw;z)$ is convex with respect to $\bw$ which will enable us to prove convergence in high probability.

In a nutshell, we have the following important proposition. Motivated by the saddle-point formulation in \citep{YWL,natole2018stochastic} as mentioned above,  we also give an alternative but self-contained proof by writing the objective function as
\begin{align*}\label{eq:decomp0}(1 - \bw^\top (x - x') )^2  & =
\bigl( [1 +  \alpha(\bw) ] + [\bw^\top x' - b(\bw)] - [\bw^\top x - a(\bw)]\bigr)^2 \\
& = \bigl( \bigl[1 +  \bw^\top (\EX[x'| y'=-1] - \EX[x|y=1])\bigr] \\
& + \bigl[\bw^\top (x' - \EX [x'| y'=-1])  - \bw^\top (x - \EX[x| y=1])\bigr]\bigr)^2\numberthis. \end{align*}

\begin{proposition}\label{lem:unbiased}
  For any $\bw$, we have
  \begin{equation}\label{unbiased}
    \ebb\big[\widetilde{F}(\bw;z)\big]=f(\bw)\quad\text{and}\quad \ebb\big[\widetilde{F}'(\bw;z)\big]=\nabla f(\bw),
  \end{equation}
  where we use the abbreviation $\widetilde{F}'(\bw;z):=\frac{\partial \widetilde{F}(\bw;z)}{\partial \bw}$.
  Furthermore, for any $z$ the function $\widetilde{F}(\bw;z)$ is a convex function of $\bw$.
\end{proposition}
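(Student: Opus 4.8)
The plan is to establish the three assertions in turn, leaning on the algebraic identity \eqref{eq:decomp0} that has already been recorded. Throughout I write $\mu_+=\ebb[x\mid y=1]$ and $\mu_-=\ebb[x'\mid y'=-1]$, so that $a(\bw)=\bw^\top\mu_+$, $b(\bw)=\bw^\top\mu_-$ and $1+\alpha(\bw)=1+\bw^\top(\mu_--\mu_+)$. For the expectation identity $\ebb[\widetilde F(\bw;z)]=f(\bw)$, I would abbreviate $A=1+\bw^\top(\mu_--\mu_+)$, $B=\bw^\top(x'-\mu_-)$ and $C=\bw^\top(x-\mu_+)$, so that \eqref{eq:decomp0} reads $(1-\bw^\top(x-x'))^2=(A+B-C)^2$ under the conditioning $y=1,\ y'=-1$. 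Expanding the square and taking $\ebb[\,\cdot\mid y=1,y'=-1]$, the crucial point is that $A$ is deterministic while $B$ and $C$ are centered, $\ebb[B\mid y'=-1]=\ebb[C\mid y=1]=0$, and independent because $x$ and $x'$ are drawn independently; hence the three cross terms $2AB$, $-2AC$, $-2BC$ all vanish and $f(\bw)=p(1-p)\bigl(A^2+\ebb[C^2\mid y=1]+\ebb[B^2\mid y'=-1]\bigr)$.

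Separately I would expand $\ebb[\widetilde F(\bw;z)]$ directly from \eqref{tf}, integrating $z=(x,y)$. The two indicator terms turn into $\Pr(y=1)=p$ and $\Pr(y=-1)=1-p$ multiplied by the matching conditional second moments, producing exactly $p(1-p)\ebb[C^2\mid y=1]$ and $p(1-p)\ebb[B^2\mid y'=-1]$; the remaining constant, linear and last quadratic terms assemble into $p(1-p)\bigl(1+2\bw^\top(\mu_--\mu_+)+(\bw^\top(\mu_--\mu_+))^2\bigr)=p(1-p)A^2$ after completing the square. Matching this against the previous display yields $\ebb[\widetilde F(\bw;z)]=f(\bw)$.

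The gradient identity then follows by differentiating the established equality $\ebb[\widetilde F(\bw;z)]=f(\bw)$ in $\bw$ and interchanging $\nabla$ with $\ebb$. This interchange is legitimate because, for each fixed $z$, $\widetilde F(\bw;z)$ is a quadratic in $\bw$ whose coefficients are integrable in $z$ under the standard finite-second-moment assumption on $x$; consequently $\widetilde F'(\bw;z)$ is affine in $\bw$ with integrable coefficients, and dominated convergence gives $\nabla f(\bw)=\nabla\ebb[\widetilde F(\bw;z)]=\ebb[\widetilde F'(\bw;z)]$. For convexity, I would read the structure off \eqref{tf} directly: apart from the constant $p(1-p)$ and the affine term $2p(1-p)\bw^\top(\mu_--\mu_+)$, every summand is a nonnegative multiple of a squared affine form $(\bw^\top v)^2$, whose Hessian $2vv^\top$ is positive semidefinite; since $(1-p)\ibb_{[y=1]}\ge 0$, $p\,\ibb_{[y=-1]}\ge 0$ and $p(1-p)\ge 0$, the function $\bw\mapsto\widetilde F(\bw;z)$ is a sum of convex functions and hence convex. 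This is exactly the structural advantage over $F_1(\bw;z)$, whose indefinite bilinear term and subtracted square destroy convexity.

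I expect the only genuinely delicate step to be the bookkeeping in the first two paragraphs: tracking all six terms of $(A+B-C)^2$ and checking each cross term integrates to zero, then completing the square in the direct expansion of \eqref{tf} so that the two representations line up term by term. The gradient interchange and the convexity claim are essentially immediate once the quadratic, sum-of-squares structure of $\widetilde F$ is in hand.
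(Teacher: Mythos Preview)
Your proposal is correct and follows essentially the same route as the paper: both arguments expand \eqref{eq:decomp0}, use the centering of $B$ and $C$ and the independence of $(x,y)$ and $(x',y')$ to kill the cross terms, and then match the result against the direct expansion of $\ebb[\widetilde F(\bw;z)]$; the gradient interchange and the convexity are handled in the same spirit (the paper writes out the Hessian explicitly, you argue term-by-term via sums of squares, which is equivalent). The only cosmetic difference is that the paper groups the square as $\mathbf{I}+\mathbf{II}+\mathbf{III}=A^2+(B-C)^2+2A(B-C)$ before taking expectations, whereas you expand all six monomials of $(A+B-C)^2$ at once.
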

\begin{proof} As indicated by \eqref{eq:decomp0}, we
 write $(1 - \bw^\top (x - x') )^2 $ as three terms:
\begin{align*}   &  \Bigl( \bigl[1 +  \bw^\top (\EX[x'| y'=-1] - \EX[x|y=1])\bigr] + \bigl[\bw^\top (x' - \EX [x'| y'=-1])  - \bw^\top (x - \EX[x| y=1])\bigr]\Bigr)^2 \\
& \hspace*{-2mm}= \bigl\{\bigl[1 +  \bw^\top (\EX[x'| y'=-1] - \EX[x|y=1])\bigr]^2\bigr\} \hspace*{-1mm}+ \hspace*{-1mm}\bigl\{\bigl[\bw^\top (x' - \EX [x'| y'=-1])  - \bw^\top (x - \EX[x| y=1])\bigr]^2 \bigr\} \\
& +  \bigl\{2 \bigl[1 +  \bw^\top (\EX[x'| y'=-1] - \EX[x|y=1])\bigr] \bigl[\bw^\top (x' - \EX [x'| y'=-1])  - \bw^\top (x - \EX[x| y=-1])\bigr]\bigr\} \\
& =  \bI + \mathbf{II} + \mathbf{III}.
 \end{align*}
It suffices to estimate the above terms one by one.  To this end, the first term is deterministic, and hence
\begin{align*}\label{eq:termI}
	\EX[\, \bI \,  |  y=1, y'=-1] & =  2\bw^\top\bigr(\ebb[x'|y'=-1]-\ebb[x|y=1]\bigl)
  \\ & +\big(\bw^\top\big(\ebb[x'|y'=-1]-\ebb[x|y=1]\big)\big)^2+ 1. \numberthis
\end{align*}
For the second term, noticing that $\EX \bigl[ \bw^\top (x' - \EX [x'| y'=-1]) \bw^\top (x - \EX[x| y=1]) \, | y=1,y'=-1\bigr]  = \EX \bigl[ \bw^\top (x' - \EX [x'| y'=-1]) \, | y'=-1\bigr] \EX \bigl[ \bw^\top (x - \EX [x| y=1]) \, | y=1\bigr] =0  $ as $(x,y)$ and $(x',y')$ are independent, we have
\begin{align*}\label{eq:termII}  & \EX[\,  \mathbf{II}\, | y=1, y'=-1]  \\ & =
\EX\bigl[ \bigl(\bw^\top (x' - \EX [x'| y'=-1])  - \bw^\top (x - \EX[x| y=1])\bigr)^2 |  y=1, y'=-1\bigr] \\
& = \EX\bigl[ \bigl(\bw^\top (x' - \EX [x'| y'=-1])\bigr)^2 |   y=1, y'=-1\bigr] + \EX\bigl[ \bigl(\bw^\top (x - \EX [x| y=1])\bigr)^2 |   y=1, y'=-1\bigr] \\
& = \EX\bigl[ \bigl(\bw^\top (x' - \EX [x'| y'=-1])\bigr)^2 |  y'=-1\bigr] + \EX\bigl[ \bigl(\bw^\top (x - \EX [x| y=1])\bigr)^2 |  y=1\bigr] \\
& = {1\over 1-p}\EX\bigl[ \bigl(\bw^\top (x' - \EX [x'| y'=-1])\bigr)^2 \ibb_{[y'=-1]} \bigr] + {1\over p }\EX\bigl[ \bigl(\bw^\top (x - \EX [x| y=1])\bigr)^2  \ibb_{[y=1]}\bigr].
\numberthis
\end{align*}
For the third term,
\begin{align*}\label{eq:termIII}
& \EX[\, \mathbf{III}\,  |  y =1, y'=-1] = 2\bigl[1 +  \bw^\top (\EX[x'| y'=-1] - \EX[x|y=1])\bigr] \\ & \times \EX\bigl[ \bw^\top (x' - \EX [x'| y'=-1])  - \bw^\top (x - \EX[x| y=-1])  |   y =1, y'=-1\bigr] =0.
\numberthis 	
\end{align*}
Combining equations \eqref{eq:termI},\eqref{eq:termII}, and \eqref{eq:termIII}together, we have
\begin{align*} f(\bw) & = p(1-p) \EX[(1 - \bw^\top (x - x') )^2 | y=1, y'=-1 ]  \\ & = 2p(1-p)\bw^\top\bigr(\ebb[x'|y'=-1]-\ebb[x|y=1]\bigl)
  \\ & +p(1-p)\big(\bw^\top\big(\ebb[x'|y'=-1]-\ebb[x|y=1]\big)\big)^2+ p(1-p)  \\
  &  + {p}\EX\Bigl[ \bigl(\bw^\top (x' - \EX [x'| y'=-1])\bigr)^2 \ibb_{[y'=-1]} \Bigr] + {(1-p) }\EX\Bigl[ \bigl(\bw^\top (x - \EX [x| y=1])\bigr)^2  \ibb_{[y=1]}\Bigr], \end{align*}
which implies that $f(\bw) = \EX [\widetilde{F}(\bw;z)].$

The fact of $\ebb\big[\widetilde{F}'(\bw;z)\big]=\nabla f(\bw)$ follows directly from the Leibniz's integral rule that  the derivative and the integral can be interchangeable as $F$ is a quadratic function and  the input $x$ is from a bounded domain.

For the last statement, notice that
\begin{multline*}
\nabla^2\widetilde{F}(\bw;z)= 2(1-p)\big(x-\ebb[\tilde{x}|\tilde{y}=1]\big)\big(x-\ebb[\tilde{x}|\tilde{y}=1]\big)^\top\ibb_{[y=1]}\\
+2p(x-\ebb[\tilde{x}|\tilde{y}=-1])(x-\ebb[\tilde{x}|\tilde{y}=-1])^\top\ibb_{[y=-1]}\\
  +2p(1-p)\big(\ebb[x'|y'=-1]-\ebb[x|y=1]\big)\big(\ebb[x'|y'=-1]-\ebb[x|y=1]\big)^\top.
\end{multline*}
It is clear that $\nabla^2\widetilde{F}(\bw;z)$ is positive semi-definite, and hence $\widetilde{F}(\bw;z)$ is a convex function of $\bw$ for any $z$.  This completes the proof of the proposition.
\end{proof}

Proposition \ref{lem:unbiased} indicates to use $\widetilde{F}'(\bw;z)$ as an unbiased estimator for the gradient $\nabla f(\bw)$. However, the function $\widetilde{F}$ requires the unknown information $p,\ebb[x|y=1]$ and $\ebb[x|y=-1]$, which is unknown in practice.  We propose to replace them by
their empirical counterpart defined as follows
\begin{equation}\label{approx}
p_t=\frac{\sum_{i=0}^{t-1}\ibb_{[y_i=1]}}{t},\quad
u_t=\frac{\sum_{i=0}^{t-1}x_i\ibb_{[y_i=1]}}{\sum_{i=0}^{t-1}\ibb_{[y_i=1]}},\quad
v_t=\frac{\sum_{i=0}^{t-1}x_i\ibb_{[y_i=-1]}}{\sum_{i=0}^{t-1}\ibb_{[y_i=-1]}},
\end{equation}
where we reserve an example $(x_0,y_0)$ drawn independently from $\rho$.
The resulting estimator for $F$ at time $t$  then becomes
\begin{multline*}
  \hat{F}_t(\bw;z)=(1-p_t)\big(\bw^\top\big(x-u_t\big)\big)^2\ibb_{[y=1]} +p_t\big(\bw^\top\big(x-v_t\big)\big)^2\ibb_{[y=-1]}\\
  +2p_t(1-p_t)\bw^\top\big(v_t-u_t\big)+p_t(1-p_t)\big(\bw^\top\big(v_t-u_t\big)\big)^2+p_t(1-p_t).
\end{multline*}
It is easy to verify by computing its Hessian that  $\hat{F}_t(\bw;z)$ is convex with respect to $\bw.$ Its gradient can be directly computed as follows
\begin{multline}\label{grad-hf}
  \hat{F}_t'(\bw;z):=\frac{\partial \hat{F}_t(\bw;z)}{\partial \bw}= 2(1-p_t)\big(x-u_t\big)\big(x-u_t\big)^\top\bw\ibb_{[y=1]}
  +2p_t\big(x-v_t\big)\big(x-v_t\big)^\top\bw\ibb_{[y=-1]} \\ +2p_t(1-p_t)\big(v_t-u_t\big)+2p_t(1-p_t)\big(v_t-u_t\big)\big(v_t-u_t\big)^\top\bw.
\end{multline}
Note the stochastic gradient $\hat{F}_t'(\bw;z)$ can be efficiently computed with an arithmetic cost $\O(d)$ and we do not need to store covariance matrices.

\medskip
\noindent{\bf Algorithm:} We propose to solve this regularization problem \eqref{auc-regularizaiton} by the following {\em Stochastic proximal AUC maximization} (SPAUC) algorithm with $\bw_1=0$ and for any $t\ge 1$,
\begin{equation}\label{SAUC}
  \bw_{t+1}=\arg\min_{\bw\in\rbb^d}\eta_t\big\langle \bw-\bw_t,\hat{F}'_t(\bw_t;z_t)\big\rangle+\eta_t\Omega(\bw)+\frac{1}{2}\|\bw-\bw_t\|_2^2,
\end{equation}
where $\{\eta_t\}_t$ is a sequence of positive step sizes and $z_t$ is drawn independently from $\rho$ at the $t$-th iteration. At the $t$-th iteration, SPAUC
builds a temporary objective function consisting of three components:
a first order approximation of $f(\bw)$ based on the stochastic gradient $\hat{F}_t'(\bw;z)$, a regularizer kept intact to preserve
a composite structure and a term $\frac{1}{2}\|\bw-\bw_t\|_2^2$ to make sure the upcoming iterate $\bw_{t+1}$ not far away from the current iterate.

We comment that this stochastic proximal algorithm has been developed in standard classification and regression \citep{duchi2009efficient,rosasco2014convergence}. The convergence results in expectation were proved under the boundedness assumptions either on the stochastic estimators or on the iterates \citep{Duchi} and under strong convexity condition \citep{rosasco2014convergence}.  Our theoretical objective is to establish convergence analysis  with high probability without boundedness assumptions and fast convergence rates even without strong convexity.
Furthermore, we need to handle the bias of $\hat{F}_t'(\bw;z)$ as an estimator of the gradient due to the approximation strategy \eqref{approx}.

\section{Main Convergence Results\label{sec:theory}}
In this section, we present theoretical  convergence rates with high probability  for SPAUC.  We consider two types of objective functions of the form \eqref{auc-regularizaiton}: AUC maximization with a convex $\phi$ and AUC maximization with $\phi$ satisfying a quadratic functional growth.
Let $S^*=\{\bw\in\rbb^d:\phi(\bw)=\min_{\tilde{\bw}}\phi(\tilde{\bw})\}$ be the set of minimizers.
For any $\bw$, we denote by $\bw^*=\arg\min_{\tilde{\bw}\in S^*}\|\bw-\tilde{\bw}\|_2$ the projection of $\bw$ on to $S^*$,
where for any $p\geq1$ and $\bw=(w_1,\ldots,w_d)^\top$, we denote $\|\bw\|_p=\big[\sum_{j=1}^{d}|w_j|^p\big]^{\frac{1}{p}}$.
Throughout the paper, we assume $\|\bw_1^*\|_2<\infty$.

\subsection{General Convergence Rates}

In this subsection, we present convergence rates for the general regularization framework for AUC maximization. To this aim, we need to impose a so-called self-bounding property on the regularizers, meaning the subgradients can be bounded in terms of function values. We denote by $\Omega'(\bw)$ a subgradient of $\Omega$ at $\bw$ and assume $\Omega(0)$=0.
\begin{assumption}[Self-bounding property]\label{ass:self-bounding}
There exist constants $A_1,A_2\geq0$ such that the convex regularizer $\Omega$ satisfies
\begin{equation}\label{self-bounding}
  \quad\|\Omega'(\bw)\|^2_2\leq A_1\Omega(\bw)+A_2,\quad\text{for all }\bw\in\rbb^d.
\end{equation}
\end{assumption}
This self-bounding assumption above is very mild as many regularizers satisfy self-bounding property, including all
smooth regularizers and  all Lipschitz regularizers.
For example, if $\Omega(\bw)=\lambda\|\bw\|_2^2$, then \eqref{self-bounding} holds with $A_1=4\lambda$ and $A_2=0$.
If $\Omega(\bw)=\lambda\|\bw\|_1$, then \eqref{self-bounding} holds with $A_1=0$ and $A_2=\lambda^2$.
It is reasonable to assume a small regularization parameters in practice (e.g., $\lambda\leq1$), in which case we can take universal constants $A_1$ and $A_2$ for the above mentioned regularizers.

Our theoretical analysis requires to estimate $\|\bw_t\|_2$, which is achieved by the following lemma to be proved in Section \ref{sec:boundedness}. Essentially, it shows that $\|\bw_t\|_2$ is bounded (ignoring logarithmic factors)
if we consider step sizes satisfying \eqref{bound-prob-cond}. This result shows that the complexity of $\bw_t$ is well controlled even if the
iterates are updated in an unbounded domain. Let $\kappa:=\max\{1,\sup_{x\in\xcal}\|x\|_2\}$ and $C_1=\max\{A_1,16\kappa^2\}$.

\begin{theorem}\label{thm:bound-prob}
  Let $\{\bw_t\}_t$ be produced by \eqref{SAUC} with $\eta_t\leq (2C_1)^{-1}$ and $\eta_{t+1}\leq\eta_t$. We suppose Assumptions \ref{ass:self-bounding} holds,
  \begin{equation}\label{bound-prob-cond}
    \sum_{t=1}^{\infty}\eta_t\sqrt{\log t}/\sqrt{t}<\infty\quad\text{and}\quad
    \sum_{t=1}^{\infty}\eta_t^2<\infty.
  \end{equation}
  Then for any $\delta\in(0,1)$, there exists a constant $C_2$ independent of $T$ (explicitly given in the proof) such that the following inequality holds with probability at least $1-\delta$
  \begin{equation}\label{bound-prob}
    \max_{1\leq\tilde{t}\leq T}\|\bw_{\tilde{t}}-\bw_1^*\|_2^2\leq C_2\log (2T/\delta).
  \end{equation}
\end{theorem}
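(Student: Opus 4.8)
The plan is to track the squared distance $\|\bw_t-\bw_1^*\|_2^2$ to the fixed minimizer $\bw_1^*=\arg\min_{\tilde{\bw}\in S^*}\|\tilde{\bw}\|_2$ and derive a one-step recursion from the optimality of the proximal subproblem. The objective defining $\bw_{t+1}$ in \eqref{SAUC} is $1$-strongly convex because of the term $\frac12\|\bw-\bw_t\|_2^2$, so comparing its value at $\bw_{t+1}$ against its value at $\bw_1^*$, and then using Young's inequality $\eta_t\langle \bw_t-\bw_{t+1},\hat{F}_t'(\bw_t;z_t)\rangle\le \frac12\|\bw_t-\bw_{t+1}\|_2^2+\frac{\eta_t^2}{2}\|\hat{F}_t'(\bw_t;z_t)\|_2^2$ to cancel the term $-\frac12\|\bw_{t+1}-\bw_t\|_2^2$, I would obtain
\begin{align*}
\tfrac12\|\bw_{t+1}-\bw_1^*\|_2^2
&\le \tfrac12\|\bw_t-\bw_1^*\|_2^2 + \eta_t\big\langle \bw_1^*-\bw_t,\, \hat{F}_t'(\bw_t;z_t)\big\rangle \\
&\quad + \tfrac{\eta_t^2}{2}\|\hat{F}_t'(\bw_t;z_t)\|_2^2 + \eta_t\big[\Omega(\bw_1^*)-\Omega(\bw_{t+1})\big].
\end{align*}

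Next I would split the stochastic gradient as $\hat{F}_t'(\bw_t;z_t)=\widetilde{F}'(\bw_t;z_t)+\big(\hat{F}_t'(\bw_t;z_t)-\widetilde{F}'(\bw_t;z_t)\big)$ to separate the unbiased part from the approximation bias introduced by replacing $p,\EX[x|y=1],\EX[x|y=-1]$ with $p_t,u_t,v_t$. For the unbiased part, convexity of $\widetilde{F}(\cdot;z_t)$ (Proposition \ref{lem:unbiased}) gives $\langle \bw_1^*-\bw_t,\widetilde{F}'(\bw_t;z_t)\rangle\le \widetilde{F}(\bw_1^*;z_t)-\widetilde{F}(\bw_t;z_t)$, whose conditional expectation is $f(\bw_1^*)-f(\bw_t)$; since $\bw_1^*$ minimizes $\phi=f+\Omega$, this is at most $\Omega(\bw_t)-\Omega(\bw_1^*)$, which combines favorably with the $\Omega(\bw_1^*)-\Omega(\bw_{t+1})$ already present. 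The bias term $\langle \bw_1^*-\bw_t,\hat{F}_t'-\widetilde{F}'\rangle$ would be controlled by the concentration of $p_t,u_t,v_t$ from Lemma \ref{lem:approx}, giving a bound of order $\sqrt{\log t/t}$ multiplied by $\|\bw_t-\bw_1^*\|_2(1+\|\bw_t\|_2)$; summability of these contributions is exactly guaranteed by the hypothesis $\sum_t\eta_t\sqrt{\log t}/\sqrt{t}<\infty$.

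The crucial structural step is to absorb the quadratic term $\frac{\eta_t^2}{2}\|\hat{F}_t'(\bw_t;z_t)\|_2^2$. Because $\widetilde{F}$ and $\hat{F}_t$ are sums of squared linear forms, the self-bounding property of the loss (Corollary \ref{lem:boundness}) yields $\|\hat{F}_t'(\bw_t;z_t)\|_2^2\le C\,\hat{F}_t(\bw_t;z_t)$ with $C$ proportional to $\kappa^2$; since $\eta_t\le (2C_1)^{-1}$ with $C_1=\max\{A_1,16\kappa^2\}$, the factor $\eta_t^2$ makes this term dominated by the \emph{negative} function-value contributions $-\eta_t\widetilde{F}(\bw_t;z_t)$ arising from the convexity step. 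This offsetting of gradient magnitude by function value, together with the analogous treatment of $\Omega'$ via Assumption \ref{ass:self-bounding}, is what replaces the usual boundedness assumption on iterates or gradients.

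Telescoping the recursion over $t=1,\dots,\tilde{t}-1$ collapses the distance terms and leaves a martingale sum $\sum_t\eta_t\xi_t$, where $\xi_t$ centers the gradient inner products, plus the (summable) bias terms. I would then invoke a Bernstein-type martingale concentration inequality valid for unbounded increments, where the conditional-variance proxy is again bounded by $\|\bw_t-\bw_1^*\|_2^2$ and the function values $\widetilde{F}(\bw_t;z_t)$, so these can once more be offset against the negative terms already retained; the condition $\sum_t\eta_t^2<\infty$ keeps the variance budget finite, and a maximal inequality converts the bound into one uniform over $1\le\tilde{t}\le T$, producing the factor $\log(2T/\delta)$. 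The main obstacle I anticipate is the circularity: controlling the unbounded martingale increments needs an a priori bound on $\|\bw_t-\bw_1^*\|_2$, which is precisely what is being proved. I would resolve this by writing $M_T:=\max_{1\le\tilde{t}\le T}\|\bw_{\tilde{t}}-\bw_1^*\|_2^2$, arranging the concentration so that $M_T$ appears on both sides (typically under a square root, as in $M_T\le a+b\sqrt{M_T\log(2T/\delta)}$ or via a stopping-time truncation at a candidate threshold), and then solving this self-referential inequality to extract $M_T\le C_2\log(2T/\delta)$ with $C_2$ independent of $T$.
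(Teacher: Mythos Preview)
Your outline matches the paper's proof closely: the one-step recursion (Lemma~\ref{lem:osp}), the split $\hat{F}_t'=\widetilde{F}'+(\hat{F}_t'-\widetilde{F}')$ with Lemma~\ref{lem:approx} controlling the bias and convexity of $\widetilde{F}$ handling the unbiased part, the self-bounding property to absorb $\eta_t^2\|\hat{F}_t'\|_2^2$, and a Bernstein-type martingale inequality whose conditional variance is offset by the retained negative function-value terms. This is precisely the content of Proposition~\ref{prop:bound-p}.

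The one place your sketch diverges is the resolution of the circularity, and here the paper's route is both different and simpler than what you propose. The martingale increment bound is in fact \emph{not} circular: Corollary~\ref{lem:boundness} already supplies the deterministic a~priori estimate $\|\bw_k\|_2^2\le C_4\sum_{j<k}\eta_j$, and since $\eta_k\le\eta_j$ for $j<k$ one has $\eta_k\sum_{j<k}\eta_j\le\sum_{j<k}\eta_j^2\le\sum_j\eta_j^2<\infty$. Hence $\eta_k\|\bw_k\|_2^2$ is uniformly bounded and $|\xi_k|\le C_6$ for a fixed constant, so Part~(b) of Lemma~\ref{lem:martingale} applies with no reference to $M_T$. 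After the offsetting step, the only surviving occurrences of $\|\bw_k-\bw_1^*\|_2^2$ carry coefficients $C_{k,\delta}\eta_k/\sqrt{k}$ (from the bias) and $\eta_k^2$ (from the residual Bernstein variance), both summable by hypothesis. The paper then picks a finite index $t_2$ so that the tails $\sum_{k>t_2}$ of these series total at most $\tfrac12$, bounds the finitely many terms $k\le t_2$ via the crude a~priori estimate, and arrives at a \emph{linear} self-referential inequality $M_T\le(\text{const})+\tfrac12 M_T$, which resolves immediately. No square-root recursion or stopping-time truncation is needed; your anticipated form $M_T\le a+b\sqrt{M_T\log(2T/\delta)}$ does not arise, and would be awkward to justify since $M_T$ depends on the entire trajectory and cannot legitimately appear inside a per-step increment bound without the truncation machinery you allude to.
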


We are now ready to present convergence rates for SPAUC applied to general AUC objectives. In Theorem \ref{thm:rate} we present
general convergence rates in terms of step sizes satisfying \eqref{bound-prob-cond}, which are then instantiated in Corollary \ref{cor:ave-g} by
specifying step sizes. 
The convergence rate $\O(T^{-\frac{1}{2}}\log^{\frac{3+\beta}{2}}\frac{T}{\delta})$ is optimal up to a logarithmic factor for stochastic algorithms applied to general convex optimization problems~\citep{agarwal2009information}.
\begin{theorem}\label{thm:rate}
  Let the conditions of Theorem \ref{thm:bound-prob} hold. Then, for any $\delta\in(0,1)$ there exists a constant $C_3$ independent of $T$ such that the following inequality holds with probability at least $1-\delta$
  $$
  \phi(\bar{\bw}_T^{(1)})-\inf_{\bw}\phi(\bw)\leq C_3\big(\sum_{t=1}^{T}\eta_t\big)^{-1}\max\Big\{\sum_{t=1}^{T}\eta_t^2,\sum_{t=1}^{T}\eta_t/\sqrt{t}\Big\}\log^{\frac{3}{2}}\frac{T}{\delta},
  $$
  where $\bar{\bw}_T^{(1)}=\sum_{t=1}^{T}\eta_t\bw_t/\sum_{t=1}^{T}\eta_t$ is a weighted average of the first $T$ iterates.
\end{theorem}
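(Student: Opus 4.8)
The plan is to run the standard ``one-step descent'' analysis for the proximal update \eqref{SAUC}, but with every expectation replaced by a high-probability martingale bound and with the bias of $\hat{F}_t'$ handled as a separate error term. Take the reference point $\bw^*=\bw_1^*$ (the projection of $\bw_1=0$ onto $S^*$). Since the objective minimized in \eqref{SAUC} is $1$-strongly convex with minimizer $\bw_{t+1}$, comparing its value at $\bw_{t+1}$ and at $\bw^*$ and applying Young's inequality to the cross term $\eta_t\langle\bw_{t+1}-\bw_t,\hat{F}_t'(\bw_t;z_t)\rangle$ yields, for each $t$,
$$\eta_t\langle\bw_t-\bw^*,\hat{F}_t'(\bw_t;z_t)\rangle+\eta_t\big(\Omega(\bw_{t+1})-\Omega(\bw^*)\big)\le\tfrac12\|\bw_t-\bw^*\|_2^2-\tfrac12\|\bw_{t+1}-\bw^*\|_2^2+\tfrac{\eta_t^2}{2}\|\hat{F}_t'(\bw_t;z_t)\|_2^2.$$
Summing over $t$ telescopes the squared-distance terms; and because $\eta_t$ is nonincreasing with $\Omega(\bw_1)=\Omega(0)=0$, an index shift gives $\sum_t\eta_t(\Omega(\bw_{t+1})-\Omega(\bw^*))\ge\sum_t\eta_t(\Omega(\bw_t)-\Omega(\bw^*))$. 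Thus it remains to lower bound $\sum_t\eta_t\langle\bw_t-\bw^*,\hat{F}_t'(\bw_t;z_t)\rangle$ by $\sum_t\eta_t(f(\bw_t)-f(\bw^*))$ up to controllable errors and to upper bound $\sum_t\eta_t^2\|\hat{F}_t'(\bw_t;z_t)\|_2^2$.

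Next I would split $\hat{F}_t'(\bw_t;z_t)=\widetilde{F}'(\bw_t;z_t)+\big(\hat{F}_t'(\bw_t;z_t)-\widetilde{F}'(\bw_t;z_t)\big)$. For the unbiased part I use the convexity of $\widetilde{F}(\cdot;z_t)$ from Proposition \ref{lem:unbiased} to write $\langle\bw_t-\bw^*,\widetilde{F}'(\bw_t;z_t)\rangle\ge\widetilde{F}(\bw_t;z_t)-\widetilde{F}(\bw^*;z_t)$, whose conditional expectation given $z_1,\dots,z_{t-1}$ equals $f(\bw_t)-f(\bw^*)$ by \eqref{unbiased}; the leftover $\xi_t:=\eta_t\big(\widetilde{F}(\bw_t;z_t)-\widetilde{F}(\bw^*;z_t)\big)-\eta_t\big(f(\bw_t)-f(\bw^*)\big)$ is a martingale difference. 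For the bias part, Lemma \ref{lem:approx} controls $\|\hat{F}_t'(\bw_t;z_t)-\widetilde{F}'(\bw_t;z_t)\|_2$ by the deviation of the empirical quantities $p_t,u_t,v_t$ in \eqref{approx} from their population values, which is $\O(\sqrt{\log(t/\delta)/t})$ times a polynomial in $\|\bw_t\|_2$; combined with the boundedness $\max_{\tilde t\le T}\|\bw_{\tilde t}-\bw_1^*\|_2^2\le C_2\log(2T/\delta)$ from Theorem \ref{thm:bound-prob}, this contributes at most $\O\big(\sqrt{\log(T/\delta)}\sum_t\eta_t/\sqrt t\big)$, which is the origin of the $\sum_t\eta_t/\sqrt t$ term in the statement.

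The crux is the martingale $\sum_t\xi_t$, for which I would use a Bernstein/Freedman-type inequality. The main obstacle — and where the intrinsic structure of the least-square loss is essential — is that a crude Azuma bound gives the wrong dependence; instead I bound the conditional variance of each increment by the suboptimality itself. Using the self-bounding property of the quadratic loss (Corollary \ref{lem:boundness}), $\|\widetilde{F}'(\bw;z)\|_2^2\lesssim\kappa^2\widetilde{F}(\bw;z)$, together with the bounded iterates of Theorem \ref{thm:bound-prob}, I get $\sum_t\EX[\xi_t^2\mid z_1,\dots,z_{t-1}]\lesssim\log(T/\delta)\sum_t\eta_t^2\big(\phi(\bw_t)-\phi(\bw^*)+\mathrm{const}\big)$. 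The Bernstein bound then produces a term $\sqrt{\log(1/\delta)\log(T/\delta)\sum_t\eta_t^2(\phi(\bw_t)-\phi(\bw^*))}$, and since $\eta_t\le\eta_1$ gives $\sum_t\eta_t^2(\phi(\bw_t)-\phi(\bw^*))\le\eta_1\sum_t\eta_t(\phi(\bw_t)-\phi(\bw^*))$, an AM--GM step bounds it by $\tfrac14\sum_t\eta_t(\phi(\bw_t)-\phi(\bw^*))$ plus a lower-order term. This is the ``offsetting the conditional variance'' trick that reabsorbs the variance into the left-hand progress sum. The same self-bounding estimate applied to $\hat{F}_t'$ controls $\sum_t\eta_t^2\|\hat{F}_t'(\bw_t;z_t)\|_2^2\lesssim\log(T/\delta)\sum_t\eta_t^2$ after a similar reabsorption, which supplies the $\sum_t\eta_t^2$ term.

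Finally I would intersect the high-probability events of Theorem \ref{thm:bound-prob}, of the bias concentration (Lemma \ref{lem:approx}), and of the Bernstein inequality via a union bound with rescaled $\delta$, move the reabsorbed $\tfrac14\sum_t\eta_t(\phi(\bw_t)-\phi(\bw^*))$ to the left, and divide through by $\sum_t\eta_t$. Jensen's inequality for the convex $\phi$ then gives $\phi(\bar\bw_T^{(1)})-\inf_\bw\phi\le(\sum_t\eta_t)^{-1}\sum_t\eta_t(\phi(\bw_t)-\phi(\bw^*))$, and collecting the $\sum_t\eta_t^2$ (gradient-norm and variance remainders) and $\sum_t\eta_t/\sqrt t$ (bias) contributions — with the $\log(T/\delta)$ from boundedness multiplying the $\sqrt{\log(1/\delta)}$ from the Bernstein step — yields the claimed $\max\{\sum_t\eta_t^2,\sum_t\eta_t/\sqrt t\}\log^{3/2}(T/\delta)$ rate. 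One subtlety I would treat carefully is that the boundedness of the iterates holds only on a high-probability event yet is used to bound the martingale increments and variances; I would handle this by working on the intersection of the good events, or by a stopping-time truncation so that the Bernstein inequality applies with almost-sure bounds on the increments.
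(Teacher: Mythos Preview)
Your proposal would work, but it takes a more elaborate route than the paper and contains one misdiagnosis. The paper's proof of Theorem~\ref{thm:rate} does \emph{not} use Bernstein with variance reabsorption; it uses Azuma--Hoeffding (Part~(a) of Lemma~\ref{lem:martingale}) directly. Your claim that ``a crude Azuma bound gives the wrong dependence'' is incorrect in this context: once Theorem~\ref{thm:bound-prob} supplies $\max_{t\le T}\|\bw_t-\bw_1^*\|_2^2\le C_2\log(6T/\delta)$ on a good event, the paper defines the truncated martingale $\xi_t'=\eta_t\langle\bw_1^*-\bw_t,\widetilde{F}'(\bw_t;z_t)-\nabla f(\bw_t)\rangle\,\ibb_{[\|\bw_t-\bw_1^*\|_2^2\le C_2\log(6T/\delta)]}$ (exactly your ``truncation'' idea), obtains $|\xi_t'|\le C_8\eta_t\log(6T/\delta)$ almost surely, and Azuma then gives $\sum_t\xi_t'\lesssim\log(T/\delta)\sqrt{\log(1/\delta)\sum_t\eta_t^2}$, which is precisely the $\log^{3/2}(T/\delta)$ factor. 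The Bernstein--reabsorption machinery you describe \emph{is} used in the paper, but in the proof of Proposition~\ref{prop:bound-p} (hence Theorem~\ref{thm:bound-prob}), where boundedness of the iterates is not yet available and must be bootstrapped; for Theorem~\ref{thm:rate} it is an unnecessary detour since boundedness is already a hypothesis.

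Two smaller differences. First, the paper controls $\sum_t\eta_t^2\|\hat{F}_t'(\bw_t;z_t)\|_2^2$ \emph{deterministically}: Lemma~\ref{lem:self-bounding-F} gives $\|\hat{F}_t'(\bw_t;z_t)\|_2^2\le 16\kappa^2\hat{F}_t(\bw_t;z_t)$, and \eqref{boundness-b} of Corollary~\ref{lem:boundness} gives $\sum_t\eta_t^2\hat{F}_t(\bw_t;z_t)\le C_4\sum_t\eta_t^2$ with no log factor and no reabsorption needed. Second, the paper's martingale is built on the inner product $\langle\bw_1^*-\bw_t,\widetilde{F}'(\bw_t;z_t)-\nabla f(\bw_t)\rangle$ and uses convexity of $f$ (not of $\widetilde{F}$) to pass to $f(\bw_1^*)-f(\bw_t)$; your function-value martingale via convexity of $\widetilde{F}$ is equivalent in effect. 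Your handling of the bias term via Lemma~\ref{lem:approx}, the union bound over $t$, the final Jensen step, and the truncation subtlety all match the paper.
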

\begin{corollary}\label{cor:ave-g}
  Let $\{\bw_t\}_t$ be produced by \eqref{SAUC} and $\delta\in(0,1)$. Suppose Assumptions \ref{ass:self-bounding} holds and $\eta_1\leq (2C_1)^{-1}$.
  \begin{enumerate}[(1)]
    \item If we choose $\eta_t=\eta_1t^{-\theta}$ with $\theta>1/2$, then with probability $1-\delta$ we have $\phi(\bar{\bw}_T^{(1)})-\inf_{\bw}\phi(\bw)=O\big(T^{\theta-1}\log^{\frac{3}{2}}\frac{T}{\delta}\big)$;
    \item If we choose $\eta_t=\eta_1(t\log^\beta (et))^{-\frac{1}{2}}$ with $\beta>2$,
  then with probability $1-\delta$ we have $\phi(\bar{\bw}_T^{(1)})-\inf_{\bw}\phi(\bw)=O\big(T^{-\frac{1}{2}}\log^{\frac{3+\beta}{2}}\frac{T}{\delta}\big)$.
  \end{enumerate}
\end{corollary}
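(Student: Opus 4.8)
The plan is to read Corollary \ref{cor:ave-g} off directly from Theorem \ref{thm:rate} by substituting the two prescribed step-size schedules and estimating the scalar series that control the bound. Writing the right-hand side of Theorem \ref{thm:rate} as
\[
C_3\Big(\sum_{t=1}^T\eta_t\Big)^{-1}\max\Big\{\sum_{t=1}^T\eta_t^2,\ \sum_{t=1}^T\eta_t/\sqrt t\Big\}\log^{\frac32}\frac{T}{\delta},
\]
the whole task reduces to two pieces: (i) checking that each schedule meets the hypotheses of Theorem \ref{thm:bound-prob} (hence of Theorem \ref{thm:rate}), and (ii) computing the growth of the denominator $\sum_{t=1}^T\eta_t$ together with the behaviour of the two series inside the maximum, then substituting.

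First I would dispose of the structural hypotheses. Both $\eta_t=\eta_1 t^{-\theta}$ and $\eta_t=\eta_1(t\log^\beta(et))^{-1/2}$ are non-increasing in $t$, so $\eta_{t+1}\le\eta_t$ and $\eta_t\le\eta_1\le(2C_1)^{-1}$ hold at once; the summability requirements \eqref{bound-prob-cond} I would then verify for each schedule by the integral (Cauchy-condensation) test. The analytically relevant point is that the two series inside the maximum should both converge, so that the numerator is $O(1)$. For $\eta_t=\eta_1 t^{-\theta}$ these are $\asymp\sum_t t^{-2\theta}$ and $\asymp\sum_t t^{-\theta-1/2}$, convergent exactly when $\theta>1/2$; for $\eta_t=\eta_1(t\log^\beta(et))^{-1/2}$ the substitution $u=\log(et)$ turns them into $\sum_t t^{-1}(\log t)^{-\beta}$ and $\sum_t t^{-1}(\log t)^{-\beta/2}$, convergent under $\beta>1$ and $\beta>2$ respectively. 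Thus the stated thresholds $\theta>1/2$ and $\beta>2$ are precisely what keep the maximum bounded.

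It then remains to estimate the denominator. For case (1) with $1/2<\theta<1$ the integral test gives $\sum_{t=1}^T t^{-\theta}\asymp T^{1-\theta}$, so $\big(\sum_t\eta_t\big)^{-1}\asymp T^{\theta-1}$, and substitution yields the claimed $O\big(T^{\theta-1}\log^{3/2}(T/\delta)\big)$ (for $\theta\ge1$ the bound degenerates but still holds). For case (2) the key computation is the sharp asymptotic
\[
\sum_{t=1}^T\big(t\log^\beta(et)\big)^{-1/2}\asymp \sqrt T\,(\log T)^{-\beta/2},
\]
obtained by comparison with $\int_1^T x^{-1/2}(\log(ex))^{-\beta/2}\,dx$, whose antiderivative is $2\sqrt x\,(\log(ex))^{-\beta/2}$ up to a lower-order, slowly varying correction. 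Hence $\big(\sum_t\eta_t\big)^{-1}\asymp T^{-1/2}(\log T)^{\beta/2}$, and substitution gives a bound of order $T^{-1/2}(\log T)^{\beta/2}\log^{3/2}(T/\delta)$.

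The final, cosmetic step is to merge the two logarithmic factors: since $\delta\in(0,1)$ gives $\log T\le\log(T/\delta)$, we have $(\log T)^{\beta/2}\log^{3/2}(T/\delta)\le\log^{(3+\beta)/2}(T/\delta)$, which produces the advertised $O\big(T^{-1/2}\log^{(3+\beta)/2}(T/\delta)\big)$. I expect the only genuinely delicate point in the argument to be the second displayed estimate — pinning down both the $\sqrt T$ factor and the exact power $(\log T)^{-\beta/2}$ for a slowly varying summand, and likewise checking \eqref{bound-prob-cond} with the correct exponent threshold — since the substantive high-probability control underlying the rate has already been absorbed into Theorems \ref{thm:bound-prob}--\ref{thm:rate}, leaving this corollary as essentially a calibration of step sizes via elementary series.
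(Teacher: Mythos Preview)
Your proposal is correct and follows essentially the same route as the paper: verify that each step-size schedule satisfies the hypotheses of Theorem~\ref{thm:bound-prob} (hence Theorem~\ref{thm:rate}), observe that the two series inside the $\max$ are summable so the numerator is $O(1)$, and then lower-bound the denominator $\sum_{t=1}^T\eta_t$ via the integral comparison---the paper records exactly the inequalities $\sum_{t=1}^T t^{-\theta}\ge(1-\theta)^{-1}(T^{1-\theta}-1)$ and $\sum_{t=1}^T(t\log^\beta(et))^{-1/2}\ge 2(\sqrt{T}-1)\log^{-\beta/2}(eT)$ that your integral estimates deliver.
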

The proofs for Theorem \ref{thm:rate} and Corollary \ref{cor:ave-g} can be found in Section \ref{sec:proof-rate}.

\subsection{Fast Convergence Rates}
In this subsection, we show that a faster convergence rate is possible for SPAUC if a quadratic functional growth condition is imposed to the objective function~\citep{anitescu2000degenerate,necoara2018linear}.
\begin{assumption}[Quadratic functional growth]\label{ass:strong}
  We assume the existence of $\sigma_\phi>0$ such that
  \begin{equation}\label{error-bound}
  \phi(\bw)-\phi(\bw^*)\geq \sigma_\phi\|\bw-\bw^*\|_2^2,\quad\text{for all }\bw\in\rbb^d.
  \end{equation}
\end{assumption}

The quadratic functional growth assumption \eqref{error-bound} means that the objective function grows faster than the
squared distance between any feasible point and the optimal set~\citep{necoara2018linear}.
This condition is milder than assuming a strong convexity of $\phi$~\citep{necoara2018linear}. Indeed, it holds if $\Omega(\bw)=\lambda\|\bw\|_2^2$. It also holds if we consider no regularization, i.e., $\Omega(\bw)=0$ as shown in the next proposition.
We give the proof for completeness.
\begin{proposition}\label{prop:quadratic}
  The function $\phi(\bw)=f(\bw)$ satisfies Assumption \ref{ass:strong}.
\end{proposition}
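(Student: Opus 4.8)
The plan is to exploit that $f$ is a quadratic function with a \emph{constant} positive semi-definite Hessian, so that the gap $f(\bw)-f(\bw^*)$ admits an exact second-order expansion which can then be bounded below by the smallest nonzero eigenvalue of that Hessian.

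First I would compute the Hessian of $f$. By Proposition~\ref{lem:unbiased} we have $f(\bw)=\EX[\widetilde{F}(\bw;z)]$, and interchanging the second derivative with the expectation (as justified in the proof of that proposition), the formula for $\nabla^2\widetilde{F}(\bw;z)$ derived there gives, upon taking expectations,
\begin{equation*}
\nabla^2 f(\bw)=2p(1-p)\big[\Sigma_++\Sigma_-+\delta\delta^\top\big]=:H,
\end{equation*}
where $\Sigma_+=\EX[(x-\EX[x|y=1])(x-\EX[x|y=1])^\top\,|\,y=1]$ and $\Sigma_-=\EX[(x-\EX[x|y=-1])(x-\EX[x|y=-1])^\top\,|\,y=-1]$ are the conditional covariance matrices and $\delta=\EX[x'|y'=-1]-\EX[x|y=1]$. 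The essential point is that $H$ is independent of $\bw$ and positive semi-definite.

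Since $f$ is quadratic with constant Hessian $H$ and $\bw^*\in S^*$ is a minimizer of the convex function $f$ (so $\nabla f(\bw^*)=0$), the exact second-order Taylor expansion gives
\begin{equation*}
f(\bw)-f(\bw^*)=\tfrac{1}{2}(\bw-\bw^*)^\top H(\bw-\bw^*).
\end{equation*}
To bound the right-hand side from below I would use that $\bw^*$ is the Euclidean projection of $\bw$ onto $S^*$. Because $f$ is quadratic, $S^*$ is the affine solution set of $\nabla f=0$, i.e. a translate of $\ker H$; the projection therefore makes $\bw-\bw^*$ orthogonal to $\ker H$, so that $\bw-\bw^*$ lies in the range of $H$. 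On this subspace $H$ is positive definite with smallest eigenvalue equal to the smallest positive eigenvalue $\lambda_{\min}^{+}(H)$, whence $(\bw-\bw^*)^\top H(\bw-\bw^*)\ge \lambda_{\min}^{+}(H)\|\bw-\bw^*\|_2^2$. Setting $\sigma_\phi=\tfrac{1}{2}\lambda_{\min}^{+}(H)$ yields the claim, and $\sigma_\phi>0$ as long as $H\neq 0$, which holds whenever the conditional covariances are not both degenerate.

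The main obstacle is the degeneracy of $H$: one must verify that the projection places the displacement $\bw-\bw^*$ in the range of $H$ rather than its kernel, and that the smallest positive eigenvalue serves as a genuine uniform constant $\sigma_\phi>0$ valid for every $\bw$. Once this spectral argument is in place, the remainder is a routine computation with quadratics.
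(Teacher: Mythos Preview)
Your proposal is correct and follows essentially the same approach as the paper: both arguments exploit that $f$ is a convex quadratic with constant Hessian, observe that the minimizer set $S^*$ is an affine translate of the Hessian's kernel so that the projection $\bw-\bw^*$ lies in the range, and then bound the quadratic form from below by the smallest nonzero eigenvalue. The paper phrases this via a symmetric square root $A$ with $A^2=\ebb[(x-x')(x-x')^\top\,|\,y=1,y'=-1]$ and the identity $f(\bw)-f(\bw^*)=p(1-p)\|A\bw-A\bw^*\|_2^2$, while you work directly with the Hessian $H$ and a second-order Taylor expansion; since $H=2p(1-p)A^2$, your constant $\sigma_\phi=\tfrac{1}{2}\lambda_{\min}^+(H)$ coincides exactly with the paper's $p(1-p)\lambda_{\min}(A^2)$.
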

\begin{proof}   Indeed, the objective function can be written as
  \[
    f(\bw)= p(1-p)\big(\|A\bw\|_2^2+\bw^\top c +1\big)
  \]
  with $A\in\rbb^{d\times d}$ being a symmetric matrix  satisfying $A^2=\ebb[(x-x')(x-x')^\top|y=1,y'=-1]$ and $c=-2\ebb[x-x'|y=1,y'=-1]$.
  Analyzing analogously to the proof of Theorem 9 in \citet{necoara2018linear}, one can show that $S^*=\big\{\bw: A\bw=g^*\big\}$ for some $g^*\in\rbb^d$. By the definition of $\bw^*$ we know that $\bw-\bw^*$ is orthogonal to the kernel of $A^2$ and therefore
  $\lambda_{\text{min}}(A^2)\|\bw-\bw^*\|_2^2\leq \|A\bw-A\bw^*\|_2^2$, where $\lambda_{\text{min}}(A^2)$ denotes the smallest non-zero eigenvalue of $A^2$. Furthermore, we know
  \begin{align*}
     & p^{-1}(1-p)^{-1}\big(f(\bw)-f(\bw^*)\big)  =\|A\bw\|_2^2-\|A\bw^*\|_2^2+(\bw-\bw^*)^\top c \\
     & =\|A\bw-A\bw^*\|_2^2+2\big(A\bw-A\bw^*\big)^\top A\bw^*+(\bw-\bw^*)^\top c = \|A\bw-A\bw^*\|_2^2,
  \end{align*}
  where the last identity is due to the optimality condition $2A^\top A\bw^*+c=0$. It then follows that
  $f(\bw)-f(\bw^*)\geq p(1-p)\lambda_{\text{min}}(A^2)\|\bw-\bw^*\|_2^2$. The proof is complete.
\end{proof}

Under Assumption \ref{ass:strong}, we show with high probabilities that the suboptimality measured by both the parameter distance and function values decay with
the rate $\widetilde{\O}(T^{-1})$, which is optimal up to a logarithmic  factor~\citep{agarwal2009information}.
Let $\sigma_\Omega\geq0$ be a constant satisfying
\[
\Omega(\bw)-\Omega(\tilde{\bw})-\langle \bw-\tilde{\bw},\Omega'(\tilde{\bw})\rangle\geq 2^{-1}\sigma_\Omega\|\bw-\tilde{\bw}\|_2^2,\quad\forall \bw,\tilde{\bw}\in\rbb^d.
\]
Note $\sigma_\Omega$ can be zero and therefore our results apply to non-strongly-convex regularizers, e.g., $\Omega(\bw)=0$ for all $\bw$.
Without loss of generality, we assume $\sigma_f:=\sigma_\phi-\sigma_\Omega\geq0$. 

\begin{theorem}\label{thm:strong}
  Let $\delta\in(0,1)$. Suppose Assumption \ref{ass:self-bounding} and Assumption \ref{ass:strong} hold. Let $\{\bw_t\}_t$ be the sequence produced by \eqref{SAUC} with $\eta_t=\frac{2}{\sigma_\phi t+2\sigma_f+\sigma_\phi t_1}$,
  where $t_1\geq 32C_1\sigma_\phi^{-1}\log\frac{2T}{\delta}$. Then,  the following inequality holds with $1-\delta$
  for $t=1,\ldots,T$ ($T>2$)
  \begin{equation}\label{strong}
    \|\bw_t-\bw_t^*\|_2^2=\widetilde{\O}(1/t)\quad\text{and}\quad\phi(\bar{\bw}_t^{(2)})-\inf_{\bw}\phi(\bw)=\widetilde{\O}(1/t),
  \end{equation}
  where $\bar{\bw}_t^{(2)}$ is a weighted average of iterates defined by
  \[
    \bar{\bw}_t^{(2)} = \big(\sum_{k=1}^{t}(k+t_1+1)\big)^{-1}\sum_{k=1}^{t}(k+t_1+1)\bw_k.
  \]
\end{theorem}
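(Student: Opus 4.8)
The plan is to derive a one-step contraction for the squared distance $a_t := \|\bw_t - \bw_t^*\|_2^2$ to the optimal set $S^*$ and then unroll it with the prescribed step sizes. First I would exploit the optimality of the proximal update \eqref{SAUC}: there is a subgradient $\Omega'(\bw_{t+1})$ with $\bw_{t+1} - \bw_t = -\eta_t\big(\hat{F}_t'(\bw_t;z_t) + \Omega'(\bw_{t+1})\big)$. Expanding $\|\bw_{t+1} - \bw_t^*\|_2^2$, using the strong convexity of $\Omega$ and the convexity of $f$ (guaranteed by Proposition \ref{lem:unbiased}), and splitting the stochastic gradient as $\hat{F}_t'(\bw_t;z_t) = \nabla f(\bw_t) + \big(\widetilde{F}'(\bw_t;z_t) - \nabla f(\bw_t)\big) + \big(\hat{F}_t'(\bw_t;z_t) - \widetilde{F}'(\bw_t;z_t)\big)$, yields (up to the usual proximal manipulation exchanging $\Omega(\bw_{t+1})$ with $\Omega(\bw_t)$) an inequality of the form
\[\|\bw_{t+1} - \bw_t^*\|_2^2 \le \|\bw_t - \bw_t^*\|_2^2 - 2\eta_t[\phi(\bw_t) - \phi(\bw_t^*)] - \eta_t\sigma_\Omega\|\bw_{t+1} - \bw_t^*\|_2^2 + \eta_t^2\big\|\hat{F}_t'(\bw_t;z_t) + \Omega'(\bw_{t+1})\big\|_2^2 + \xi_t + \beta_t,\]
where $\xi_t = -2\eta_t\langle \widetilde{F}'(\bw_t;z_t) - \nabla f(\bw_t), \bw_t - \bw_t^*\rangle$ is a martingale difference (its conditional mean vanishes by the unbiasedness in Proposition \ref{lem:unbiased}) and $\beta_t$ collects the bias from replacing $p,\EX[x|y=1],\EX[x|y=-1]$ by the empirical counterparts $p_t,u_t,v_t$. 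I would then invoke the quadratic functional growth (Assumption \ref{ass:strong}), $\phi(\bw_t) - \phi(\bw_t^*) \ge \sigma_\phi\|\bw_t - \bw_t^*\|_2^2$, to turn the descent term into a genuine contraction, and use $\|\bw_{t+1} - \bw_{t+1}^*\|_2 \le \|\bw_{t+1} - \bw_t^*\|_2$ (as $\bw_t^* \in S^*$) to absorb the moving projection.

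Next I would control each error term on the event, furnished by Theorem \ref{thm:bound-prob}, on which $\max_{t\le T}\|\bw_t - \bw_1^*\|_2^2 = \O(\log(T/\delta))$; this event has probability at least $1-\delta/2$ and is available because $\eta_t = \O(1/t)$ satisfies the summability conditions \eqref{bound-prob-cond} and $\eta_1 \le (2C_1)^{-1}$ (guaranteed by $t_1 \ge 32C_1\sigma_\phi^{-1}\log\tfrac{2T}{\delta}$). On this event the self-bounding property of the quadratic loss (Corollary \ref{lem:boundness}) bounds $\eta_t^2\big\|\hat{F}_t'(\bw_t;z_t) + \Omega'(\bw_{t+1})\big\|_2^2$ by $\eta_t^2$ times a logarithmic factor, and Lemma \ref{lem:approx} bounds the bias by $|\beta_t| = \O\big(\eta_t\sqrt{\log t/t}\,\|\bw_t - \bw_t^*\|_2(1+\|\bw_t\|_2)\big)$, part of which I absorb into the contraction via Young's inequality. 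The delicate part is the martingale $\sum_t \xi_t$: I would apply a Bernstein-type (Freedman) concentration inequality, whose conditional variance is of order $\sum_t \eta_t^2\|\bw_t - \bw_t^*\|_2^2(1+\|\bw_t\|_2^2)$. Here the offsetting trick enters: since $\eta_t \to 0$, once $t \ge t_1$ these variance terms are dominated by the negative contraction terms $\eta_t\sigma_\phi\|\bw_t - \bw_t^*\|_2^2$ already present, so they can be cancelled rather than bounded crudely — this is exactly why the threshold on $t_1$ is imposed.

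With all error terms controlled, the recursion reduces to
\[\|\bw_{t+1} - \bw_{t+1}^*\|_2^2 \le (1 - 2\eta_t\sigma_\phi)\,\|\bw_t - \bw_t^*\|_2^2 + \O\big(\eta_t^2 \log\tfrac{T}{\delta}\big) + (\text{summable martingale remainder}),\]
and the specific choice $\eta_t = \tfrac{2}{\sigma_\phi t + 2\sigma_f + \sigma_\phi t_1}$ is designed so that the contraction factor telescopes against the weights $(k+t_1+1)$ (proportional to $\eta_k^{-1}$). Multiplying by this weight and summing (an induction on $t$) gives $\|\bw_t - \bw_t^*\|_2^2 = \widetilde{\O}(1/t)$, the first claim. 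For the function-value bound I would retain, rather than discard, the descent terms $2\eta_t[\phi(\bw_t) - \phi(\bw_t^*)]$, form the weighted sum $\sum_{k\le t}(k+t_1+1)[\phi(\bw_k) - \inf_\bw\phi(\bw)]$, telescope the distance terms, and finally apply Jensen's inequality through the convexity of $\phi$ to pass from the weighted average of gaps to $\phi(\bar\bw_t^{(2)}) - \inf_\bw\phi(\bw)$, yielding the second $\widetilde{\O}(1/t)$ rate.

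The main obstacle I anticipate is the simultaneous high-probability control of the martingale term: its conditional variance is genuinely unbounded a priori (it scales with $\|\bw_t\|_2^2$), so a bounded-difference inequality does not suffice. The resolution hinges on combining the a-priori norm bound of Theorem \ref{thm:bound-prob} with the self-bounding structure of $\widetilde{F}$ and, crucially, on offsetting the residual variance against the contraction produced by the quadratic functional growth — a cancellation valid only once $t_1$ is taken large enough. A secondary difficulty is ensuring the accumulated bias from the empirical surrogates $p_t,u_t,v_t$ decays fast enough not to degrade the $1/t$ rate, which is why the sharp $\sqrt{\log t/t}$ estimate of Lemma \ref{lem:approx} is essential.
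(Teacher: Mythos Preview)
Your plan is essentially correct and tracks the paper's proof closely: one-step progress with the proximal optimality condition, the three-way split $\hat{F}_t' = \nabla f + (\widetilde{F}'-\nabla f) + (\hat{F}_t'-\widetilde{F}')$, quadratic growth for contraction, Lemma~\ref{lem:approx} for the bias, Bernstein for the martingale with an offsetting trick, telescoping against the weights $(k+t_1+1)$, and Jensen for the averaged iterate. Two points where the paper's execution differs from yours are worth flagging.

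First, the paper does \emph{not} invoke Theorem~\ref{thm:bound-prob} here. It uses the cheaper deterministic bound $\|\bw_t\|_2^2\le C_4\sum_{k<t}\eta_k=O(\log t)$ from Corollary~\ref{lem:boundness}, which is all that is needed to bound $\|\widetilde{F}'(\bw_k;z_k)\|_2$, the bias (Lemma~\ref{lem:approx-strong}), and the increment size for Bernstein (Lemma~\ref{lem:var-strong}). Your route through Theorem~\ref{thm:bound-prob} works too, but spends an extra $\delta/2$ of probability for no gain.

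Second, the offsetting mechanism is organised differently. The paper first weights the one-step inequality by $(k+t_1+1)(k+t_1+2)$ and sums, retaining the weighted function-value sum $\sigma_\phi^{-1}\sum_k(k+t_1+1)(\phi(\bw_k)-\phi(\bw_k^*))$ on the left-hand side (Lemma~\ref{lem:step-strong}); only then is Bernstein applied to $\sum_k(k+t_1+1)\xi_k$. The conditional variance is bounded sharply as $C_1\phi(\bw_k)\|\bw_k-\bw_k^*\|_2^2$ via self-bounding, and the paper closes the loop by an explicit \emph{induction} on $\|\bw_k-\bw_k^*\|_2^2\le C_{T,\delta}/(k+t_1+1)$: plugging this into the variance and splitting $\phi(\bw_k)=(\phi(\bw_k)-\phi^*)+\phi^*$, the first piece is absorbed by the retained function-value sum on the left, and the second is $O(t_1(t+t_1))$. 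Your description of a per-step ``recursion with summable martingale remainder'' is slightly misleading, since Bernstein only controls the \emph{sum} of increments; the correct order is sum first, then concentrate, then offset. That said, your variant---offsetting $\sum_k(k+t_1+1)^2\|\bw_k-\bw_k^*\|_2^2$ (from the variance, after bounding $\phi(\bw_k)=O(\log T)$) directly against $\sum_k(k+t_1+1)\|\bw_k-\bw_k^*\|_2^2$ (from the left-hand function-value sum via quadratic growth)---also goes through with $\rho$ a fixed constant, and in fact avoids the induction altogether; the paper's induction is one clean way, not the only one.
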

The proof of Theorem \ref{thm:strong} is postponed to Section \ref{sec:proof-rate-fast}.

The following two corollaries follow directly from Theorem \ref{thm:strong} by noting the quadratic functional growth property of the associated objective functions and the self-bounding property of the regularizers. We omit the proof here for brevity.

\begin{corollary}
  Let $\delta\in(0,1)$. Let $\{\bw_t\}_t$ be the sequence produced by \eqref{SAUC} with $\eta_t=\frac{2}{\sigma_\phi t+2\sigma_f+\sigma_\phi t_1}$ and $\Omega(\bw)=\lambda\|\bw\|_2^2/2$,
  where $t_1\geq 32C_1\sigma_\phi^{-1}\log\frac{2T}{\delta}$. Then, \eqref{strong} holds w.h.p..
\end{corollary}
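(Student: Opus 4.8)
The plan is to read this corollary as the specialization of Theorem \ref{thm:strong} to the choice $\Omega(\bw)=\frac{\lambda}{2}\|\bw\|_2^2$. Because the step size $\eta_t$ and the threshold $t_1$ stated here already have precisely the form demanded by Theorem \ref{thm:strong}, the entire task reduces to checking that this particular $\Omega$ meets the two standing hypotheses of that theorem, namely Assumption \ref{ass:self-bounding} (self-bounding) and Assumption \ref{ass:strong} (quadratic functional growth of $\phi=f+\Omega$), and to pinning down the constants $C_1$, $\sigma_\phi$, and $\sigma_f$ so that, in particular, $\sigma_f\ge0$.

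First I would dispatch the self-bounding property. Here $\Omega'(\bw)=\lambda\bw$ and $\Omega(0)=0$, so $\|\Omega'(\bw)\|_2^2=\lambda^2\|\bw\|_2^2=2\lambda\,\Omega(\bw)$; thus \eqref{self-bounding} holds with $A_1=2\lambda$ and $A_2=0$. This fixes $C_1=\max\{2\lambda,16\kappa^2\}$ and makes the requirement $t_1\ge 32C_1\sigma_\phi^{-1}\log\frac{2T}{\delta}$ completely explicit.

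Next I would establish the quadratic functional growth. By Proposition \ref{lem:unbiased} the function $f$ is convex, and $\Omega=\frac{\lambda}{2}\|\cdot\|_2^2$ is $\lambda$-strongly convex, so $\phi=f+\Omega$ is $\lambda$-strongly convex with a unique minimizer $\bw^*$ at which $\nabla\phi(\bw^*)=0$. Expanding the strong-convexity inequality at $\bw^*$ gives $\phi(\bw)-\phi(\bw^*)\ge\frac{\lambda}{2}\|\bw-\bw^*\|_2^2$, i.e.\ \eqref{error-bound} holds with $\sigma_\phi=\frac{\lambda}{2}$. A one-line computation also yields $\Omega(\bw)-\Omega(\tilde\bw)-\langle\bw-\tilde\bw,\Omega'(\tilde\bw)\rangle=\frac{\lambda}{2}\|\bw-\tilde\bw\|_2^2$, so the inequality defining $\sigma_\Omega$ is valid for every $\sigma_\Omega\le\lambda$.

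The only subtle point is to keep $\sigma_f=\sigma_\phi-\sigma_\Omega\ge0$: the tightest value $\sigma_\Omega=\lambda$ would exceed $\sigma_\phi=\frac{\lambda}{2}$. This is exactly the situation covered by the ``without loss of generality'' convention stated just before Theorem \ref{thm:strong}. Since $\sigma_\phi$ and $\sigma_\Omega$ enter only through one-sided inequalities, I am free to retain any smaller admissible $\sigma_\Omega$; choosing $\sigma_\Omega=\frac{\lambda}{2}$ keeps the defining inequality valid and gives $\sigma_f=0\ge0$. With $C_1$, $\sigma_\phi=\frac{\lambda}{2}$, and $\sigma_f=0$ so determined, both hypotheses of Theorem \ref{thm:strong} are in force and the stated $\eta_t$ and $t_1$ are admissible, so Theorem \ref{thm:strong} applies verbatim and delivers \eqref{strong} with probability at least $1-\delta$. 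I expect everything here to be routine; the sole obstacle worth flagging is the bookkeeping that forces $\sigma_f\ge0$, which the non-tight choice of $\sigma_\Omega$ resolves.
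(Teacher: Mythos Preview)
Your proposal is correct and follows exactly the approach the paper intends: verify Assumption~\ref{ass:self-bounding} and Assumption~\ref{ass:strong} for $\Omega(\bw)=\frac{\lambda}{2}\|\bw\|_2^2$ and invoke Theorem~\ref{thm:strong}. The paper itself omits the proof, merely remarking that the corollary ``follow[s] directly from Theorem~\ref{thm:strong} by noting the quadratic functional growth property of the associated objective functions and the self-bounding property of the regularizers''; your explicit handling of the constants (in particular the bookkeeping to ensure $\sigma_f\ge 0$ by taking $\sigma_\Omega=\lambda/2$) just fills in details the paper leaves implicit.
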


\begin{corollary}\label{cor:unreg}
  Let $\delta\in(0,1)$. Let $\{\bw_t\}_t$ be the sequence produced by \eqref{SAUC} with $\eta_t=\frac{2}{\sigma_\phi t+2\sigma_f+\sigma_\phi t_1}$ and $\Omega(\bw)=0$,
  where $t_1\geq 32C_1\sigma_\phi^{-1}\log\frac{2T}{\delta}$. Then, \eqref{strong} holds w.h.p..
\end{corollary}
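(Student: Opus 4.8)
The plan is to verify that the two structural hypotheses of Theorem~\ref{thm:strong}---the self-bounding property (Assumption~\ref{ass:self-bounding}) and the quadratic functional growth (Assumption~\ref{ass:strong})---both hold in the unregularized case $\Omega(\bw)=0$, and then to invoke that theorem directly. Since the step sizes $\eta_t=\frac{2}{\sigma_\phi t+2\sigma_f+\sigma_\phi t_1}$ and the threshold $t_1\ge 32C_1\sigma_\phi^{-1}\log\frac{2T}{\delta}$ stated in the corollary are exactly those required by Theorem~\ref{thm:strong}, no fresh parameter choice is needed; the work reduces to checking the assumptions and tracking the constants they produce.

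First I would dispose of the self-bounding property. With $\Omega\equiv 0$ we have $\Omega(0)=0$ and $\Omega'(\bw)=0$ for every $\bw$, so \eqref{self-bounding} holds trivially with $A_1=A_2=0$. Consequently $C_1=\max\{A_1,16\kappa^2\}=16\kappa^2$, which pins down the threshold on $t_1$ appearing in the statement. In the same vein, the modulus $\sigma_\Omega$ of strong convexity of $\Omega$ may be taken to be $0$, so that $\sigma_f=\sigma_\phi-\sigma_\Omega=\sigma_\phi\ge 0$, consistent with the convention in the paragraph preceding Theorem~\ref{thm:strong}.

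Second, the quadratic functional growth is precisely the content of Proposition~\ref{prop:quadratic}: for $\phi=f$ one has $\phi(\bw)-\phi(\bw^*)\ge p(1-p)\lambda_{\min}(A^2)\|\bw-\bw^*\|_2^2$, so Assumption~\ref{ass:strong} is met with $\sigma_\phi=p(1-p)\lambda_{\min}(A^2)>0$, where $\lambda_{\min}(A^2)$ denotes the smallest nonzero eigenvalue of $A^2$. With both assumptions verified and $\sigma_f=\sigma_\phi$, Theorem~\ref{thm:strong} applies verbatim and yields $\|\bw_t-\bw_t^*\|_2^2=\widetilde{\O}(1/t)$ together with $\phi(\bar{\bw}_t^{(2)})-\inf_{\bw}\phi(\bw)=\widetilde{\O}(1/t)$ with probability at least $1-\delta$, which is exactly \eqref{strong}.

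I do not anticipate a genuine obstacle, since the corollary is a specialization rather than a new argument; the only point demanding care is that the growth constant $\sigma_\phi$ is the smallest \emph{nonzero} eigenvalue of $A^2$ (not the smallest eigenvalue, which may vanish when $A^2$ is singular). That subtlety is already handled inside Proposition~\ref{prop:quadratic} through the observation that $\bw-\bw^*$ is orthogonal to the kernel of $A^2$, and it is the single substantive ingredient that lets the unregularized objective enjoy the same fast rate as a strongly convex one.
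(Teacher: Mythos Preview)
Your proposal is correct and follows exactly the approach the paper takes: the paper states that the corollary ``follow[s] directly from Theorem \ref{thm:strong} by noting the quadratic functional growth property of the associated objective functions and the self-bounding property of the regularizers,'' and omits the details. You have supplied precisely those details---verifying Assumption~\ref{ass:self-bounding} trivially with $A_1=A_2=0$, invoking Proposition~\ref{prop:quadratic} for Assumption~\ref{ass:strong}, and tracking $\sigma_\Omega=0$ so that $\sigma_f=\sigma_\phi$---which is all that is needed.
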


\section{Experiments\label{sec:exp}}
In this section, we present experimental results to show the effectiveness of the proposed algorithm in achieving a satisfactory AUC with a fast convergence speed.
We first describe the baseline methods used in our experimental comparison as well as the associated parameter setting in Section \ref{sec:baseline}.
Datasets used in the experiments and detailed experimental results are presented in Section \ref{sec:dataset} and Section \ref{sec:result}, respectively.

\subsection{Baseline Methods\label{sec:baseline}}
We compare SPAUC to several state-of-the-art online AUC maximization algorithms. The algorithms we consider include
\begin{itemize}
  \item the stochastic proximal AUC maximization (SPAUC) \eqref{SAUC} with either no regularizers $\Omega(\bw)=0$ or an $\ell_2$ regularizer $\Omega(\bw)=\lambda\|\bw\|_2^2$;
  \item the stochastic proximal AUC maximization (SPAM) \citep{natole2018stochastic} with $\Omega(\bw)=\lambda\|\bw\|_2^2$;
  \item the stochastic online AUC maximization (SOLAM) \citep{YWL} based on a saddle problem formulation;
  \item the one-pase AUC maximization (OPAUC) \citep{gao2013one} which uses the first and second-order statistics of training data to compute gradients;
  \item the online AUC maximization based on the hinge loss function (OAM\_gra) \citep{zhao2011online};
  \item the fast stochastic AUC maximization (FSAUC) \citep{liu2018fast} which applies a multi-stage stochastic optimization technique to a saddle problem formulation.
\end{itemize}

\begin{table}[htbp]

\small
\setlength{\tabcolsep}{1pt}
  \centering\def\arraystretch{1.2}
\centering
  \begin{tabular}{|c|cc|c|cc|c|cc|c|cc|}\hline
  datasets & \# inst & \# feat & datasets & \# inst & \# feat & datasets & \# inst & \# feat & datasets & \# inst & \# feat \\ \hline
  diabetes & 768 & 8 & ijcnn1 & 141691 & 22 & german & 1000 & 24 & satimage & 6435 & 36 \\
  acoustic & 78823 & 50 & covtype & 581012 & 54 & a9a & 32561 & 123 & connect & 67557 & 126 \\
  usps & 9298 & 256 & w8a & 49749 & 300 &  mnist & 60000 & 780 & gisette & 7000 & 5000 \\
  real-sim & 72309 & 20958 & protein\_h & 145751 & 74 & malware & 71709 & 122  & webspam\_u & 350000 & 254  \\ \hline
  \end{tabular}
  \normalsize
  \caption{Description of the datasets used in the experiments.
		\label{tab:data_set}} 		
\end{table}

The performance of these algorithms depends on some parameters, which, as described below, we tune with the five-fold cross-validation.
For SPAUC, SPAM and SOLAM, we consider step sizes of the form $\eta_t=2/(\mu t+1)$ and validate the parameter $\mu$ over the interval
$10^{\{-7,-6.5,...,-2.5\}}$. Both SPAM and SPAUC with the $\ell_2$ regularizer requires another regularization parameter to tune, which is validated over the interval $10^{\{-5,-4,\ldots,0\}}$.
SOLAM involves the constraint on $\bw$, i.e. $\bw$ belonging to $\ell_2$-ball with radius $R$ in $\rbb^d$, for which we tune over the interval $10^{\{-1,0,\ldots,5\}}$.
For OAM\_gra, we need to tune a parameter to weight the comparison between released examples and bulk, which is validated over the interval $10^{\{-3,-2.5,\ldots,1.5\}}$.
As recommended in \citet{zhao2011online}, we fix the buffer size to $100$.
For OPAUC, we need to tune both the constant step size and the regularization parameter $\lambda$, which are validated over the interval $10^{\{-3.5,-3,\ldots,1\}}$ and $10^{\{-5,-4,\ldots,0\}}$, respectively.
The multi-stage scheme in FSAUC specifies how the step size decreases along the implementation of the algorithm and leave the initial step size as a free parameter to tune, which we validate over
the interval $10^{\{-2.5,-2,\ldots,2\}}$.
Furthermore, each iteration of FSAUC requires a projection onto an $\ell_1$-ball of radius of $R$, which we tune over the interval $10^{\{-1,0,\ldots,5\}}$.
It should be noticed that both SPAUC with no regularizers and OAM\_gra only have a single parameter to tune, while all other algorithms have two parameters to tune.
To speed up the training process, if the algorithm has two parameters $p_1,p_2$ to tune, we first construct all the possible pairs $(p_1,p_2)$ by enumerating all possible candidate values of $p_1$ and $p_2$, out of which
we randomly sample $15$ pairs without replacement to tune. 
We repeat the experiments 20 times and report the average of experimental results.

\subsection{Datasets\label{sec:dataset}}
We perform our experiments on several real-world datasets. We consider two types of datasets: the UCI benchmark dataset and the dataset in the domain of anomaly detection. The task of anomaly detection is to identify rare items, events or observations which raise suspicions by differing significantly from the majority of the data. As such,  this is suitable to test the performance of AUC maximization methods since the class there is intrinsically and highly imbalanced. We consider three datasets in the domain of anomaly detection: protein\_h, webspam\_u and malware. In particular,
webspam\_u is a subset used in the Pascal Large Scale Learning Challenge \citep{wang2012evolutionary} to
detect malicious web pages, protein\_h  is a dataset in bioinformatics used to predict which proteins are homologous to a query sentence (non-homologous sequences are labeled as anomalies)~\citep{caruana2004kdd},
and malware was collected in the Android Malware Genome Project used to detect mobile malware app~\citep{jiang2012dissecting}.
The remaining UCI datasets can be downloaded from the LIBSVM webpage~\citep{chang2011libsvm}.
For each dataset, we use 80\% of data for training and the remaining 20\% for testing.
We transform datasets with multiple class labels into datasets with binary class labels
by grouping the first half of class labels into positive labels, and grouping the remaining class labels into negative labels.
We run each algorithm until $15$ passes of the training data is reached, and report the AUC values on the test dataset.
The information of the dataset is summarized in Table \ref{tab:data_set} where we list the  UCI datasets according to the dimensionality while datasets for anomaly detection are listed at the end.

\begin{figure}[htb]
  \centering
  \hspace*{-1.2cm}
  \subfigure[diabetes]{\includegraphics[width=0.306\textwidth, trim=0 2 0 2, clip]{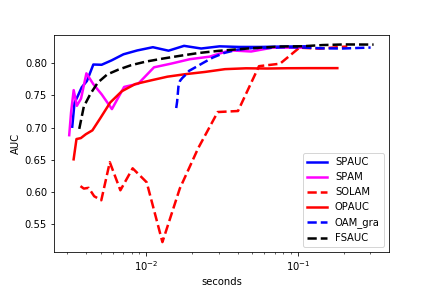}}\hspace*{-0.342cm}
  \subfigure[ijcnn1]{\includegraphics[width=0.306\textwidth, trim=0 2 0 2, clip]{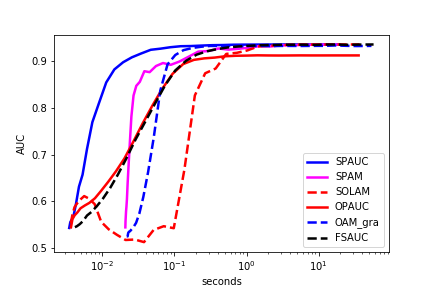}}\hspace*{-0.342cm}
  \subfigure[german]{\includegraphics[width=0.306\textwidth, trim=0 2 0 2, clip]{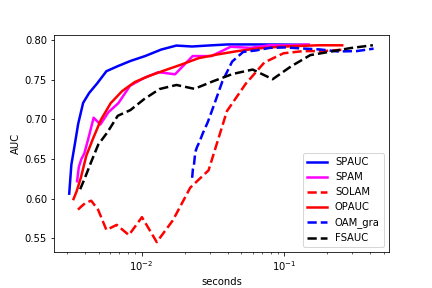}}\hspace*{-0.342cm}
  \subfigure[satimage]{\includegraphics[width=0.306\textwidth, trim=0 2 0 2, clip]{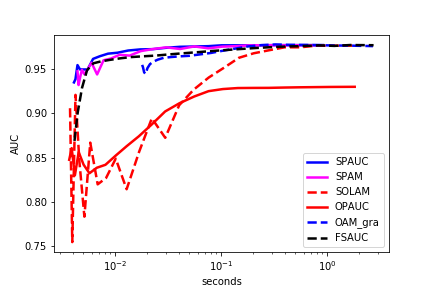}}\hspace*{-0.342cm}\\

  \hspace*{-1.2cm}
  \subfigure[acoustic]{\includegraphics[width=0.306\textwidth, trim=0 2 0 2, clip]{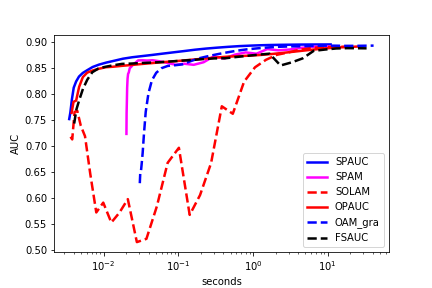}}\hspace*{-0.342cm}
  \subfigure[covtype]{\includegraphics[width=0.306\textwidth, trim=0 2 0 2, clip]{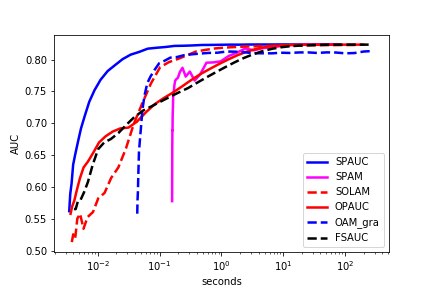}}\hspace*{-0.342cm}
  \subfigure[a9a]{\includegraphics[width=0.306\textwidth, trim=0 2 0 2, clip]{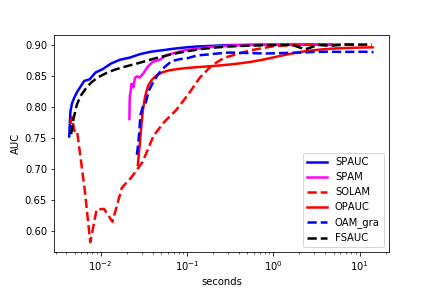}}\hspace*{-0.342cm}
  \subfigure[connect]{\includegraphics[width=0.306\textwidth, trim=0 2 0 2, clip]{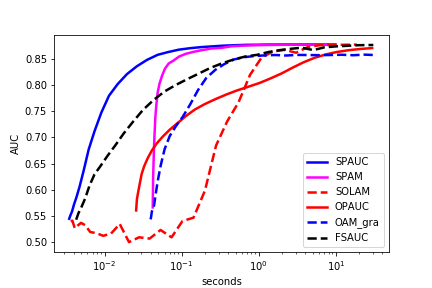}}\hspace*{-0.342cm}\\

  \hspace*{-1.2cm}
  \subfigure[usps]{\includegraphics[width=0.306\textwidth, trim=0 2 0 2, clip]{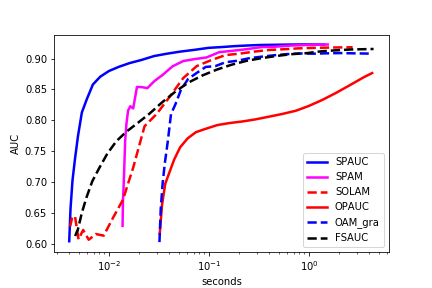}}\hspace*{-0.342cm}
  \subfigure[w8a]{\includegraphics[width=0.306\textwidth, trim=0 2 0 2, clip]{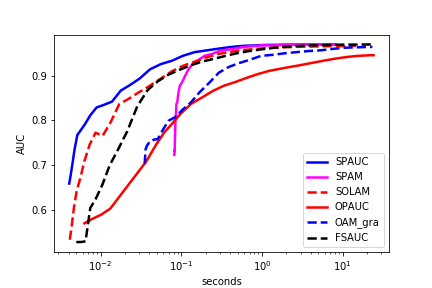}}\hspace*{-0.342cm}
  \subfigure[mnist]{\includegraphics[width=0.306\textwidth, trim=0 2 0 2, clip]{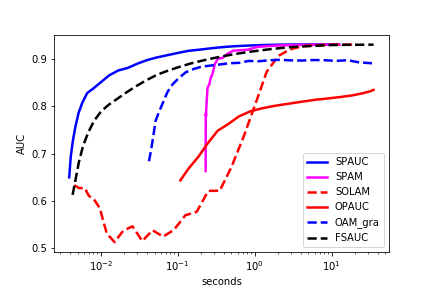}}\hspace*{-0.342cm}
  \subfigure[gisette]{\includegraphics[width=0.306\textwidth, trim=0 2 0 2, clip]{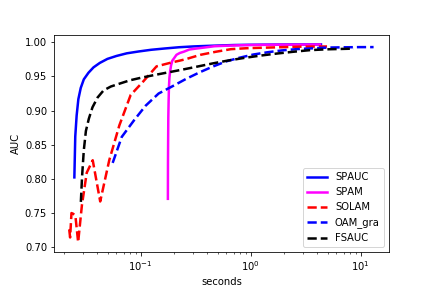}}\hspace*{-0.342cm}\\

  \hspace*{-1.2cm}
  \subfigure[real-sim]{\includegraphics[width=0.306\textwidth, trim=0 2 0 2, clip]{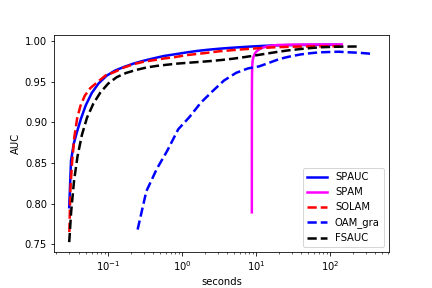}}\hspace*{-0.342cm}
  \subfigure[protein\_h]{\includegraphics[width=0.306\textwidth, trim=0 2 0 2, clip]{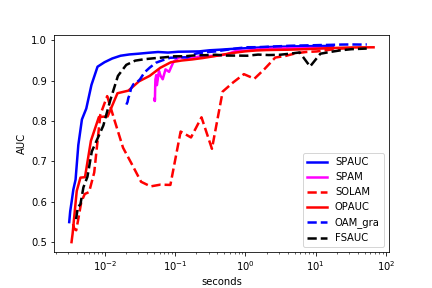}}\hspace*{-0.342cm}
  \subfigure[malware]{\includegraphics[width=0.306\textwidth, trim=0 2 0 2, clip]{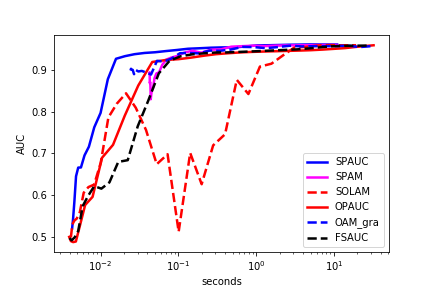}}\hspace*{-0.342cm}
  \subfigure[webspam\_u]{\includegraphics[width=0.306\textwidth, trim=0 2 0 2, clip]{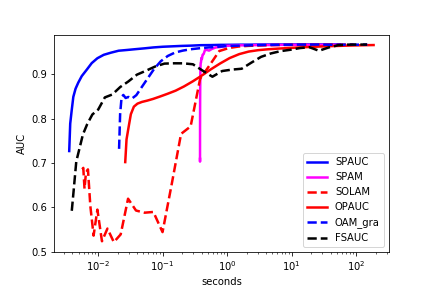}}\hspace*{-0.342cm}\\
  \vspace*{-0.6\baselineskip}
  \caption{AUC versus time curves (in seconds) for SPAUC (without regularization), SPAM, SOLAM and OPAUC, OAM\_gra and FSAUC.\label{fig:auc-all}}
\end{figure}

\begin{table}[htb]
\caption{Comparison of the testing AUC values and running time per pass (mean$\pm$std.).
		\label{tab:auc-all}}
\small
\setlength{\tabcolsep}{1.4pt}
  \centering\def\arraystretch{1.2}

\hspace*{-1cm}\begin{tabular}{|c|c|c|c|c|c|c|c|}\hline
datasets & & SPAUC & SPAM & SOLAM & OPAUC & OAM\_gra & FSAUC \\ \hline

\multirow{ 2}{*}{diabetes} & AUC & $0.8266\!\pm\! 0.0284$ & $0.8246\!\pm\! 0.0303$ & $0.8264\!\pm\! 0.0308$ & $0.7926\!\pm\! 0.0462$ & $0.8247\!\pm\! 0.0266$ & $0.8293\!\pm\! 0.0375$\\    & Time & $0.0075\!\pm\! 0.0013$ & $0.0071\!\pm\! 0.0002$ & $0.0141\!\pm\! 0.0015$ & $0.0121\!\pm\! 0.0011$ & $0.0201\!\pm\! 0.0014$ & $0.0210\!\pm\! 0.0011$\\ \hline
\multirow{ 2}{*}{ijcnn1} & AUC & $0.9361\!\pm\! 0.0019$ & $0.9358\!\pm\! 0.0018$ & $0.9362\!\pm\! 0.0019$ & $0.9127\!\pm\! 0.0021$ & $0.9331\!\pm\! 0.0031$ & $0.9361\!\pm\! 0.0015$\\    & Time & $1.2881\!\pm\! 0.0076$ & $1.3080\!\pm\! 0.0500$ & $2.4498\!\pm\! 0.0682$ & $2.3807\!\pm\! 0.0934$ & $3.5811\!\pm\! 0.1253$ & $3.8352\!\pm\! 0.0709$\\ \hline
\multirow{ 2}{*}{german} & AUC & $0.7938\!\pm\! 0.0246$ & $0.7943\!\pm\! 0.0255$ & $0.7879\!\pm\! 0.0326$ & $0.7932\!\pm\! 0.0313$ & $0.7890\!\pm\! 0.0278$ & $0.7933\!\pm\! 0.0262$\\    & Time & $0.0094\!\pm\! 0.0011$ & $0.0099\!\pm\! 0.0015$ & $0.0179\!\pm\! 0.0010$ & $0.0169\!\pm\! 0.0007$ & $0.0281\!\pm\! 0.0026$ & $0.0278\!\pm\! 0.0015$\\ \hline
\multirow{ 2}{*}{satimage} & AUC & $0.9772\!\pm\! 0.0029$ & $0.9769\!\pm\! 0.0040$ & $0.9765\!\pm\! 0.0028$ & $0.9300\!\pm\! 0.0066$ & $0.9760\!\pm\! 0.0029$ & $0.9770\!\pm\! 0.0041$\\    & Time & $0.0609\!\pm\! 0.0038$ & $0.0589\!\pm\! 0.0010$ & $0.1181\!\pm\! 0.0102$ & $0.1212\!\pm\! 0.0024$ & $0.1802\!\pm\! 0.0114$ & $0.1826\!\pm\! 0.0123$\\ \hline
\multirow{ 2}{*}{acoustic} & AUC & $0.8952\!\pm\! 0.0026$ & $0.8910\!\pm\! 0.0028$ & $0.8911\!\pm\! 0.0032$ & $0.8911\!\pm\! 0.0026$ & $0.8929\!\pm\! 0.0028$ & $0.8877\!\pm\! 0.0076$\\    & Time & $0.7281\!\pm\! 0.0216$ & $0.7304\!\pm\! 0.0278$ & $1.3972\!\pm\! 0.0168$ & $1.6672\!\pm\! 0.0213$ & $2.7367\!\pm\! 0.2275$ & $2.2063\!\pm\! 0.1009$\\ \hline
\multirow{ 2}{*}{covtype} & AUC & $0.8236\!\pm\! 0.0009$ & $0.8235\!\pm\! 0.0009$ & $0.8228\!\pm\! 0.0013$ & $0.8233\!\pm\! 0.0009$ & $0.8134\!\pm\! 0.0036$ & $0.8233\!\pm\! 0.0007$\\    & Time & $5.4320\!\pm\! 0.2447$ & $5.4988\!\pm\! 0.0629$ & $10.5169\!\pm\! 0.3023$ & $13.1290\!\pm\! 0.6334$ & $19.303\!\pm\! 3.6072$ & $16.064\!\pm\! 0.2350$\\ \hline
\multirow{ 2}{*}{a9a} & AUC & $0.9000\!\pm\! 0.0033$ & $0.9003\!\pm\! 0.0042$ & $0.9003\!\pm\! 0.0033$ & $0.8957\!\pm\! 0.0028$ & $0.8879\!\pm\! 0.0043$ & $0.9002\!\pm\! 0.0031$\\    & Time & $0.3123\!\pm\! 0.0018$ & $0.3120\!\pm\! 0.0023$ & $0.5862\!\pm\! 0.0035$ & $0.9417\!\pm\! 0.0610$ & $0.9686\!\pm\! 0.1143$ & $0.9273\!\pm\! 0.0146$\\ \hline
\multirow{ 2}{*}{connect} & AUC & $0.8786\!\pm\! 0.0031$ & $0.8783\!\pm\! 0.0023$ & $0.8783\!\pm\! 0.0032$ & $0.8716\!\pm\! 0.0027$ & $0.8583\!\pm\! 0.0035$ & $0.8776\!\pm\! 0.0036$\\    & Time & $0.6520\!\pm\! 0.0082$ & $0.6532\!\pm\! 0.0053$ & $1.2386\!\pm\! 0.0142$ & $1.9633\!\pm\! 0.0864$ & $2.0464\!\pm\! 0.2030$ & $2.0307\!\pm\! 0.0376$\\ \hline
\multirow{ 2}{*}{usps} & AUC & $0.9225\!\pm\! 0.0048$ & $0.9226\!\pm\! 0.0046$ & $0.9182\!\pm\! 0.0065$ & $0.8765\!\pm\! 0.0105$ & $0.9079\!\pm\! 0.0069$ & $0.9154\!\pm\! 0.0050$\\    & Time & $0.0947\!\pm\! 0.0044$ & $0.1026\!\pm\! 0.0022$ & $0.1821\!\pm\! 0.0093$ & $0.2851\!\pm\! 0.0282$ & $0.2638\!\pm\! 0.0195$ & $0.2949\!\pm\! 0.0045$\\ \hline
\multirow{ 2}{*}{w8a} & AUC & $0.9694\!\pm\! 0.0035$ & $0.9692\!\pm\! 0.0040$ & $0.9663\!\pm\! 0.0041$ & $0.9454\!\pm\! 0.0057$ & $0.9640\!\pm\! 0.0044$ & $0.9695\!\pm\! 0.0036$\\    & Time & $0.5414\!\pm\! 0.0069$ & $0.5401\!\pm\! 0.0262$ & $0.9725\!\pm\! 0.0119$ & $1.6080\!\pm\! 0.1558$ & $1.5541\!\pm\! 0.1302$ & $1.5782\!\pm\! 0.0228$\\ \hline
\multirow{ 2}{*}{mnist} & AUC & $0.9306\!\pm\! 0.0020$ & $0.9302\!\pm\! 0.0017$ & $0.9304\!\pm\! 0.0027$ & $0.8345\!\pm\! 0.0086$ & $0.8908\!\pm\! 0.0047$ & $0.9302\!\pm\! 0.0015$\\    & Time & $0.8272\!\pm\! 0.0241$ & $0.8409\!\pm\! 0.0168$ & $1.3983\!\pm\! 0.0592$ & $2.3366\!\pm\! 0.2359$ & $2.2333\!\pm\! 0.2533$ & $2.3376\!\pm\! 0.0418$\\ \hline
\multirow{ 2}{*}{gisette} & AUC & $0.9970\!\pm\! 0.0011$ & $0.9969\!\pm\! 0.0011$ & $0.9940\!\pm\! 0.0014$ & - & $0.9931\!\pm\! 0.0017$ & $0.9908\!\pm\! 0.0024$\\    & Time & $0.2899\!\pm\! 0.0224$ & $0.2846\!\pm\! 0.0208$ & $0.3291\!\pm\! 0.0253$ & - & $0.8719\!\pm\! 0.0807$ & $0.5778\!\pm\! 0.0423$\\ \hline
\multirow{ 2}{*}{real-sim} & AUC & $0.9955\!\pm\! 0.0004$ & $0.9959\!\pm\! 0.0002$ & $0.9936\!\pm\! 0.0005$ & - & $0.9842\!\pm\! 0.0021$ & $0.9934\!\pm\! 0.0006$\\    & Time & $8.6884\!\pm\! 0.2815$ & $9.5146\!\pm\! 0.3872$ & $8.9692\!\pm\! 0.3466$ & - & $25.505\!\pm\! 0.7707$ & $16.132\!\pm\! 0.4540$\\ \hline
\multirow{ 2}{*}{protein\_h} & AUC & $0.9858\!\pm\! 0.0029$ & $0.9806\!\pm\! 0.0030$ & $0.9807\!\pm\! 0.0054$ & $0.9825\!\pm\! 0.0040$ & $0.9895\!\pm\! 0.0017$ & $0.9793\!\pm\! 0.0036$\\    & Time & $1.1807\!\pm\! 0.0293$ & $1.1943\!\pm\! 0.0296$ & $2.2331\!\pm\! 0.0853$ & $4.4396\!\pm\! 0.4447$ & $3.5537\!\pm\! 0.7592$ & $3.5484\!\pm\! 0.1478$\\ \hline
\multirow{ 2}{*}{malware} & AUC & $0.9606\!\pm\! 0.0122$ & $0.9595\!\pm\! 0.0126$ & $0.9589\!\pm\! 0.0143$ & $0.9587\!\pm\! 0.0129$ & $0.9566\!\pm\! 0.0152$ & $0.9581\!\pm\! 0.0114$\\    & Time & $0.7291\!\pm\! 0.0212$ & $0.7296\!\pm\! 0.0183$ & $1.2822\!\pm\! 0.0473$ & $2.1483\!\pm\! 0.2648$ & $1.9903\!\pm\! 0.1021$ & $1.9604\!\pm\! 0.0637$\\ \hline
\multirow{ 2}{*}{webspam\_u} & AUC & $0.9673\!\pm\! 0.0008$ & $0.9664\!\pm\! 0.0007$ & $0.9668\!\pm\! 0.0005$ & $0.9659\!\pm\! 0.0006$ & $0.9670\!\pm\! 0.0012$ & $0.9671\!\pm\! 0.0006$\\    & Time & $3.7163\!\pm\! 0.2121$ & $3.3759\!\pm\! 0.1412$ & $6.0562\!\pm\! 0.3276$ & $12.3849\!\pm\! 0.9478$ & $9.9146\!\pm\! 0.1668$ & $9.7933\!\pm\! 0.5240$\\ \hline
\end{tabular}    		
\end{table}

\subsection{Experimental results\label{sec:result}}
In this section, we present the experimental results and discuss the comparisons of our algorithm against other ones. In Figure \ref{fig:auc-all},
we plot the AUC values of the constructed models on the test data versus execution time in seconds for SPAUC (without regularization), SPAM, SOLAM, OPAUC, OAM\_gra and FSAUC. It is observed that
SPAUC attains a faster training speed than all baseline methods.

In particular, the curve of SOLAM fluctuates rapidly, especially in the early stage of the optimization, which is perhaps due to the requirement
of updating both primal and dual variables.
OAM\_gra behaves more robustly, which, however, requires a high computation burden due to the requirement in updating a buffer and comparing the
current example and examples in the buffer per iteration.
As one can see from Figure \ref{fig:auc-all}, SPAUC converges faster than FSAUC on most of the datasets. The underlying reason could be two-fold. Firstly, FSAUC requires a projection onto the intersection of an $\ell_1$-ball and $\ell_2$-ball which requires an alternating projection step.
Secondly, FSAUC requires to update both primal and dual variables, which further increases the computational cost per iteration.
OPAUC has a low training speed due to the requirement in handling a covariance matrix,
which is especially unfavorable for high-dimensional datasets. For example, OPAUC has the slowest training speed on USPS for which the dimensionality is $256$.
We do not run OPAUC on datasets with dimensionality larger than $1000$ due to the heavy dependency of its time complexity on the dimensionality.
The implementation of SPAM requires an accurate information of $p,\ebb[x|y=1]$ and $\ebb[x|y=-1]$,
which we approximate with
\begin{equation}\label{approximation}
\hat{p}=\frac{\sum_{i=1}^{n}\ibb_{[y_i=1]}}{n},\quad
\hat{u}=\frac{\sum_{i=1}^{n}x_i\ibb_{[y_i=1]}}{\sum_{i=1}^{n}\ibb_{[y_i=1]}},\quad
\hat{v}=\frac{\sum_{i=1}^{n}x_i\ibb_{[y_i=-1]}}{\sum_{i=1}^{n}\ibb_{[y_i=-1]}}.
\end{equation}
It is observed that the AUC curve for SPAM attains a sharp increase at the beginning of the curve and then moderately increases.
The underlying reason is that we include the computational cost of calculating $\hat{p},\hat{u}$ and $\hat{v}$ in the curve, which requires to go through the whole training set.

In Table \ref{tab:auc-all}, we also report detailed AUCs as well as the execution time per pass, both in the form of mean $\pm$ standard deviation.  We can see from Table \ref{tab:auc-all} that SPAUC achieves accuracies comparable to the state-of-the-art methods over all datasets. SPAUC (without regularization) and SPAM require comparable running time per iteration since both algorithms require
no projections and no updates on the dual variables.
An advantage of SPAUC with no regularization over SPAM is that SPAUC can deal with streaming data in a truly online fashion,  while SPAM needs to know the conditional expectations in \eqref{approximation} and hence is not an online learning algorithm.
Furthermore, the fast convergence of SPAM requires the objective function to be strongly convex~\citep{natole2018stochastic}, which introduces an additional regularization parameter to tune.
Other baseline methods require longer per-pass running time due to the same reasons we mentioned above for explaining the AUC curve in Figure \ref{fig:auc-all}.
It can be seen that OAM\_gra requires longer per-pass running time than OPAUC if the dimensionality is relatively small, while the reverse is the case for datasets with a relatively large dimensionality.
This is consistent with the dependency of the time complexity on the dimensionality for these two methods, i.e., linear versus quadratic.

\begin{figure}[htb]
  \centering
  \hspace*{-1.2cm}
  \subfigure[diabetes]{\includegraphics[width=0.306\textwidth, trim=0 2 0 2, clip]{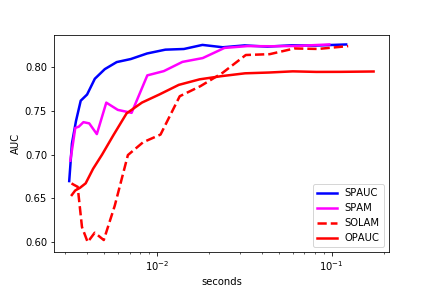}}\hspace*{-0.342cm}
  \subfigure[ijcnn1]{\includegraphics[width=0.306\textwidth, trim=0 2 0 2, clip]{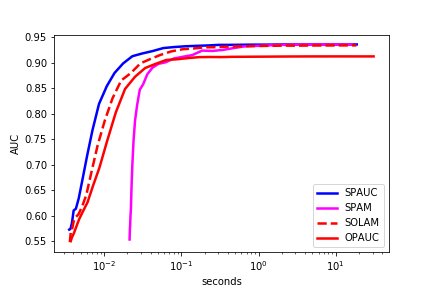}}\hspace*{-0.342cm}
  \subfigure[german]{\includegraphics[width=0.306\textwidth, trim=0 2 0 2, clip]{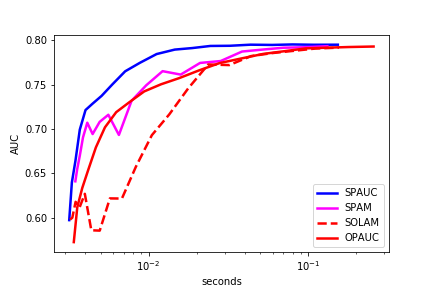}}\hspace*{-0.342cm}
  \subfigure[satimage]{\includegraphics[width=0.306\textwidth, trim=0 2 0 2, clip]{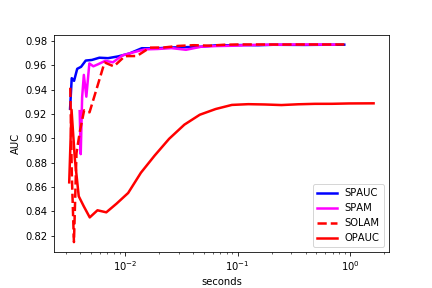}}\hspace*{-0.342cm}\\

  \hspace*{-1.2cm}
  \subfigure[acoustic]{\includegraphics[width=0.306\textwidth, trim=0 2 0 2, clip]{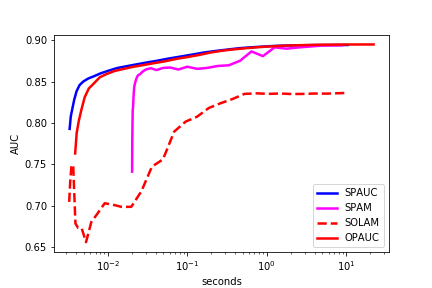}}\hspace*{-0.342cm}
  \subfigure[covtype]{\includegraphics[width=0.306\textwidth, trim=0 2 0 2, clip]{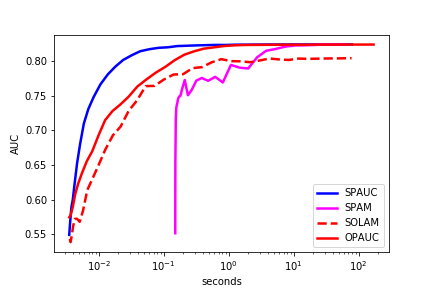}}\hspace*{-0.342cm}
  \subfigure[a9a]{\includegraphics[width=0.306\textwidth, trim=0 2 0 2, clip]{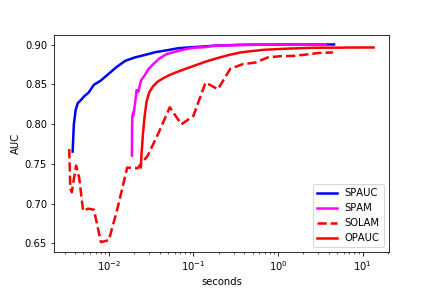}}\hspace*{-0.342cm}
  \subfigure[connect]{\includegraphics[width=0.306\textwidth, trim=0 2 0 2, clip]{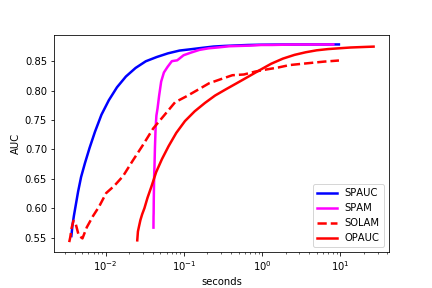}}\hspace*{-0.342cm}\\

  \hspace*{-1.2cm}
  \subfigure[usps]{\includegraphics[width=0.306\textwidth, trim=0 2 0 2, clip]{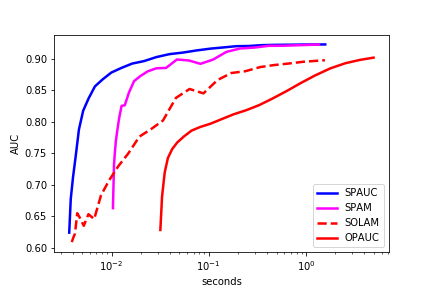}}\hspace*{-0.342cm}
  \subfigure[w8a]{\includegraphics[width=0.306\textwidth, trim=0 2 0 2, clip]{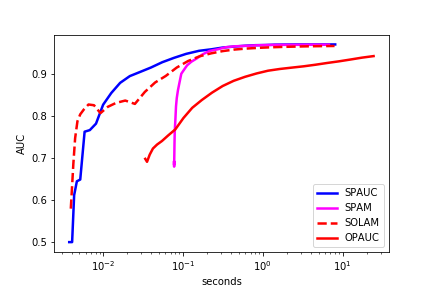}}\hspace*{-0.342cm}
  \subfigure[mnist]{\includegraphics[width=0.306\textwidth, trim=0 2 0 2, clip]{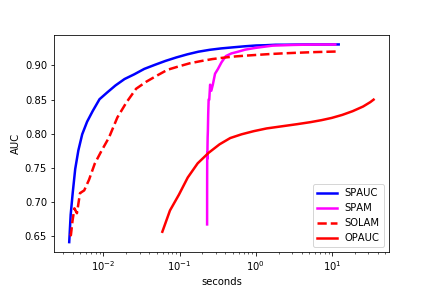}}\hspace*{-0.342cm}
  \subfigure[gisette]{\includegraphics[width=0.306\textwidth, trim=0 2 0 2, clip]{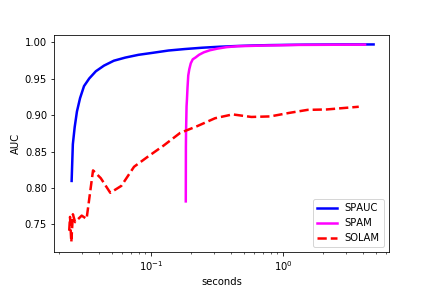}}\hspace*{-0.342cm}\\

  \hspace*{-1.2cm}
  \subfigure[real-sim]{\includegraphics[width=0.306\textwidth, trim=0 2 0 2, clip]{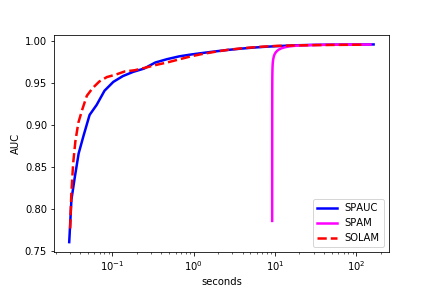}}\hspace*{-0.342cm}
  \subfigure[protein\_h]{\includegraphics[width=0.306\textwidth, trim=0 2 0 2, clip]{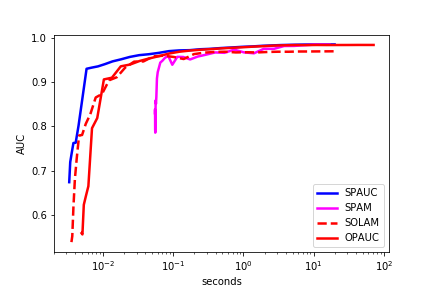}}\hspace*{-0.342cm}
  \subfigure[malware]{\includegraphics[width=0.306\textwidth, trim=0 2 0 2, clip]{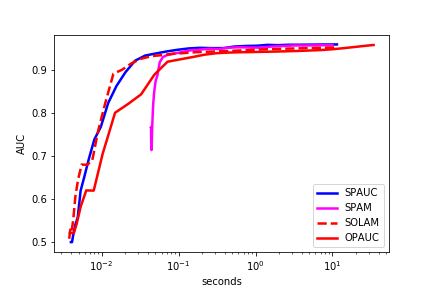}}\hspace*{-0.342cm}
  \subfigure[webspam\_u]{\includegraphics[width=0.306\textwidth, trim=0 2 0 2, clip]{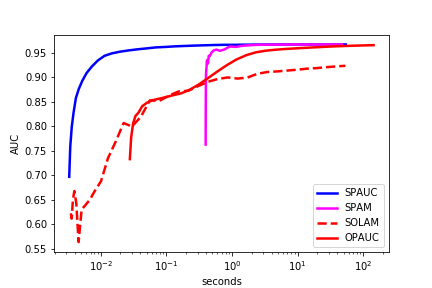}}\hspace*{-0.342cm}
  \vspace*{-0.6\baselineskip}
  \caption{AUC versus time curves (in seconds) for SPAUC, SPAM, SOLAM and OPAUC for objective functions with regularization parameter $\lambda=10^{-6}$. \label{fig:auc-l2-6}}
\end{figure}

To show that SPAUC also works well with regularization, we consider \eqref{SAUC} with $\Omega(\bw)=\lambda\|\bw\|_2^2$ in our experiments.
We compare SPAUC with this $\ell_2$-regularizer to several baseline methods including SPAM, SOLAM and OPAUC, where we modify the original SOLAM in \citet{YWL} by replacing the $\ell_2$-constraint with an $\ell_2$-regularizer. Therefore, these four methods all optimize the same objective function with an $\ell_2$-regularizer.
We fix the regularization parameter and tune the step-size parameter $\mu$ by 5-fold cross validation.
In Figure \ref{fig:auc-l2-6}, we plot the AUC values as a function of execution time (in seconds) for SPAUC (with $\ell_2$-regularizer), SPAM, SOLAM and OPAUC with $\lambda=10^{-6}$.
It can be seen that SPAUC with $\ell_2$-regularizer attains a fast convergence speed as compared to the baseline methods. The same phenomenon also occurs for other choice of regularization parameters, e.g., $\lambda=10^{-2}$ and $\lambda=10^{-4}$. We omit these results to save space.

\section{Proofs\label{sec:proof}}


In this section, we present proofs for theoretical properties of SPAUC. In subsection \ref{sec:elem}, we present several useful properties on the objective function which will be useful in our convergence rate analysis.
Then we move on to the one-step progress inequality of SPAUC together with some useful corollaries. Subsection \ref{sec:approx} presents high-probability bounds on approximating
$\widetilde{F}'(\bw;z_t)$ by $\hat{F}_t'(\bw;z_t)$, based on which we establish an almost boundedness of  iterates in subsection \ref{sec:boundedness}. In subsection \ref{sec:proof-rate} and  subsection \ref{sec:proof-rate-fast},
we use these preliminary results to prove convergence rates for SPAUC applied to general convex AUC objectives and AUC objectives with a quadratic functional growth, respectively.

\subsection{Properties of Objective Functions\label{sec:elem}}

The following lemma shows that an approximation of $p,\ebb[x|y=1]$ and $\ebb[x|y=-1]$ by \eqref{approx} still preserves the convexity. It also establishes the self-bounding property of $\hat{F}_t(\bw;z)$.
\begin{lemma}\label{lem:self-bounding-F}
For any $\bw$ and $z$, we have
\begin{equation}\label{self-bounding-F}
  \|\hat{F}_t'(\bw;z)\|_2^2\leq 16\kappa^2\hat{F}_t(\bw;z)\quad\text{and}\quad \hat{F}_t(\bw;z)\geq0.
\end{equation}
Furthermore, for any $z$ the function $\hat{F}_t(\bw;z)$ is a convex function of $\bw$.
\end{lemma}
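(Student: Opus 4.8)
The plan is to establish the three claims of Lemma \ref{lem:self-bounding-F} in the order: nonnegativity, convexity, and the self-bounding inequality, since the first two are cheap and the self-bounding property will likely lean on them. First I would prove $\hat{F}_t(\bw;z)\geq 0$ by rewriting $\hat{F}_t(\bw;z)$ as a sum of manifestly nonnegative pieces. Observe that the two quadratic terms $(1-p_t)(\bw^\top(x-u_t))^2\ibb_{[y=1]}$ and $p_t(\bw^\top(x-v_t))^2\ibb_{[y=-1]}$ are each nonnegative since $p_t\in[0,1]$, and the remaining piece $2p_t(1-p_t)\bw^\top(v_t-u_t)+p_t(1-p_t)(\bw^\top(v_t-u_t))^2+p_t(1-p_t)$ equals $p_t(1-p_t)\big(1+\bw^\top(v_t-u_t)\big)^2$, which is also nonnegative. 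This completes nonnegativity.

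Next, for convexity, I would follow the same route as in Proposition \ref{lem:unbiased}: compute the Hessian of $\hat{F}_t(\bw;z)$ with respect to $\bw$ and verify it is positive semidefinite. The Hessian is
\[
\nabla^2\hat{F}_t(\bw;z)=2(1-p_t)(x-u_t)(x-u_t)^\top\ibb_{[y=1]}+2p_t(x-v_t)(x-v_t)^\top\ibb_{[y=-1]}+2p_t(1-p_t)(v_t-u_t)(v_t-u_t)^\top,
\]
which is a nonnegative combination of rank-one PSD matrices, hence PSD. This gives convexity for every $z$.

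For the self-bounding inequality $\|\hat{F}_t'(\bw;z)\|_2^2\leq 16\kappa^2\hat{F}_t(\bw;z)$, the idea is to bound the gradient norm termwise using the gradient expression \eqref{grad-hf}, then absorb each bound into the corresponding nonnegative component of $\hat{F}_t(\bw;z)$ from the nonnegativity decomposition above. Writing $\hat{F}_t'(\bw;z)$ as the sum of four vectors, I would apply the elementary inequality $\|\sum_i a_i\|_2^2\le (\#\text{terms})\sum_i\|a_i\|_2^2$ (or group them more cleverly). The key quantitative inputs are $\|x-u_t\|_2\le 2\kappa$, $\|x-v_t\|_2\le 2\kappa$, and $\|v_t-u_t\|_2\le 2\kappa$, which follow from $\kappa=\max\{1,\sup_x\|x\|_2\}$ together with the fact that $u_t,v_t$ are convex combinations (empirical averages) of the data points and hence also have norm at most $\kappa$. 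For a term like $2(1-p_t)(x-u_t)(x-u_t)^\top\bw\,\ibb_{[y=1]}$, its squared norm is at most $4(1-p_t)^2\|x-u_t\|_2^2(\bw^\top(x-u_t))^2\ibb_{[y=1]}\le 16\kappa^2(1-p_t)(\bw^\top(x-u_t))^2\ibb_{[y=1]}$, which is exactly $16\kappa^2$ times the matching component of $\hat{F}_t(\bw;z)$. The main obstacle I anticipate is handling the cross term $2p_t(1-p_t)(v_t-u_t)$ together with $2p_t(1-p_t)(v_t-u_t)(v_t-u_t)^\top\bw$ in the gradient: these must be matched against the squared piece $p_t(1-p_t)(1+\bw^\top(v_t-u_t))^2$, so I would group them as $2p_t(1-p_t)(v_t-u_t)\big(1+\bw^\top(v_t-u_t)\big)$ and bound its squared norm by $4p_t^2(1-p_t)^2\|v_t-u_t\|_2^2\big(1+\bw^\top(v_t-u_t)\big)^2\le 16\kappa^2 p_t(1-p_t)\big(1+\bw^\top(v_t-u_t)\big)^2$. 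Summing the termwise bounds and checking that the combinatorial constants do not exceed $16\kappa^2$ is the delicate bookkeeping step, and I would arrange the grouping precisely so that each squared gradient block is dominated by $16\kappa^2$ times its own potential block, yielding the claimed inequality without loss in the constant.
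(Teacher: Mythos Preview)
Your treatment of nonnegativity (completing the square to obtain $p_t(1-p_t)\big(1+\bw^\top(v_t-u_t)\big)^2$) and of convexity (the Hessian is a nonnegative combination of rank-one PSD matrices) matches the paper's argument.

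For the self-bounding inequality the paper takes a different and cleaner route than your termwise bound. It first shows that $\bw\mapsto\hat{F}_t(\bw;z)$ is $8\kappa^2$-smooth,
\[
\big\|\hat{F}'_t(\bw;z)-\hat{F}'_t(\tilde{\bw};z)\big\|_2\leq 8\kappa^2\|\bw-\tilde{\bw}\|_2,
\]
by applying the triangle inequality to the three rank-one pieces of the Hessian together with $\|x-u_t\|_2,\|x-v_t\|_2,\|v_t-u_t\|_2\leq 2\kappa$ (and using that only one of the two indicator terms is active). It then invokes the general self-bounding lemma for nonnegative smooth functions (Lemma~\ref{lem:self-bounding}): if $h\geq0$ is $\beta$-smooth then $\|\nabla h(\bw)\|_2^2\leq 2\beta\,h(\bw)$. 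With $\beta=8\kappa^2$ this gives exactly $16\kappa^2\hat{F}_t(\bw;z)$, with no block-by-block bookkeeping.

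Your direct approach can be made to work, but as written it does not justify the constant $16\kappa^2$. After your grouping there are, for each fixed $y$, exactly two nonzero gradient blocks $A$ and $B$, each satisfying $\|A\|_2^2\leq 16\kappa^2\mathsf{a}$ and $\|B\|_2^2\leq 16\kappa^2\mathsf{b}$ with $\mathsf{a}+\mathsf{b}=\hat{F}_t(\bw;z)$. However $\|A+B\|_2^2\leq\|A\|_2^2+\|B\|_2^2$ is false in general, and the crude bound $\|A+B\|_2^2\leq 2\|A\|_2^2+2\|B\|_2^2$ yields only $32\kappa^2$. To recover $16\kappa^2$ you would need a weighted Cauchy--Schwarz step: e.g.\ for $y=1$ one has $\|A\|_2\leq 4\kappa\sqrt{1-p_t}\,\sqrt{\mathsf{a}}$ and $\|B\|_2\leq 4\kappa\sqrt{p_t(1-p_t)}\,\sqrt{\mathsf{b}}$, whence $(\|A\|_2+\|B\|_2)^2\leq 16\kappa^2\big[(1-p_t)+p_t(1-p_t)\big](\mathsf{a}+\mathsf{b})\leq 16\kappa^2\hat{F}_t(\bw;z)$ since $(1-p_t)(1+p_t)\leq1$. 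The paper's smoothness-plus-Lemma~\ref{lem:self-bounding} argument avoids this entirely.
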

\begin{proof}
The inequality $\hat{F}_t(\bw;z)\geq0$ follows directly from the Schwartz's inequality:
$$
\hat{F}_t(\bw;z)\geq 2p_t(1-p_t)\bw^\top\big(v_t-u_t\big)+p_t(1-p_t)\big(\bw^\top\big(v_t-u_t\big)\big)^2+p_t(1-p_t)\geq 0.
$$
For any $\bw$ and $\tilde{\bw}$, we have
\begin{multline*}
\big\|\hat{F}'_t(\bw;z)-\hat{F}'_t(\tilde{\bw};z)\big\|_2
\leq2(1-p_t)\big\|(x-u_t)(x-u_t)^\top(\bw-\tilde{\bw})\big\|_2\ibb_{[y=1]}\\+
2p_t\big\|(x-v_t)(x-v_t)^\top(\bw-\tilde{\bw})\big\|_2\ibb_{[y=-1]}+\\
2p_t(1-p_t)\big\|\big(v_t-u_t\big)\big(v_t-u_t\big)^\top(\bw-\tilde{\bw})\big\|_2
\leq 8\kappa^2\|\bw-\tilde{\bw}\|_2,
\end{multline*}
where in the last inequality we have used the definition of $\kappa$.

Therefore, it follows from the self-bounding property of non-negative smooth functions (Lemma \ref{lem:self-bounding}) that
$
\|\hat{F}'_t(\bw;z)\|_2^2\leq 16\kappa^2\hat{F}_t(\bw;z).
$
This establishes \eqref{self-bounding-F}.

It is clear that the Hessian matrix of $\hat{F}_t(\bw;z)$ is
$$2(1-p_t)\big(x-u_t\big)\big(x-u_t\big)^\top\ibb_{[y=1]}+2p_t\big(x-v_t\big)\big(x-v_t\big)^\top\ibb_{[y=-1]}+2p_t(1-p_t)\big(v_t-u_t\big)\big(v_t-u_t\big)^\top,$$
which is a semi-positive definite matrix. Therefore, $\hat{F}_t(\cdot;z)$ is a convex function for any $z$.
The proof is complete.
\end{proof}

\subsection{One-step Progress Inequality with Useful Corollaries\label{sec:osp}}
Our theoretical analysis roots its foundation on the following one-step progress inequality measuring how the iterate would change after a single iteration of \eqref{SAUC}. 
\begin{lemma}\label{lem:osp}
  Let $\{\bw_t\}_t$ be produced by \eqref{SAUC}. If Assumption \ref{ass:self-bounding} holds, then for any $\bw\in\rbb^d$ we have
  \begin{multline}\label{osp}
    \|\bw_{t+1}-\bw\|_2^2 - \|\bw-\bw_t\|_2^2 \leq 2\eta_t\big\langle\bw-\bw_t,\hat{F}'_t(\bw_t;z_t)\rangle+2\eta_t\big(\Omega(\bw)-\Omega(\bw_{t})\big)\\
  -\eta_t\sigma_\Omega\|\bw-\bw_{t+1}\|_2^2+2\eta_t^2\big(C_1\hat{F}_t(\bw_t;z_t)+C_1\Omega(\bw_t)+A_2\big).
  \end{multline}
\end{lemma}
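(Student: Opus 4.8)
The plan is to exploit the first-order optimality condition of the strongly convex proximal subproblem defining $\bw_{t+1}$. Since the map $\bw\mapsto\eta_t\langle \bw-\bw_t,\hat{F}'_t(\bw_t;z_t)\rangle+\eta_t\Omega(\bw)+\tfrac12\|\bw-\bw_t\|_2^2$ is minimized at $\bw_{t+1}$, there is a subgradient $\Omega'(\bw_{t+1})\in\partial\Omega(\bw_{t+1})$ with $\bw_t-\bw_{t+1}=\eta_t\big(\hat{F}'_t(\bw_t;z_t)+\Omega'(\bw_{t+1})\big)$. First I would substitute this identity into the cosine rule
\[
\|\bw_{t+1}-\bw\|_2^2-\|\bw_t-\bw\|_2^2=-\|\bw_t-\bw_{t+1}\|_2^2+2\langle \bw_t-\bw_{t+1},\,\bw-\bw_{t+1}\rangle,
\]
which rewrites the left-hand side of \eqref{osp} as $-\|\bw_t-\bw_{t+1}\|_2^2+2\eta_t\langle \hat{F}'_t(\bw_t;z_t),\bw-\bw_{t+1}\rangle+2\eta_t\langle \Omega'(\bw_{t+1}),\bw-\bw_{t+1}\rangle$.

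Next I would dispose of the subgradient term using the $\sigma_\Omega$-strong convexity of $\Omega$, namely $\langle \Omega'(\bw_{t+1}),\bw-\bw_{t+1}\rangle\le \Omega(\bw)-\Omega(\bw_{t+1})-\tfrac{\sigma_\Omega}{2}\|\bw-\bw_{t+1}\|_2^2$; this is exactly what produces the term $-\eta_t\sigma_\Omega\|\bw-\bw_{t+1}\|_2^2$ in \eqref{osp}. The remaining terms are still anchored at $\bw_{t+1}$, so I would re-center them at $\bw_t$ by writing $\bw-\bw_{t+1}=(\bw-\bw_t)+(\bw_t-\bw_{t+1})$ and $\Omega(\bw)-\Omega(\bw_{t+1})=\big(\Omega(\bw)-\Omega(\bw_t)\big)+\big(\Omega(\bw_t)-\Omega(\bw_{t+1})\big)$. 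This isolates the three target terms $2\eta_t\langle\bw-\bw_t,\hat{F}'_t(\bw_t;z_t)\rangle+2\eta_t\big(\Omega(\bw)-\Omega(\bw_t)\big)-\eta_t\sigma_\Omega\|\bw-\bw_{t+1}\|_2^2$ and leaves a residual $R:=-\|\bw_t-\bw_{t+1}\|_2^2+2\eta_t\langle \hat{F}'_t(\bw_t;z_t),\bw_t-\bw_{t+1}\rangle+2\eta_t\big(\Omega(\bw_t)-\Omega(\bw_{t+1})\big)$ that must be controlled by the last term of \eqref{osp}.

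The final step bounds $R$. Using convexity of $\Omega$ to replace $\Omega(\bw_t)-\Omega(\bw_{t+1})$ by $\langle \Omega'(\bw_t),\bw_t-\bw_{t+1}\rangle$ and then Young's inequality $2\eta_t\langle \hat{F}'_t(\bw_t;z_t)+\Omega'(\bw_t),\bw_t-\bw_{t+1}\rangle\le \eta_t^2\|\hat{F}'_t(\bw_t;z_t)+\Omega'(\bw_t)\|_2^2+\|\bw_t-\bw_{t+1}\|_2^2$, the $+\|\bw_t-\bw_{t+1}\|_2^2$ cancels the $-\|\bw_t-\bw_{t+1}\|_2^2$, so $R\le \eta_t^2\|\hat{F}'_t(\bw_t;z_t)+\Omega'(\bw_t)\|_2^2\le 2\eta_t^2\|\hat{F}'_t(\bw_t;z_t)\|_2^2+2\eta_t^2\|\Omega'(\bw_t)\|_2^2$. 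Now I would invoke the self-bounding estimate $\|\hat{F}'_t(\bw_t;z_t)\|_2^2\le 16\kappa^2\hat{F}_t(\bw_t;z_t)$ from Lemma \ref{lem:self-bounding-F} together with Assumption \ref{ass:self-bounding}, $\|\Omega'(\bw_t)\|_2^2\le A_1\Omega(\bw_t)+A_2$, and finally use $C_1=\max\{A_1,16\kappa^2\}$ to fold both coefficients into $C_1$, giving $R\le 2\eta_t^2\big(C_1\hat{F}_t(\bw_t;z_t)+C_1\Omega(\bw_t)+A_2\big)$, which is exactly the last term of \eqref{osp}.

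The main obstacle is not any single estimate but the bookkeeping of the negative quadratic $-\|\bw_t-\bw_{t+1}\|_2^2$: it must be preserved through the re-centering step and then spent precisely to absorb the cross term via Young's inequality, since otherwise the residual cannot be made $O(\eta_t^2)$. A secondary subtlety is that the strong-convexity inequality has to be applied at $\bw_{t+1}$ (not $\bw_t$) so as to generate the correct $-\eta_t\sigma_\Omega\|\bw-\bw_{t+1}\|_2^2$ term, whereas the plain convexity inequality is applied at $\bw_t$ when bounding $R$; keeping these two uses straight is what makes the constants line up with $C_1=\max\{A_1,16\kappa^2\}$.
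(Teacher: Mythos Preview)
Your proposal is correct and follows essentially the same route as the paper's proof: optimality condition of the proximal step, strong convexity of $\Omega$ at $\bw_{t+1}$ to produce the $-\eta_t\sigma_\Omega\|\bw-\bw_{t+1}\|_2^2$ term, convexity of $\Omega$ at $\bw_t$ to handle $\Omega(\bw_t)-\Omega(\bw_{t+1})$, Young's inequality to absorb the cross term into $-\|\bw_t-\bw_{t+1}\|_2^2$, and finally Lemma~\ref{lem:self-bounding-F} together with Assumption~\ref{ass:self-bounding} and $C_1=\max\{A_1,16\kappa^2\}$. The only cosmetic differences are that the paper starts from the expansion $\|\bw_{t+1}-\bw\|_2^2=\langle\bw_{t+1}-\bw,\bw_{t+1}-\bw_t\rangle+\langle\bw_{t+1}-\bw,\bw_t-\bw\rangle$ rather than your cosine rule, and it applies Young's inequality to the two inner products separately rather than to their sum followed by $(a+b)^2\le 2a^2+2b^2$; both routes land on the identical bound \eqref{osp-4}.
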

\begin{proof}
According to the first-order optimality condition in \eqref{SAUC}, we get
\begin{equation}\label{first-order}
  \eta_t\hat{F}'_t(\bw_t;z_t)+\eta_t\Omega'(\bw_{t+1})+\bw_{t+1}-\bw_t=0,
\end{equation}
from which we derive
\begin{align}
  \|\bw_{t+1}-\bw\|_2^2 &= \langle\bw_{t+1}-\bw,\bw_{t+1}-\bw_t+\bw_t-\bw\rangle\notag\\
  & = -\eta_t\big\langle\bw_{t+1}-\bw,\hat{F}'_t(\bw_t;z_t)\rangle+\eta_t\langle\bw-\bw_{t+1},\Omega'(\bw_{t+1})\rangle+\langle\bw_{t+1}-\bw,\bw_t-\bw\rangle.\label{osp-1}
\end{align}
It follows from the definition of $\sigma_\Omega$ that
\begin{align}\label{osp-2}
  & \langle\bw-\bw_{t+1},\Omega'(\bw_{t+1})\rangle \leq \Omega(\bw)-\Omega(\bw_{t+1})-2^{-1}\sigma_\Omega\|\bw-\bw_{t+1}\|_2^2\notag\\
  & = \Omega(\bw)-\Omega(\bw_{t})+\Omega(\bw_{t})-\Omega(\bw_{t+1})-2^{-1}\sigma_\Omega\|\bw-\bw_{t+1}\|_2^2\notag\\
  & \leq \Omega(\bw)-\Omega(\bw_{t})+\langle\bw_t-\bw_{t+1},\Omega'(\bw_t)\rangle-2^{-1}\sigma_\Omega\big(\|\bw-\bw_{t+1}\|_2^2+\|\bw_t-\bw_{t+1}\|_2^2\big).
\end{align}
It can be directly checked that
\[
  \langle\bw_{t+1}-\bw,\bw_t-\bw\rangle=\frac{1}{2}\Big(\|\bw-\bw_t\|_2^2+\|\bw-\bw_{t+1}\|_2^2-\|\bw_t-\bw_{t+1}\|^2\Big).
\]
Plugging the above identity and \eqref{osp-2} back into \eqref{osp-1}, we derive
\begin{multline}\label{osp-3}
  \|\bw_{t+1}-\bw\|_2^2  \leq  \eta_t\big\langle\bw-\bw_t+\bw_t-\bw_{t+1},\hat{F}'_t(\bw_t;z_t)\rangle+\eta_t\Omega(\bw)-\eta_t\Omega(\bw_{t})+\eta_t\langle\bw_t-\bw_{t+1},\Omega'(\bw_t)\rangle\\
  -2^{-1}\eta_t\sigma_\Omega\|\bw-\bw_{t+1}\|_2^2+\frac{1}{2}\Big(\|\bw-\bw_t\|_2^2+\|\bw-\bw_{t+1}\|_2^2-\|\bw_t-\bw_{t+1}\|^2\Big).
\end{multline}
According to the Schwartz's inequality, we know
\[
\eta_t\big\langle\bw_t-\bw_{t+1},\hat{F}'_t(\bw_t;z_t)\rangle+\eta_t\langle\bw_t-\bw_{t+1},\Omega'(\bw_t)\rangle\leq \frac{1}{2}\|\bw_t-\bw_{t+1}\|_2^2+\eta_t^2\|\hat{F}'_t(\bw_t;z_t)\|_2^2+\eta_t^2\|\Omega'(\bw_t)\|_2^2.
\]
Plugging the above inequality back into \eqref{osp-3} gives
\begin{multline}\label{osp-4}
  \|\bw_{t+1}-\bw\|_2^2 - \|\bw-\bw_t\|_2^2 \leq 2\eta_t\big\langle\bw-\bw_t,\hat{F}'_t(\bw_t;z_t)\rangle+2\eta_t\big(\Omega(\bw)-\Omega(\bw_{t})\big)\\
  -\eta_t\sigma_\Omega\|\bw-\bw_{t+1}\|_2^2+2\eta_t^2\|\hat{F}'_t(\bw_t;z_t)\|_2^2+2\eta_t^2\|\Omega'(\bw_t)\|_2^2.
\end{multline}
The stated bound then follows from Lemma \ref{lem:self-bounding-F}, Assumption \ref{ass:self-bounding} and the definition of $C_1$.
The proof is complete.
\end{proof}

Based on Lemma \ref{lem:osp}, we can derive several useful inequalities collected in the following corollary.
Eq. \eqref{boundness-a} provides a general bound on the norm of iterates in terms of step sizes.
Eqs. \eqref{boundness-b} and \eqref{boundness-c} show how the accumulation of function values can be controlled by step sizes,
which, according to Lemma \ref{lem:self-bounding-F} and Assumption \ref{ass:self-bounding}, in turn give useful estimates on $\sum_{k=1}^{t}\eta_k^2\big(\|\hat{F}_k'(\bw_k,z_k)\|_2^2+\|\Omega'(\bw_k)\|_2^2\big)$
and $\sum_{k=1}^{t}\big(\|\hat{F}_k'(\bw_k,z_k)\|_2^2+\|\Omega'(\bw_k)\|_2^2\big)$ required to handle in convergence analysis.
\begin{corollary}\label{lem:boundness}
  Let $\{\bw_t\}_t$ be produced by \eqref{SAUC}. Suppose $\eta_t\leq (2C_1)^{-1}$ and Assumption \ref{ass:self-bounding} holds. Let
  $C_4:=C_1^{-1}A_2+2^{-1}$.
  Then
  \begin{equation}\label{boundness-a}
    \|\bw_{t+1}\|_2^2\leq C_4\sum_{k=1}^{t}\eta_k.
  \end{equation}
  Furthermore, if $\eta_{t+1}\leq \eta_t$, then
  \begin{equation}\label{boundness-b}
    \sum_{k=1}^{t}\eta_k^2\big(\hat{F}_k(\bw_k;z_k)+\Omega(\bw_k)\big)\leq C_4\sum_{k=1}^{t}\eta_k^2
  \end{equation}
  and
  \begin{equation}\label{boundness-c}
      \sum_{k=1}^t\big(\hat{F}_k(\bw_k;z_k)+\Omega(\bw_k)\big)\leq C_4t+C_4\eta_t^{-1}\sum_{k=1}^t\eta_k.
  \end{equation}
\end{corollary}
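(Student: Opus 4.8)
The plan is to specialize the one-step progress inequality \eqref{osp} to $\bw=0$ and then exploit the convexity and nonnegativity of $\hat{F}_t$ from Lemma \ref{lem:self-bounding-F} to collapse everything into a single ``master'' recursion from which all three estimates follow. Setting $\bw=0$ in \eqref{osp} and using $\Omega(0)=0$ gives
\[
\|\bw_{t+1}\|_2^2-\|\bw_t\|_2^2\leq-2\eta_t\langle\bw_t,\hat{F}'_t(\bw_t;z_t)\rangle-2\eta_t\Omega(\bw_t)-\eta_t\sigma_\Omega\|\bw_{t+1}\|_2^2+2\eta_t^2\big(C_1\hat{F}_t(\bw_t;z_t)+C_1\Omega(\bw_t)+A_2\big).
\]
I would bound the first-order term by the convexity of $\hat{F}_t(\cdot;z_t)$: the subgradient inequality gives $-\langle\bw_t,\hat{F}'_t(\bw_t;z_t)\rangle\leq\hat{F}_t(0;z_t)-\hat{F}_t(\bw_t;z_t)$, while a direct substitution into the definition of $\hat{F}_t$ yields $\hat{F}_t(0;z_t)=p_t(1-p_t)\leq 1/4$. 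Dropping the nonpositive term $-\eta_t\sigma_\Omega\|\bw_{t+1}\|_2^2$ and collecting all occurrences of $\hat{F}_t(\bw_t;z_t)+\Omega(\bw_t)$, their net coefficient is $-2\eta_t+2C_1\eta_t^2$, which the step-size restriction $\eta_t\leq(2C_1)^{-1}$ forces to be at most $-\eta_t$. Since $\hat{F}_t(\bw_t;z_t)+\Omega(\bw_t)\geq 0$, this produces the master inequality
\[
\|\bw_{t+1}\|_2^2-\|\bw_t\|_2^2+\eta_t\big(\hat{F}_t(\bw_t;z_t)+\Omega(\bw_t)\big)\leq 2\eta_t p_t(1-p_t)+2A_2\eta_t^2\leq C_4\eta_t,
\]
where the last step uses $p_t(1-p_t)\leq 1/4$, $\eta_t\leq(2C_1)^{-1}$ and $C_4=C_1^{-1}A_2+2^{-1}$.

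All three claims then reduce to summing this recursion appropriately. For \eqref{boundness-a} I would discard the nonnegative middle term and sum the telescoping inequality from $1$ to $t$, using $\bw_1=0$. For \eqref{boundness-b} I would multiply the master inequality by $\eta_t$ and sum; the nontrivial point is the weighted telescoping sum $\sum_{k=1}^t\eta_k\big(\|\bw_k\|_2^2-\|\bw_{k+1}\|_2^2\big)$, which I would handle by summation by parts, rewriting it as $\eta_1\|\bw_1\|_2^2+\sum_{k=2}^t(\eta_k-\eta_{k-1})\|\bw_k\|_2^2-\eta_t\|\bw_{t+1}\|_2^2$. Every term here is nonpositive because $\bw_1=0$, $\eta_k\leq\eta_{k-1}$ and $\|\bw_k\|_2^2\geq 0$, so this sum is $\leq 0$ and only the $C_4\sum_k\eta_k^2$ contribution survives.

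For \eqref{boundness-c} I would instead divide the master inequality by $\eta_t$ before summing, leaving $\sum_{k=1}^t\eta_k^{-1}\big(\|\bw_k\|_2^2-\|\bw_{k+1}\|_2^2\big)+C_4t$. The same summation by parts rewrites the first sum as $\sum_{k=2}^t(\eta_k^{-1}-\eta_{k-1}^{-1})\|\bw_k\|_2^2-\eta_t^{-1}\|\bw_{t+1}\|_2^2$ (the $k=1$ boundary term vanishes since $\bw_1=0$). Now the coefficients $\eta_k^{-1}-\eta_{k-1}^{-1}$ are nonnegative, so I would bound each $\|\bw_k\|_2^2$ using the already-established \eqref{boundness-a} by $C_4\sum_{i=1}^t\eta_i$, whereupon the remaining coefficients telescope to $\eta_t^{-1}-\eta_1^{-1}\leq\eta_t^{-1}$ and deliver the claimed $C_4\eta_t^{-1}\sum_{i=1}^t\eta_i$. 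The main obstacle is thus the careful summation-by-parts bookkeeping with nonincreasing step sizes: the sign of $\eta_k-\eta_{k-1}$ for \eqref{boundness-b} and of $\eta_k^{-1}-\eta_{k-1}^{-1}$ for \eqref{boundness-c} is precisely what makes each estimate go through, and the two cases use it in opposite directions.
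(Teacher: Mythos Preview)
Your proposal is correct and follows essentially the same route as the paper's proof: specialize \eqref{osp} to $\bw=0$, use convexity of $\hat{F}_t$ and $\hat{F}_t(0;z_t)=p_t(1-p_t)\leq 1/4$ to obtain the key per-step inequality, then sum (for \eqref{boundness-a}), multiply by $\eta_t$ and telescope using $\eta_{t+1}\leq\eta_t$ (for \eqref{boundness-b}), and divide by $\eta_t$, use summation by parts and feed in \eqref{boundness-a} (for \eqref{boundness-c}). The only cosmetic difference is that for \eqref{boundness-b} the paper replaces $-\eta_k\|\bw_{k+1}\|_2^2$ by $-\eta_{k+1}\|\bw_{k+1}\|_2^2$ before summing, whereas you do the equivalent summation by parts afterwards; both arguments are the same in substance.
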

\begin{proof}
Eq. \eqref{osp} together with the convexity of $\hat{F}_t$ established in Lemma \ref{lem:self-bounding-F} implies
\begin{multline}\label{boundedness-0}
  \|\bw_{t+1}-\bw\|_2^2 - \|\bw_t-\bw\|_2^2 \leq 2\eta_t\big(\hat{F}_t(\bw;z_t)-\hat{F}_t(\bw_t;z_t)\big)+2\eta_t\big(\Omega(\bw)-\Omega(\bw_{t})\big)\\
  -\eta_t\sigma_\Omega\|\bw-\bw_{t+1}\|_2^2+2\eta_t^2\big(C_1\hat{F}_t(\bw_t;z_t)+C_1\Omega(\bw_t)+A_2\big).
\end{multline}
Taking $\bw=0$ in \eqref{boundedness-0} and using $\hat{F}_t(0;z_t)=p_t(1-p_t), \Omega(0)=0$, we get
\begin{align}
  & \|\bw_{t+1}\|_2^2 - \|\bw_t\|_2^2\notag\\
  &\leq 2\eta_t\big(\hat{F}_t(0;z_t)-\hat{F}_t(\bw_t;z_t)\big)+2\eta_t\big(\Omega(0)-\Omega(\bw_{t})\big)
  +2\eta_t^2\big(C_1\hat{F}_t(\bw_t;z_t)+C_1\Omega(\bw_t)+A_2\big)\notag\\
  & \leq 2\eta_t(C_1\eta_t-1)\big(\hat{F}_t(\bw_t;z_t)+\Omega(\bw_t)\big)+\eta_t/2+2\eta_t^2A_2\label{boundedness-1}\\
  & \leq  \eta_t/2+C_1^{-1}A_2\eta_t,\notag
\end{align}
where the last inequality follows from $\hat{F}_t(\bw_t;z_t)+\Omega(\bw_t)\geq0$ due to Lemma \ref{lem:self-bounding-F} and the assumption $0\leq\eta_t\leq (2C_1)^{-1}$.
Taking a summation of the above inequality then shows
\[
  \|\bw_{t+1}\|_2^2\leq \big(C_1^{-1}A_2+2^{-1}\big)\sum_{k=1}^{t}\eta_k.
\]
This establishes \eqref{boundness-a}.
Plugging the assumption $\eta_t\leq(2C_1)^{-1}$ into \eqref{boundedness-1} gives
$$
\eta_t\big(\hat{F}_t(\bw_t;z_t)+\Omega(\bw_t)\big)\leq \|\bw_t\|_2^2-\|\bw_{t+1}\|_2^2+\eta_t/2+C_1^{-1}A_2\eta_t.
$$
Multiplying both sides by $\eta_t$, we derive
\begin{align*}
\eta_t^2\big(\hat{F}_t(\bw_t;z_t)+\Omega(\bw_t)\big) &\leq \eta_t\|\bw_t\|_2^2-\eta_t\|\bw_{t+1}\|_2^2+\eta_t^2/2+\eta_t^2C_1^{-1}A_2\\
& \leq \eta_t\|\bw_t\|_2^2-\eta_{t+1}\|\bw_{t+1}\|_2^2+\eta_t^2/2+\eta_t^2C_1^{-1}A_2,
\end{align*}
where we have used the assumption $\eta_{t+1}\leq \eta_t$.
Taking a summation of the above inequality further yields
\[
  \sum_{k=1}^{t}\eta_k^2\big(\hat{F}_k(\bw_k;z_k)+\Omega(\bw_k)\big) \leq \big(C_1^{-1}A_2+2^{-1}\big)\sum_{k=1}^{t}\eta_k^2
\]
We now turn to \eqref{boundness-c}.
Plugging the assumption $\eta_t\leq(2C_1)^{-1}$ into \eqref{boundedness-1} and multiplying both sides by $\eta_t^{-1}$, we derive
\[
  \hat{F}'(\bw_t;z_t)+\Omega(\bw_t)\leq \eta_t^{-1}\big(\|\bw_t\|_2^2-\|\bw_{t+1}\|_2^2\big)+2^{-1}+C_1^{-1}A_2.
\]
Taking a summation of the above inequality implies
\begin{align*}
  \sum_{k=1}^t\big(\hat{F}_k(\bw_k;z_k)+\Omega(\bw_k)\big)&\leq tC_4+\sum_{k=1}^t\eta_k^{-1}\big(\|\bw_k\|_2^2-\|\bw_{k+1}\|_2^2\big)\\
  & \leq tC_4+\sum_{k=2}^t\|\bw_k\|_2^2(\eta_k^{-1}-\eta_{k-1}^{-1})+\eta_1^{-1}\|\bw_1\|_2^2\\
  &\leq tC_4+\max_{1\leq\tilde{k}\leq t}\|\bw_{\tilde{k}}\|_2^2\sum_{k=2}^t(\eta_k^{-1}-\eta_{k-1}^{-1})\\
  & \leq tC_4+C_4\eta_t^{-1}\sum_{k=1}^t\eta_k,
\end{align*}
where the last inequality is due to \eqref{boundness-a}.
The proof is complete.
\end{proof}

\subsection{Approximation of Stochastic Gradients\label{sec:approx}}
The implementation of SPAUC requires to approximate the unbiased stochastic gradient $\widetilde{F}'(\bw_t;z_t)$ by replacing the involved
$p,\ebb[x|y=1],\ebb[x|y=-1]$ with their empirical counterparts.
The following lemma gives a quantitative measure on the accuracy of this approximation.
\begin{lemma}\label{lem:approx}
  Let $\delta\in(0,1)$. For any $t\in\nbb$, the following inequality holds with probability at least $1-\delta$
  $$
     \big\| \widetilde{F}'(\bw_t;z_t)-\hat{F}_t'(\bw_t;z_t)\big\|_2\leq \frac{2\kappa^2\big(2+\sqrt{2\log(3/\delta)}\big)}{\sqrt{t}}\Big(C_p\|\bw_t\|_2+3\Big),
  $$
  where $C_p=8\max\{p^{-1},(1-p)^{-1}\}+24$.
\end{lemma}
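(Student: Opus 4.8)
The plan is to exploit the parallel algebraic structure of $\widetilde{F}'(\bw;z)$ and $\hat{F}_t'(\bw;z)$: the two gradients have identical form and differ only in that $\hat{F}_t'$ uses the empirical quantities $(p_t,u_t,v_t)$ from \eqref{approx} in place of the population quantities $(p,\bar{u},\bar{v})$, where I write $\bar{u}=\ebb[x|y=1]$ and $\bar{v}=\ebb[x'|y'=-1]$. First I would subtract the two expressions of the form \eqref{grad-hf} and group the result according to its four structural blocks (the $\ibb_{[y=1]}$ quadratic block built from $x-u_t$, the $\ibb_{[y=-1]}$ quadratic block built from $x-v_t$, the constant vector $2p_t(1-p_t)(v_t-u_t)$, and the cross quadratic block $2p_t(1-p_t)(v_t-u_t)(v_t-u_t)^\top\bw$). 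In each block I would telescope the perturbation using the identities $(1-p)M_{\bar u}-(1-p_t)M_{u_t}=(p_t-p)M_{\bar u}+(1-p_t)(M_{\bar u}-M_{u_t})$ and $aa^\top-bb^\top=a(a-b)^\top+(a-b)b^\top$, so that each resulting summand isolates exactly one of the differences $p_t-p$, $u_t-\bar{u}$, $v_t-\bar{v}$. Since $\|x\|_2,\|\bar{u}\|_2,\|u_t\|_2\le\kappa$ (and likewise for $v$), every bounded factor collapses to a constant multiple of $\kappa$ or $\kappa^2$, and the three blocks linear in $\bw$ contribute terms proportional to $\|\bw_t\|_2$ while the constant vector contributes a pure constant. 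This yields a deterministic bound of the schematic form
\[
\big\|\widetilde{F}'(\bw_t;z_t)-\hat{F}_t'(\bw_t;z_t)\big\|_2 \le c\,\kappa\big(|p_t-p|+\|u_t-\bar{u}\|_2+\|v_t-\bar{v}\|_2\big)\big(\kappa\|\bw_t\|_2+1\big).
\]

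Next I would install high-probability $O(1/\sqrt{t})$ bounds on the three empirical errors. For $|p_t-p|$, since $p_t$ is an average of $t$ i.i.d.\ Bernoulli$(p)$ variables, Hoeffding's inequality gives $|p_t-p|=O\big(\sqrt{\log(3/\delta)/t}\big)$ with probability at least $1-\delta/3$. For $u_t-\bar{u}$ I would write
\[
u_t-\bar{u}=\Big(\textstyle\sum_{i=0}^{t-1}\ibb_{[y_i=1]}\Big)^{-1}\sum_{i=0}^{t-1}(x_i-\bar{u})\ibb_{[y_i=1]},
\]
whose numerator is a sum of i.i.d.\ zero-mean vectors of norm at most $2\kappa$, because $\ebb[(x-\bar{u})\ibb_{[y=1]}]=\bar{u}p-\bar{u}p=0$. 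Applying McDiarmid's inequality to the norm of this sum, together with the Jensen/variance bound $\ebb\|\sum\cdot\|_2\le 2\kappa\sqrt{t}$ on its mean, controls the numerator by a quantity of order $\kappa\sqrt{t}\big(2+\sqrt{2\log(3/\delta)}\big)$ with probability $1-\delta/3$, while lower-bounding the denominator $\sum\ibb_{[y_i=1]}$ near $tp$ contributes the weight $p^{-1}$; the symmetric argument for $\|v_t-\bar{v}\|_2$ contributes $(1-p)^{-1}$. This is exactly where the $1/\sqrt{t}$ rate, the factor $2+\sqrt{2\log(3/\delta)}$, and the constant $C_p=8\max\{p^{-1},(1-p)^{-1}\}+24$ are born.

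Finally I would take a union bound over the three failure events to obtain, with probability at least $1-\delta$, simultaneous control of $|p_t-p|$, $\|u_t-\bar{u}\|_2$ and $\|v_t-\bar{v}\|_2$, and substitute these into the deterministic bound of the first step; collecting the $\kappa^2$ factors, separating the $\|\bw_t\|_2$ contributions from the constant one, and absorbing the $p^{-1},(1-p)^{-1}$ weights reproduces the claimed inequality with the stated $C_p$.

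I expect the main obstacle to be the ratio structure of $u_t$ and $v_t$. Controlling $\|u_t-\bar{u}\|_2$ requires simultaneously bounding the vector numerator from above by a vector-valued concentration inequality (taking care that the centering uses the population mean $\bar{u}$, so the summands are genuinely zero-mean) and bounding the scalar denominator from below (so that an anomalously small empirical class count cannot inflate the ratio), and then verifying that this denominator is precisely what produces the $p^{-1}$ and $(1-p)^{-1}$ weights without degrading the $1/\sqrt{t}$ rate. Tracking the constants through this division while keeping each failure probability budgeted at $\delta/3$ is the delicate part; the block-by-block algebra of the first step is routine by comparison.
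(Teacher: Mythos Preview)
Your plan is correct and closely parallels the paper's proof: the same block-by-block subtraction of \eqref{grad-hf} from the corresponding expression for $\widetilde F'$, the same telescoping identities $aa^\top-bb^\top=a(a-b)^\top+(a-b)b^\top$, and a three-event union bound at level $\delta/3$. The one substantive difference is how the ratio $u_t-\bar u$ (and symmetrically $v_t-\bar v$) is handled, which is precisely the obstacle you flag.

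You propose to bound the numerator $\sum_i(x_i-\bar u)\ibb_{[y_i=1]}$ from above and the random denominator $\sum_i\ibb_{[y_i=1]}=tp_t$ from below. This can be made to work, but converting $1/p_t$ into the deterministic $1/p$ costs either an additional event or a restriction to $t$ large enough that $|p_t-p|<cp$, and then the stated constant $C_p$ does not fall out cleanly. The paper sidesteps the random denominator entirely via the identity
\[
p(\bar u-u_t)=\bigl(p\bar u-p_tu_t\bigr)+(p_t-p)u_t=\Bigl(\ebb\bigl[x\ibb_{[y=1]}\bigr]-\tfrac1t\sum_{i=0}^{t-1}x_i\ibb_{[y_i=1]}\Bigr)+(p_t-p)u_t,
\]
so that the three Hoeffding events are placed on the \emph{uncentered} quantities $\ibb_{[y=1]}$, $x\ibb_{[y=1]}$, $x\ibb_{[y=-1]}$ rather than on the class-conditional means. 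Dividing through by the deterministic $p$ gives $\|\bar u-u_t\|_2\le 2\kappa(2+\sqrt{2\log(3/\delta)})/(p\sqrt t)$ directly, with no lower bound on $p_t$ needed; the analogous step for $v_t$ produces the $(1-p)^{-1}$. This is exactly what makes the constant $C_p=8\max\{p^{-1},(1-p)^{-1}\}+24$ emerge without slack.
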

Before proving Lemma \ref{lem:approx}, we need to introduce the following preliminary lemma. For a matrix $A$, we denote by $\|A\|_{\mathrm{op}}$ the operator norm of $A$, i.e., $\|A\|_{\mathrm{op}}=\sup_{\|\bw\|_2=1}\|A\bw\|_2$. For any $u,v\in\rbb^d$, there holds
\begin{equation}\label{op-nm}
  \|uv^\top\|_{\mathrm{op}}\leq \|u\|_2\|v\|_2.
\end{equation}
\begin{lemma}\label{lem:concentration-auc}
  Let $\delta\in(0,1)$. For any $t\in\nbb$, with probability at least $1-\delta$ the following inequalities hold simultaneously for all $\bw\in\rbb^d$
  \begin{gather}
    |p-p_t|\leq \big(2+\sqrt{2\log(3/\delta)}\big)/\sqrt{t},\label{concentration-auc-a}\\
    \|\ebb[x|y=1]-u_t\|_2\leq \frac{2\kappa(2+\sqrt{2\log(3/\delta)})}{p\sqrt{t}},\label{concentration-auc-b}\\
    \|\ebb[x|y=-1]-v_t\|_2\leq \frac{2\kappa(2+\sqrt{2\log(3/\delta)})}{(1-p)\sqrt{t}},\label{concentration-auc-c}
  \end{gather}
  \begin{multline}
    \Big\|(1-p_t)(x-u_t)(x-u_t)^\top-(1-p)\big(x-\ebb[x|y=1]\big)\big(x-\ebb[x|y=1]\big)^\top\Big\|_{\mathrm{op}} \\
    \leq \frac{8\kappa^2\big(2+\sqrt{2\log(3/\delta)}\big)}{p\sqrt{t}},\label{concentration-auc-e}
  \end{multline}
  \begin{multline}
    \Big\|p_t(x-v_t)(x-v_t)^\top-p\big(x-\ebb[x|y=-1]\big)\big(x-\ebb[x|y=-1]\big)^\top\Big\|_{\mathrm{op}} \\
    \leq \frac{8\kappa^2\big(2+\sqrt{2\log(3/\delta)}\big)}{(1-p)\sqrt{t}},\label{concentration-auc-f}
  \end{multline}
  \begin{gather}
    \big\|p(1-p)\big(\ebb[x'|y'=-1]-\ebb[x|y=1]\big)-p_t(1-p_t)(v_t-u_t)\big\|_2\leq 3\kappa\big(2+\sqrt{2\log(3/\delta)}\big)/\sqrt{t},\label{concentration-auc-d}
  \end{gather}
  \begin{multline}
    \Big\|p(1-p)\big(\ebb[x'|y'=-1]-\ebb[x|y=1]\big)\big(\ebb[x'|y'=-1]-\ebb[x|y=1]\big)^\top-\\
    p_t(1-p_t)(v_t-u_t)(v_t-u_t)^\top\Big\|_{\mathrm{op}}
    \leq\frac{24\kappa^2\big(2+\sqrt{2\log(3/\delta)}\big)}{\sqrt{t}}.
    \label{concentration-auc-g}
  \end{multline}
\end{lemma}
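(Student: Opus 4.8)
The plan is to reduce all seven inequalities to three fundamental concentration estimates for i.i.d.\ averages, and then obtain the remaining (compound) bounds by purely deterministic algebra. Since the reserved samples $(x_i,y_i)_{i=0}^{t-1}$ are i.i.d., I would first control three basic quantities: (i) the scalar average $p_t$, (ii) the centered vector average $\frac1t\sum_{i=0}^{t-1}(x_i-\ebb[x|y=1])\ibb_{[y_i=1]}$, and (iii) its negative-class analogue $\frac1t\sum_{i=0}^{t-1}(x_i-\ebb[x|y=-1])\ibb_{[y_i=-1]}$. Each is an average of independent, mean-zero, \emph{bounded} summands (bounded by $1$ in the scalar case and by a constant multiple of $\kappa$ in the vector cases, using $\sup_x\|x\|_2\le\kappa$ and hence $\|\ebb[x|y=\pm1]\|_2\le\kappa$, $\|u_t\|_2,\|v_t\|_2\le\kappa$). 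I would establish each with confidence $1-\delta/3$ and take a union bound, which is exactly where the $\log(3/\delta)$ enters; crucially, \emph{all} of \eqref{concentration-auc-a}--\eqref{concentration-auc-g} are then derived on this single good event, so the logarithmic factor never grows.

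For the three basic estimates I would use a bounded-difference (McDiarmid) argument: replacing one sample changes the average by $O(M/t)$ (with $M$ of order $1$ or $\kappa$), so $g=\|\frac1t\sum(\cdot)\|_2$ concentrates around $\ebb[g]$ with deviation $O\big(M\sqrt{\log(3/\delta)/t}\big)$, while $\ebb[g]\le(\ebb\|\frac1t\sum(\cdot)\|_2^2)^{1/2}\le M/\sqrt t$ by independence and mean-zero-ness. This yields \eqref{concentration-auc-a} directly, and yields the \emph{numerator} bounds needed for \eqref{concentration-auc-b} and \eqref{concentration-auc-c}. To pass from the numerator to \eqref{concentration-auc-b}, I would use the identity $u_t-\ebb[x|y=1]=\big(\frac1t\sum_i(x_i-\ebb[x|y=1])\ibb_{[y_i=1]}\big)/p_t$ and lower-bound the random denominator $p_t$ via \eqref{concentration-auc-a}, which produces the $1/p$ factor; \eqref{concentration-auc-c} is symmetric and produces $1/(1-p)$.

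The four matrix/compound bounds \eqref{concentration-auc-e}--\eqref{concentration-auc-g} require no new randomness; I would obtain them from \eqref{concentration-auc-a}--\eqref{concentration-auc-c} by inserting intermediate terms, applying the triangle inequality, and using $\|uv^\top\|_{\mathrm{op}}\le\|u\|_2\|v\|_2$ from \eqref{op-nm}. For \eqref{concentration-auc-e}, writing $a=x-u_t$ and $b=x-\ebb[x|y=1]$, I would split
\[
(1-p_t)aa^\top-(1-p)bb^\top=(1-p_t)(aa^\top-bb^\top)+(p-p_t)bb^\top,
\]
use $aa^\top-bb^\top=a(a-b)^\top+(a-b)b^\top$ with $a-b=\ebb[x|y=1]-u_t$ controlled by \eqref{concentration-auc-b}, and bound $\|a\|_2,\|b\|_2\le2\kappa$, $1-p_t\le1$, and $|p-p_t|$ by \eqref{concentration-auc-a}; inequality \eqref{concentration-auc-f} is the exact mirror image. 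For \eqref{concentration-auc-d} and \eqref{concentration-auc-g} the scalar prefactor is $p(1-p)$ against $p_t(1-p_t)$, whose gap I would control through the factorization $|p(1-p)-p_t(1-p_t)|=|p-p_t|\,|1-p-p_t|\le|p-p_t|$, and then repeat the same insert-and-telescope decomposition on the vector part $v_t-u_t$ (resp.\ on the rank-one term $(v_t-u_t)(v_t-u_t)^\top$), again invoking \eqref{concentration-auc-b}, \eqref{concentration-auc-c} and the boundedness by $\kappa$.

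The main obstacle is the vector concentration in the second step: pinning down clean explicit constants for $\ebb[g]$ and the McDiarmid deviation simultaneously, and in particular handling the ratio $u_t=S_t/p_t$ whose denominator is itself random, so that the $1/p$ factor is legitimate only after conditioning on the event \eqref{concentration-auc-a}. Everything afterward is bookkeeping: the telescoping decompositions for the operator-norm bounds are routine, but one must be careful to reuse only the three events \eqref{concentration-auc-a}--\eqref{concentration-auc-c} so that the union bound---and hence the $\log(3/\delta)$ factor---remains fixed across all seven inequalities rather than accumulating.
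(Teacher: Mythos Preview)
Your overall plan---establish three basic concentration events with confidence $1-\delta/3$ each, then derive all seven inequalities deterministically on their intersection---is exactly the paper's structure. The differences are in which three events you pick and how you derive \eqref{concentration-auc-b}, \eqref{concentration-auc-c}, \eqref{concentration-auc-d}, \eqref{concentration-auc-g}, and these differences cost you the stated constants.

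\textbf{The random denominator in \eqref{concentration-auc-b}--\eqref{concentration-auc-c}.} Your proposed route divides by the random $p_t$ and then ``lower-bounds $p_t$ via \eqref{concentration-auc-a}''. But \eqref{concentration-auc-a} only gives $p_t\ge p-(2+\sqrt{2\log(3/\delta)})/\sqrt t$, which is not $\ge c\,p$ uniformly in $t$; so you do not recover the clean $1/p$ factor, and for small $t$ the argument may even divide by something nonpositive. The paper avoids the random denominator entirely: it concentrates the \emph{uncentered} averages $\frac1t\sum_i x_i\ibb_{[y_i=1]}$ (summands bounded by $\kappa$, not $2\kappa$) and uses the deterministic identity
\[
p\big(\ebb[x|y=1]-u_t\big)=\big(\ebb[x\ibb_{[y=1]}]-\tfrac1t\textstyle\sum_i x_i\ibb_{[y_i=1]}\big)+(p_t-p)\,u_t,
\]
so both pieces are bounded by $\kappa(2+\sqrt{2\log(3/\delta)})/\sqrt t$ and one divides by the \emph{deterministic} $p$. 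This is what produces the exact $2\kappa/p$ in \eqref{concentration-auc-b}.

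\textbf{The $p$-free constants in \eqref{concentration-auc-d} and \eqref{concentration-auc-g}.} Your telescoping via \eqref{concentration-auc-b}--\eqref{concentration-auc-c} necessarily imports factors $1/p$ and $1/(1-p)$, so you cannot obtain the stated $3\kappa$ (resp.\ $24\kappa^2$). The paper bypasses \eqref{concentration-auc-b}--\eqref{concentration-auc-c} here and instead exploits the identities $p(1-p)\ebb[x'|y'=-1]=p\,\ebb[x'\ibb_{[y'=-1]}]$ and $p_t(1-p_t)v_t=p_t\cdot\frac1t\sum_i x_i\ibb_{[y_i=-1]}$ (and their positive-class analogues). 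The difference in \eqref{concentration-auc-d} then splits into terms of size $\kappa|p-p_t|$ and $p_t\cdot\kappa(2+\sqrt{2\log(3/\delta)})/\sqrt t$, $(1-p_t)\cdot\kappa(2+\sqrt{2\log(3/\delta)})/\sqrt t$, yielding exactly $3\kappa(2+\sqrt{2\log(3/\delta)})/\sqrt t$ with no $p$ in the constant; \eqref{concentration-auc-g} is then built on \eqref{concentration-auc-d}.

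Your argument would still yield a valid lemma with $p$-dependent constants, which suffices for the downstream Lemma~\ref{lem:approx} (whose constant $C_p$ already depends on $p$). But it does not prove Lemma~\ref{lem:concentration-auc} as stated. The two fixes above---concentrate the uncentered class-indicator sums and use the multiply-by-$p$ identity---close both gaps.
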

\begin{proof}
  According to Lemma \ref{lem:hoeffding}, with probability at least $1-\delta$ the following three inequalities hold simultaneously
  \begin{gather}
    |p-p_t|\leq \frac{2+\sqrt{2\log(3/\delta)}}{\sqrt{t}},\notag\\
    \big\|\ebb[x\ibb_{[y=1]}]-\frac{1}{t}\sum_{i=0}^{t-1}x_i\ibb_{[y_i=1]}\big\|_2\leq \frac{(2+\sqrt{2\log(3/\delta)})\kappa}{\sqrt{t}},\label{concentration-auc-01}\\
    \big\|\ebb[x\ibb_{[y=-1]}]-\frac{1}{t}\sum_{i=0}^{t-1}x_i\ibb_{[y_i=-1]}\big\|_2\leq \frac{(2+\sqrt{2\log(3/\delta)})\kappa}{\sqrt{t}}.\label{concentration-auc-02}
  \end{gather}

  \medskip
  We now prove \eqref{concentration-auc-b}. According to \eqref{approx}, we know
  \begin{align*}
    \|\ebb[x|y=1]-u_t\|_2&=\frac{1}{p}\Big\|p\ebb[x|y=1]-p_tu_t+p_tu_t-pu_t\Big\|_2 \\
    &\leq\frac{1}{p}\Big\|\ebb[x\ibb_{[y=1]}]-\frac{1}{t}\sum_{i=0}^{t-1}x_i\ibb_{[y_i=1]}\Big\|_2+\frac{\|u_t\|_2}{p}|p_t-p|.
  \end{align*}
  Then we can apply \eqref{concentration-auc-a} and \eqref{concentration-auc-01} to derive \eqref{concentration-auc-b} with probability at least $1-\delta$.

  \medskip

  Eq. \eqref{concentration-auc-c} can be proved in a similar manner and we omit the proof for brevity.

  \medskip

  We now show \eqref{concentration-auc-e}. It is clear that
  \begin{multline*}
    (1-p_t)(x-u_t)(x-u_t)^\top-(1-p)\big(x-\ebb[x|y=1]\big)\big(x-\ebb[x|y=1]\big)^\top=\big((1-p_t)-(1-p)\big)(x-u_t)(x-u_t)^\top\\+
    (1-p)(x-u_t)(x-u_t)^\top-(1-p)(x-u_t)\big(x-\ebb[x|y=1]\big)^\top\\
    +(1-p)(x-u_t)\big(x-\ebb[x|y=1]\big)^\top-(1-p)\big(x-\ebb[x|y=1]\big)\big(x-\ebb[x|y=1]\big)^\top,
  \end{multline*}
  from which and \eqref{op-nm} we derive
  \begin{multline*}
    \Big\|(1-p_t)(x-u_t)(x-u_t)^\top-(1-p)\big(x-\ebb[x|y=1]\big)\big(x-\ebb[x|y=1]\big)^\top\Big\|_{\mathrm{op}} \leq |p-p_t|\Big\|\big(x-u_t\big)\big(x-u_t\big)^\top\Big\|_{\mathrm{op}}\\
    +(1-p)\Big\|(x-u_t)\big(\ebb[x|y=1]-u_t\big)^\top\Big\|_{\mathrm{op}}+(1-p)\Big\|\big(\ebb[x|y=1]-u_t\big)\big(x-\ebb[x|y=1]\big)^\top\Big\|_{\mathrm{op}}\\
    \leq 4\kappa^2|p-p_t|+4\kappa(1-p)\|\ebb[x|y=1]-u_t\|_2.
  \end{multline*}
  This together with \eqref{concentration-auc-a} and \eqref{concentration-auc-b} shows \eqref{concentration-auc-e} with probability at least $1-\delta$.


  \medskip

  Eq. \eqref{concentration-auc-f} can be proved in a similar manner and we omit the proof for brevity.

  \medskip

  We now prove \eqref{concentration-auc-d}.
  It is clear
  \begin{multline*}
  p(1-p)\big(\ebb[x'|y'=-1]-\ebb[x|y=1]\big)-p_t(1-p_t)(v_t-u_t)=p\ebb[x'\ibb_{[y'=-1]}]
  -(1-p)\ebb[x\ibb_{[y=1]}]\\-\frac{p_t}{t}\Big(\sum_{i=0}^{t-1}x_i\ibb_{[y_i=-1]}\Big)+
  \frac{1-p_t}{t}\Big(\sum_{i=0}^{t-1}x_i\ibb_{[y_i=1]}\Big),
  \end{multline*}
  from which we derive
  \begin{align*}
    & \big\|p(1-p)\big(\ebb[x'|y'=-1]-\ebb[x|y=1]\big)-p_t(1-p_t)(v_t-u_t)\big\|_2  \\
    & \leq \Big\|p\ebb[x'\ibb_{[y'=-1]}]-\frac{p_t}{t}\Big(\sum_{i=0}^{t-1}x_i\ibb_{[y_i=-1]}\Big)\Big\|_2+\Big\|(1-p)\ebb[x\ibb_{[y=1]}]-
    \frac{(1-p_t)}{t}\Big(\sum_{i=0}^{t-1}x_i\ibb_{[y_i=1]}\Big)\Big\|_2 \\
    & \leq 2\kappa|p\!-\!p_t|+p_t\Big\|\ebb[x'\ibb_{[y'\!=\!-1]}]-\frac{1}{t}\Big(\sum_{i=0}^{t-1}x_i\ibb_{[y_i\!=\!-1]}\Big)\Big\|_2+(1\!-\!p_t)\Big\|\ebb[x\ibb_{[y=1]}]-\frac{1}{t}\Big(\sum_{i=0}^{t-1}x_i\ibb_{[y_i=1]}\Big)\Big\|_2,
  \end{align*}
  where we have used $p\ebb[x'\ibb_{[y'\!=\!-1]}]=(p-p_t)\ebb[x'\ibb_{[y'\!=\!-1]}]+p_t\ebb[x'\ibb_{[y'\!=\!-1]}]$.
  We can then apply \eqref{concentration-auc-a}, \eqref{concentration-auc-01} and \eqref{concentration-auc-02} to derive the bound \eqref{concentration-auc-d} with probability $1-\delta$.

  \medskip

  We now prove \eqref{concentration-auc-g}. It is clear
  \begin{align*}
    p(1-p)&\big(\ebb[x'|y'=-1]-\ebb[x|y=1]\big)\big(\ebb[x'|y'=-1]-\ebb[x|y=1]\big)^\top-p_t(1-p_t)(v_t-u_t)(v_t-u_t)^\top \\
    &=p(1-p)\big(\ebb[x'|y'=-1]-\ebb[x|y=1]\big)\Big(\big(\ebb[x'|y'=-1]-\ebb[x|y=1]\big)^\top-\big(v_t-u_t\big)^\top\Big)\\
    &+p(1-p)\Big(\big(\ebb[x'|y'=-1]-\ebb[x|y=1]\big)-\big(v_t-u_t\big)\Big)\big(v_t-u_t\big)^\top\\
    &+\big(p(1-p)-p_t(1-p_t)\big)\big(v_t-u_t\big)\big(v_t-u_t\big)^\top,
  \end{align*}
  from which and  \eqref{op-nm} it follows that
  \begin{multline*}
    \Big\|p(1-p)\big(\ebb[x'|y'=-1]-\ebb[x|y=1]\big)\big(\ebb[x'|y'=-1]-\ebb[x|y=1]\big)^\top-p_t(1-p_t)(v_t-u_t)(v_t-u_t)^\top\Big\|_{\mathrm{op}} \\
    \leq 4p(1-p)\kappa\Big\|\big(\ebb[x'|y'=-1]-\ebb[x|y=1]\big)-\big(v_t-u_t\big)\Big\|_2+4\kappa^2|p-p_t||p+p_t-1|.
  \end{multline*}
  Furthermore, there holds that
  \begin{align*}
    & p(1-p)\Big\|\big(\ebb[x'|y'=-1]-\ebb[x|y=1]\big)-\big(v_t-u_t\big)\Big\|_2 \\
    & \leq \Big\|p(1\!-\!p)\big(\ebb[x'|y'\!=\!-1]-\ebb[x|y\!=\!1]\big)-p_t(1\!-\!p_t)(v_t\!-\!u_t)\Big\|_2+\big|p_t(1\!-\!p_t)-p(1\!-\!p)\|\|v_t\!-\!u_t\|_2 \\
    & \leq \Big\|p(1-p)\big(\ebb[x'|y'=-1]-\ebb[x|y=1]\big)-p_t(1-p_t)(v_t-u_t)\Big\|_2+2\kappa|p-p_t||p+p_t-1|.
  \end{align*}
  Combining the above two inequalities and \eqref{concentration-auc-a}, \eqref{concentration-auc-d} together then imply the stated inequality \eqref{concentration-auc-g} with probability $1-\delta$.
  The proof is complete.
\end{proof}

\begin{proof}[Proof of Lemma \ref{lem:approx}]
  It follows from \eqref{tf} that
  \begin{multline}\label{grad-tf}
  \widetilde{F}'(\bw;z)= 2(1-p)\big(x-\ebb[\tilde{x}|\tilde{y}=1]\big)\big(x-\ebb[\tilde{x}|\tilde{y}=1]\big)^\top\bw\ibb_{[y=1]}+\\
  2p(x-\ebb[\tilde{x}|\tilde{y}=-1])(x-\ebb[\tilde{x}|\tilde{y}=-1])^\top\bw\ibb_{[y=-1]}
  +2p(1-p)\big(\ebb[x'|y'=-1]-\ebb[x|y=1]\big)\\
  +2p(1-p)\big(\ebb[x'|y'=-1]-\ebb[x|y=1]\big)\big(\ebb[x'|y'=-1]-\ebb[x|y=1]\big)^\top\bw.
  \end{multline}
  This together with \eqref{grad-hf} shows that
  \begin{multline*}
    \big\| \widetilde{F}'(\bw_t;z_t)-\hat{F}_t'(\bw_t;z_t)\big\|_2 \leq 2\big\|p(1-p)\big(\ebb[x'|y'=-1]-\ebb[x|y=1]\big)-p_t(1-p_t)(v_t-u_t)\big\|_2\\
    + 2\Big\|(1-p)\big(x-\ebb[\tilde{x}|\tilde{y}=1]\big)\big(x-\ebb[\tilde{x}|\tilde{y}=1]\big)^\top-(1-p_t)(x-u_t)(x-u_t)^\top\Big\|_{\mathrm{op}}\|\bw_t\|_2\ibb_{[y_t=1]} \\
    + 2\Big\|p\big(x-\ebb[\tilde{x}|\tilde{y}=-1]\big)\big(x-\ebb[\tilde{x}|\tilde{y}=-1]\big)^\top-p_t(x-v_t)(x-v_t)^\top\Big\|_{\mathrm{op}}\|\bw_t\|_2\ibb_{[y_t=-1]}\\
    + 2\Big\|p(1\!-\!p)\big(\ebb[x'|y'\!=\!-1]-\ebb[x|y\!=\!1]\big)\big(\ebb[x'|y'\!=\!-1]-\ebb[x|y\!=\!1]\big)^\top-p_t(1\!-\!p_t)(v_t\!-\!u_t)(v_t\!-\!u_t)^\top\Big\|_{\mathrm{op}}\|\bw_t\|_2.
  \end{multline*}
  We can apply \eqref{concentration-auc-e}, \eqref{concentration-auc-f}, \eqref{concentration-auc-d} and \eqref{concentration-auc-g} to control each term of the above inequality and derive the stated inequality with probability $1-\delta$.
  The proof is complete.
\end{proof}

\subsection{Boundedness of Iterates}\label{sec:boundedness}

In this subsection, we prove Lemma \ref{lem:boundness} on the almost boundedness of iterates. To this aim, we first establish a recursive inequality showing how $\|\bw_{t+1}-\bw_1^*\|_2^2$ can be controlled by $\|\bw_k-\bw_1^*\|_2^2$ for $k=1,\ldots,t$. Our basic idea is to control $\|\bw_{t+1}-\bw_1^*\|_2^2$ by
\begin{equation}\label{idea-recursive}
  \O(1)\Big(\sum_{k=1}^{t}\eta_k(\phi(\bw_1^*)-\phi(\bw_k))+\sum_{k=1}^{t}\eta_k\big\langle\bw_1^*-\bw_k,\hat{F}_k'(\bw_k;z_k)-\widetilde{F}'(\bw_k;z_k)\big\rangle+\sum_{k=1}^{t}\xi_k\Big),
\end{equation}
where $\{\xi_k\}_k$ is a martingale difference sequence defined in \eqref{bound-p-5}. We apply Lemma \ref{lem:approx} to control $\sum_{k=1}^{t}\eta_k\big\langle\bw_1^*-\bw_k,\hat{F}_k'(\bw_k;z_k)-\widetilde{F}'(\bw_k;z_k)\big\rangle$, and apply Part (b) of Lemma \ref{lem:martingale} to show
with high probability that
$\sum_{k=1}^{t}\xi_k\leq\sum_{k=1}^{t}\eta_k(\phi(\bw_k)-\phi(\bw_1^*))+\widetilde{C}\sum_{k=1}^{t}\eta_k^2\|\bw_k-\bw_1^*\|_2^2$ for a constant $\widetilde{C}>0$. The key observation is that the partial variance $\sum_{k=1}^{t}\eta_k(\phi(\bw_k)-\phi(\bw_1^*))$ can be cancelled out by the term
$\sum_{k=1}^{t}\eta_k(\phi(\bw_1^*)-\phi(\bw_k))$ in \eqref{idea-recursive}.

\begin{proposition}\label{prop:bound-p}
Let $\{\bw_t\}_t$ be produced by \eqref{SAUC} with $\eta_t\leq (2C_1)^{-1}$ and $\eta_{t+1}\leq\eta_t$. We suppose Assumption \ref{ass:self-bounding} holds,
$$
C_5=\sup_k\eta_k\sum_{j=1}^{k-1}\eta_j<\infty,\
C_6=\eta_1\sup_z\widetilde{F}(\bw_1^*,z)+2p(1-p)\Big(7\kappa^2C_4C_5+\eta_1\big(1+2\kappa\|\bw_1^*\|_2+2\kappa^2\|\bw_1^*\|_2^2\big)\Big).
$$
Then for any $\delta\in(0,1)$ and $\rho=\min\{1, (2C_1)^{-1}(\eta_1\|\bw_1^*\|^2_2+C_4C_5)^{-1}C_6\}$, the following inequality holds with probability at least $1-\delta$ simultaneously for all $t=1,\ldots,T$
\begin{multline*}
  \|\bw_{t+1}-\bw_1^*\|_2^2 \leq  \|\bw_1^*\|_2^2+\sum_{k=1}^{t}\frac{2C_{k,\delta}\eta_k(\|\bw_k-\bw_1^*\|_2^2+1)}{\sqrt{k}}
  +\frac{\phi(\bw_1^*)}{C_4C_5}\sum_{k=1}^{t}\eta_k^2\|\bw_k-\bw_1^*\|_2^2\\
  +\frac{2C_6\log(2T/\delta)}{\rho} +2(C_1C_4+A_2)\sum_{k=1}^{t}\eta_k^2,
\end{multline*}
where we introduce the constant
  $$
    C_{k,\delta}=2\kappa^2\big(2+\sqrt{2\log(12k^2/\delta)}\big)\max\big\{C_p+1,4^{-1}(C_p\|\bw_1^*\|_2+3)^2\big\}.
  $$
\end{proposition}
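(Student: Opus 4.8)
The plan is to iterate the one-step progress inequality \eqref{osp} with the comparator $\bw=\bw_1^*$, telescope, and then tame the resulting stochastic terms with the high-probability tools already developed. Applying \eqref{osp} with $\bw=\bw_1^*$ and discarding the nonpositive term $-\eta_k\sigma_\Omega\|\bw_1^*-\bw_{k+1}\|_2^2$, I would split the stochastic inner product as $\langle\bw_1^*-\bw_k,\hat{F}_k'(\bw_k;z_k)\rangle=\langle\bw_1^*-\bw_k,\widetilde{F}'(\bw_k;z_k)\rangle+\langle\bw_1^*-\bw_k,\hat{F}_k'(\bw_k;z_k)-\widetilde{F}'(\bw_k;z_k)\rangle$. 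For the first (unbiased) piece I invoke the convexity of $\widetilde{F}(\cdot;z_k)$ from Proposition \ref{lem:unbiased} to get $\langle\bw_1^*-\bw_k,\widetilde{F}'(\bw_k;z_k)\rangle\le\widetilde{F}(\bw_1^*;z_k)-\widetilde{F}(\bw_k;z_k)$, and then center around its conditional mean $f(\bw_1^*)-f(\bw_k)$. Setting $\xi_k:=\eta_k\big[(\widetilde{F}(\bw_1^*;z_k)-\widetilde{F}(\bw_k;z_k))-(f(\bw_1^*)-f(\bw_k))\big]$, which is a martingale difference for the filtration generated by $z_0,\ldots,z_{k-1}$ since $\bw_k$ is measurable with respect to it and $\EX[\widetilde{F}(\bw;z_k)]=f(\bw)$, and merging with the regularizer difference $\Omega(\bw_1^*)-\Omega(\bw_k)$, turns the unbiased contribution into the $\phi$-gap $2\eta_k(\phi(\bw_1^*)-\phi(\bw_k))$. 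Telescoping from $k=1$ to $t$ and using $\bw_1=0$ then recovers exactly the decomposition \eqref{idea-recursive}.

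I would next dispose of the two terms requiring no new randomness. The curvature term $2\sum_{k=1}^t\eta_k^2\big(C_1\hat{F}_k(\bw_k;z_k)+C_1\Omega(\bw_k)+A_2\big)$ is bounded directly by \eqref{boundness-b} of Corollary \ref{lem:boundness}, giving $2(C_1C_4+A_2)\sum_{k=1}^t\eta_k^2$, which is the final term of the claim. For the approximation error $2\eta_k\langle\bw_1^*-\bw_k,\hat{F}_k'(\bw_k;z_k)-\widetilde{F}'(\bw_k;z_k)\rangle$, I apply Lemma \ref{lem:approx} at step $k$ with confidence $\delta/(4k^2)$, so that $\log(3/\delta_k)=\log(12k^2/\delta)$ and $\sum_k\delta/(4k^2)<\delta/2$, followed by Cauchy--Schwarz. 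Writing $\|\bw_k\|_2\le\|\bw_k-\bw_1^*\|_2+\|\bw_1^*\|_2$ and using $ab\le a^2+b^2/4$ on $\|\bw_k-\bw_1^*\|_2(C_p\|\bw_k\|_2+3)$ collapses the product into $\max\{C_p+1,4^{-1}(C_p\|\bw_1^*\|_2+3)^2\}(\|\bw_k-\bw_1^*\|_2^2+1)$, which is precisely the factor encoded in $C_{k,\delta}$ and produces the summand $2C_{k,\delta}\eta_k(\|\bw_k-\bw_1^*\|_2^2+1)/\sqrt{k}$.

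The crux is the martingale sum $2\sum_{k=1}^t\xi_k$, handled through Part (b) of Lemma \ref{lem:martingale} (a Freedman/Bernstein-type bound). The decisive estimate is on the conditional second moment $\EX_{k-1}[\xi_k^2]$: exploiting that $\widetilde{F}(\cdot;z)$ is a nonnegative convex quadratic obeying a self-bounding property mirroring Lemma \ref{lem:self-bounding-F}, I bound $(\widetilde{F}(\bw_1^*;z)-\widetilde{F}(\bw_k;z))^2$ by a term of order $\eta_k$ times the function-value gap plus a residual of order $\eta_k^2\|\bw_k-\bw_1^*\|_2^2$, with the almost-boundedness of the iterates from Corollary \ref{lem:boundness} (through $\eta_k\sum_{j<k}\eta_j\le C_5$ and $\|\bw_k\|_2^2\le C_4\sum_{j<k}\eta_j$) supplying the constants that assemble into $C_6$ and the coefficient $\phi(\bw_1^*)/(C_4C_5)$. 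The martingale inequality, applied for each $t\le T$ at confidence $\delta/(2T)$ and combined by a union bound over $t$, then gives with the remaining half of the budget a bound carrying a $\log(2T/\delta)$ factor, whose leading piece is a positive multiple of $\sum_{k\le t}\eta_k(\phi(\bw_k)-\phi(\bw_1^*))$. The free parameter is taken precisely as $\rho=\min\{1,(2C_1)^{-1}(\eta_1\|\bw_1^*\|_2^2+C_4C_5)^{-1}C_6\}$ so that this positive term is offset by the favourable (nonpositive) $\phi$-gap already carried from \eqref{idea-recursive}, leaving only the advertised residual $\frac{\phi(\bw_1^*)}{C_4C_5}\sum_{k\le t}\eta_k^2\|\bw_k-\bw_1^*\|_2^2$ together with the deviation term $\frac{2C_6\log(2T/\delta)}{\rho}$. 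A union bound over the two failure events (total probability at most $\delta$) then delivers the stated inequality.

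I expect the genuine difficulty to be this variance-offsetting step. One must show that the dominant part of $\EX_{k-1}[\xi_k^2]$ scales like $\eta_k$ (rather than merely $\eta_k^2$) times $\phi(\bw_k)-\phi(\bw_1^*)$, so that after multiplication by $\rho$ it is entirely absorbed by the negative $\phi$-gap, and to do so while keeping $\rho$ large enough that $\frac{C_6\log(2T/\delta)}{\rho}$ remains of the right order. This is the \emph{offsetting conditional variance} device, and it succeeds only because of the intrinsic quadratic, self-bounding structure of $\widetilde{F}$; the accompanying constant bookkeeping --- reconciling a single $\rho$ with the step-size budget via $C_1,C_4,C_5$ --- is what pins down $C_6$ and the exact form of $\rho$.
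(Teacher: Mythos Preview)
Your overall strategy is right: telescope \eqref{osp} at $\bw=\bw_1^*$, use Corollary~\ref{lem:boundness} for the $\eta_k^2$ curvature sum, use Lemma~\ref{lem:approx} at confidence $\delta/(4k^2)$ for the bias term, and handle the centered piece with Part~(b) of Lemma~\ref{lem:martingale} via the variance-offsetting trick. Where you diverge from the paper is in the \emph{definition of the martingale increment}, and this difference is not cosmetic.

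You first apply convexity of $\widetilde F(\cdot;z_k)$ to pass from the gradient inner product to a function-value difference, and then center, setting
\[
\xi_k=\eta_k\big[(\widetilde F(\bw_1^*;z_k)-\widetilde F(\bw_k;z_k))-(f(\bw_1^*)-f(\bw_k))\big].
\]
The paper instead keeps the gradient inner product and centers it directly,
\[
\xi_k=\eta_k\big\langle\bw_1^*-\bw_k,\widetilde F'(\bw_k;z_k)-\nabla f(\bw_k)\big\rangle,
\]
using convexity of $f$ (not of $\widetilde F$) to turn $\langle\bw_1^*-\bw_k,\nabla f(\bw_k)\rangle$ into $f(\bw_1^*)-f(\bw_k)$, and reserving convexity of $\widetilde F$ only for the \emph{magnitude} bound on $\xi_k$. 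The reason this matters is the variance step. With the paper's $\xi_k$ one gets, via Cauchy--Schwarz and the self-bounding inequality $\|\widetilde F'(\bw;z)\|_2^2\le C_1\widetilde F(\bw;z)$,
\[
\ebb_{z_k}[\xi_k^2]\le C_1\eta_k^2\,f(\bw_k)\,\|\bw_k-\bw_1^*\|_2^2,
\]
which, after writing $f(\bw_k)\le(\phi(\bw_k)-\phi(\bw_1^*))+\phi(\bw_1^*)$ and using $\eta_k\|\bw_k-\bw_1^*\|_2^2\le 2(\eta_1\|\bw_1^*\|_2^2+C_4C_5)$ on the first summand only, gives exactly the two pieces in the statement. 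With your $\xi_k$, the function-value difference carries an extra random Hessian quadratic $\tfrac12(\bw_1^*-\bw_k)^\top\nabla^2\widetilde F(\cdot;z_k)(\bw_1^*-\bw_k)$, and its second moment contributes a term of order $\kappa^4\eta_k^2\|\bw_k-\bw_1^*\|_2^4$. That residual cannot be reduced to either $\eta_k(\phi(\bw_k)-\phi(\bw_1^*))$ or $\eta_k^2\|\bw_k-\bw_1^*\|_2^2$ with the tools at hand; after the Bernstein step it would leave a term proportional to $\sum_k\eta_k\|\bw_k-\bw_1^*\|_2^2$ (or a quartic), which is not in the asserted inequality and would also break the self-improving argument in Theorem~\ref{thm:bound-prob}. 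Your sentence ``I bound $(\widetilde F(\bw_1^*;z)-\widetilde F(\bw_k;z))^2$ by a term of order $\eta_k$ times the function-value gap plus a residual of order $\eta_k^2\|\bw_k-\bw_1^*\|_2^2$'' asserts precisely the conclusion that fails for this choice of $\xi_k$.

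The fix is simple: do not apply convexity of $\widetilde F$ before centering. Decompose $\langle\bw_1^*-\bw_k,\hat F_k'\rangle$ into the three pieces $(\hat F_k'-\widetilde F')$, $(\widetilde F'-\nabla f)$, and $\nabla f$; treat the middle one as the martingale; use convexity of $f$ on the third; and reserve convexity of $\widetilde F$ (together with the explicit bound on $\|\nabla f(\bw_k)\|_2$ and \eqref{boundness-a}) for the uniform bound $\xi_k-\ebb_{z_k}[\xi_k]\le C_6$. Everything else in your plan --- the union bound over $k$ for Lemma~\ref{lem:approx}, the use of \eqref{boundness-b}, the choice of $\rho$ so that the $\phi$-gap variance contribution is exactly cancelled, and the union over $t\le T$ at cost $\log(2T/\delta)$ --- matches the paper.
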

\begin{proof}
  Taking $\bw=\bw_1^*$ in \eqref{osp} gives
  \begin{multline*}
    \|\bw_{t+1}-\bw_1^*\|_2^2 - \|\bw_t-\bw_1^*\|_2^2 \leq 2\eta_t\langle\bw_1^*-\bw_t,\hat{F}_t'(\bw_t,z_t)\rangle + 2\eta_t(\Omega(\bw_1^*)-\Omega(\bw_t)) \\
    + 2\eta_t^2\big(C_1\hat{F}_t(\bw_t;z_t)+C_1\Omega(\bw_t)+A_2\big).
  \end{multline*}
  Taking a summation of the above inequality gives ($\bw_1=0$)
  \begin{align}
     & \|\bw_{t+1}-\bw_1^*\|_2^2 - \|\bw_1^*\|_2^2 = \sum_{k=1}^{t}\big[\|\bw_{k+1}-\bw_1^*\|_2^2-\|\bw_k-\bw_1^*\|_2^2\big]
     \leq 2\sum_{k=1}^{t}\eta_k\langle\bw_1^*-\bw_k,\hat{F}_k'(\bw_k;z_k)\rangle\notag\\
     &+2\sum_{k=1}^{t}\eta_k(\Omega(\bw_1^*)-\Omega(\bw_k))+2\sum_{k=1}^{t}\eta_k^2\big(C_1\hat{F}_k(\bw_k;z_k)+C_1\Omega(\bw_k)+A_2\big)\notag\\
     & \leq 2\sum_{k=1}^{t}\eta_k\langle\bw_1^*-\bw_k,\hat{F}_k'(\bw_k;z_k)\rangle+2\sum_{k=1}^{t}\eta_k(\Omega(\bw_1^*)-\Omega(\bw_k))+2(C_1C_4+A_2)\sum_{k=1}^{t}\eta_k^2,\label{bound-p-1}
  \end{align}
  where the last inequality is due to \eqref{boundness-b}.
  We consider the following decomposition
  \begin{multline}\label{bound-p-2}
    \sum_{k=1}^{t}\eta_k\langle\bw_1^*-\bw_k,\hat{F}_k'(\bw_k;z_k)\rangle = \sum_{k=1}^{t}\eta_k\big\langle\bw_1^*-\bw_k,\hat{F}_k'(\bw_k;z_k)-\widetilde{F}'(\bw_k;z_k)\big\rangle \\
    + \sum_{k=1}^{t}\eta_k\big\langle\bw_1^*-\bw_k,\widetilde{F}'(\bw_k;z_k)-\nabla f(\bw_k)\big\rangle+\sum_{k=1}^{t}\eta_k\langle\bw_1^*-\bw_k,\nabla f(\bw_k)\rangle.
  \end{multline}
  For any $k\in\nbb$, by Lemma \ref{lem:approx} the following inequality holds with probability at least $1-\delta/(4k^2)$ 
  $$
  \big\| \widetilde{F}'(\bw_k;z_k)-\hat{F}_k'(\bw_k;z_k)\big\|_2\leq \frac{2\kappa^2\big(2+\sqrt{2\log(12k^2/\delta)}\big)}{\sqrt{k}}\big(C_p\|\bw_k\|_2+3\big),
  $$
  which together with union bounds and $\sum_{k=1}^{\infty}k^{-2}\leq2$ gives the following inequality with probability $1-\delta/2$ simultaneously for all $k=1,\ldots,\infty$
  \begin{multline}\label{bound-p-12}
  \big\|\widetilde{F}'(\bw_k;z_k)-\hat{F}_k'(\bw_k;z_k)\big\|_2\leq \frac{2\kappa^2\big(2+\sqrt{2\log(12k^2/\delta)}\big)}{\sqrt{k}}\big(C_p\|\bw_k-\bw_1^*\|_2+C_p\|\bw_1^*\|_2+3\big).
  \end{multline}
  It then follows that the following inequality holds with probability at least $1-\delta/2$ simultaneously for all $t=1,\ldots,\infty$
  \begin{align}
    &\sum_{k=1}^{t}\eta_k\big\langle\bw_1^*-\bw_k,\hat{F}_k'(\bw_k;z_k)-\widetilde{F}'(\bw;z_k)\big\rangle \leq \sum_{k=1}^{t}\eta_k\|\bw_k-\bw_1^*\|_2\big\|\hat{F}_k'(\bw_k;z_k)-\widetilde{F}'(\bw_k;z_k)\big\|_2 \notag\\
     &\leq 2\kappa^2\sum_{k=1}^{t}\eta_k\big(2+\sqrt{2\log(12k^2/\delta)}\big) \frac{C_p\|\bw_k-\bw_1^*\|_2^2+(C_p\|\bw_1^*\|_2+3)\|\bw_k-\bw_1^*\|_2}{\sqrt{k}}\notag\\
     & \leq \sum_{k=1}^{t}\eta_kC_{k,\delta}(\|\bw_k-\bw_1^*\|_2^2+1)/\sqrt{k},\label{bound-p-3}
  \end{align}
  where in the last step we have used the Schwartz's inequality 
  \[
    \big(3+C_p\|\bw_1^*\|_2\big)\|\bw_k-\bw_1^*\|_2\leq \|\bw_k-\bw_1^*\|_2^2+(3+C_p\|\bw_1^*\|_2)^2/4.
  \]
  It follows from the convexity of $f$ that
  \begin{equation}\label{bound-p-4}
    \sum_{k=1}^{t}\eta_k\langle\bw_1^*-\bw_k,\nabla f(\bw_k)\rangle \leq \sum_{k=1}^{t}\eta_k\big(f(\bw_1^*)-f(\bw_k)\big).
  \end{equation}
  We now control the last second term of \eqref{bound-p-2} with an application of a concentration inequality for a martingale difference sequence.
  Introduce a sequence of random variables
  \begin{equation}\label{bound-p-5}
    \xi_k:=\eta_k\big\langle\bw_1^*-\bw_k,\widetilde{F}'(\bw_k;z_k)-\nabla f(\bw_k)\big\rangle,\quad k\in\nbb.
  \end{equation}
  It follows from Proposition \ref{lem:unbiased} that $\ebb_{z_k}[\xi_k]=0$ and therefore $\{\xi_k\}_k$ is a martingale difference sequence.
  Analogous to Lemma \ref{lem:self-bounding-F}, we can show
  \begin{equation}\label{bound-p-6}
    \big\|\widetilde{F}'(\bw_k;z_k)\big\|_2^2\leq 16\kappa^2\widetilde{F}(\bw_k,z_k).
  \end{equation}
  Since $\ebb[(\xi-\ebb[\xi])^2]\leq\ebb[\xi^2]$ for any real-valued random variable $\xi$, it then follows that
  \begin{align*}
  &\ebb_{z_k}\Big[\big|\big\langle\bw_1^*-\bw_k,\widetilde{F}'(\bw_k;z_k)-\nabla f(\bw_k)\big\rangle\big|^2\Big] \leq \ebb_{z_k}\Big[\big|\big\langle \bw_1^*-\bw_k,\widetilde{F}'(\bw_k;z_k)\big\rangle\big|^2\Big] \\
  & \leq \|\bw_k-\bw_1^*\|^2_2\ebb_{z_k}\Big[\big\|\widetilde{F}'(\bw_k;z_k)\big\|_2^2\Big] \leq \|\bw_k-\bw_1^*\|^2_2\ebb_{z_k}\big[C_1\widetilde{F}(\bw_k,z_k)\big]
  = C_1f(\bw_k)\|\bw_k-\bw_1^*\|_2^2,
\end{align*}
where we have used the definition of $C_1$ and Proposition \ref{lem:unbiased}.
It then follows that
\begin{align*}
  & \sum_{k=1}^{t}\ebb_{z_k}\big[\big(\xi_k-\ebb_{z_k}[\xi_k]\big)^2\big]
  = \sum_{k=1}^{t}\eta_k^2\ebb_{z_k}\Big[\big|\big\langle\bw_1^*-\bw_k,\widetilde{F}'(\bw_k;z_k)-\nabla f(\bw_k)\big\rangle\big|^2\Big]\\
  & \leq \sum_{k=1}^{t}\eta_k^2\|\bw_k-\bw_1^*\|^2_2\big(C_1\phi(\bw_k)-C_1\phi(\bw_1^*)\big)+\sum_{k=1}^{t}\eta_k^2\|\bw_k-\bw_1^*\|^2_2C_1\phi(\bw_1^*).
\end{align*}
By \eqref{boundness-a}, $C_5=\sup_k\eta_k\sum_{j=1}^{k-1}\eta_j<\infty$ and $f(\bw)\leq\phi(\bw)$, we know
\[
  \eta_k^2\|\bw_k-\bw_1^*\|^2_2 \leq 2\eta_k^2 (\|\bw_k\|_2^2+\|\bw_1^*\|_2^2)\leq 2\eta_k\Big(\eta_k\|\bw_1^*\|_2^2+C_4\eta_k\sum_{j=1}^{k-1}\eta_j\Big)\leq 2\eta_k\big(\eta_1\|\bw_1^*\|_2^2+C_4C_5\big).
\]
Combining the above two inequalities together, we derive
\begin{multline}
  \sum_{k=1}^{t}\ebb_{z_k}\big[\big(\xi_k-\ebb_{z_k}[\xi_k]\big)^2\big]\leq \\
  2C_1\big(\eta_1\|\bw_1^*\|_2^2+C_4C_5\big)\sum_{k=1}^{t}\eta_k\big(\phi(\bw_k)-\phi(\bw_1^*))\big)+C_1\phi(\bw_1^*)\sum_{k=1}^{t}\eta_k^2\|\bw_k-\bw_1^*\|^2_2.\label{bound-p-7}
\end{multline}
According to the convexity of $\widetilde{F}$ established in Proposition \ref{lem:unbiased}, we know
\begin{align*}
   & \xi_k-\ebb_{z_k}[\xi_k] = \eta_k\big\langle\bw_1^*-\bw_k, \widetilde{F}'(\bw_k;z_k)\big\rangle + \eta_k\langle\bw_k-\bw_1^*,\nabla f(\bw_k)\rangle\\
   & \leq \eta_k\big[\widetilde{F}(\bw_1^*;z_k)-\widetilde{F}(\bw_k;z_k)\big] + \eta_k\big(\|\bw_k\|_2+\|\bw_1^*\|_2\big)\big(4p(1-p)\kappa+8p(1-p)\kappa^2\|\bw_k\|_2\big)\\ 
   & \leq \eta_k\widetilde{F}(\bw_1^*;z_k) + 4\eta_kp(1-p)\Big(\kappa\|\bw_k\|_2+\kappa\|\bw_1^*\|_2+2\kappa^2\|\bw_k\|_2^2+2\kappa^2\|\bw_k\|_2\|\bw_1^*\|_2\Big)\\
   & \leq \eta_k\widetilde{F}(\bw_1^*;z_k) + 2\eta_kp(1-p)\Big(\kappa^2\|\bw_k\|_2^2+1+2\kappa\|\bw_1^*\|_2+4\kappa^2\|\bw_k\|_2^2+2\kappa^2\|\bw_k\|_2^2+2\kappa^2\|\bw_1^*\|_2^2\Big)\\
   & \leq \eta_k\widetilde{F}(\bw_1^*;z_k) + 2p(1-p)\Big(7\kappa^2\eta_kC_4\sum_{j=1}^{k-1}\eta_j+\eta_k\big(1+2\kappa\|\bw_1^*\|_2+2\kappa^2\|\bw_1^*\|_2^2\big)\Big)\leq C_6,
\end{align*}
where we have used  the following inequality in the second inequality ($\nabla f(\bw)=2p(1-p)\ebb\big[(1-\bw^\top(x-x'))(x-x')|y=1,y'=-1\big]$)
\begin{equation}\label{bound-grad-f-w}
  \|\nabla f(\bw)\|_2 \leq 4p(1-p)\kappa+8p(1-p)\kappa^2\|\bw\|_2,\quad\forall\bw\in\rbb^d,
\end{equation}
\eqref{boundness-a} and $C_5=\sup_k\eta_k\sum_{j=1}^{k-1}\eta_j<\infty$ in the last inequality.
The above bounds on magnitudes and variances of $\xi_k$ together with Part (b) of Lemma \ref{lem:martingale} (see the Appendix) imply the following inequality with probability $1-\delta/2$
\begin{multline}
  \sum_{k=1}^{t}\xi_k \leq \frac{\rho}{C_6}\bigg(2C_1(\eta_1\|\bw_1^*\|_2^2+C_4C_5)\sum_{k=1}^{t}\eta_k\big(\phi(\bw_k)-\phi(\bw_1^*)\big)+C_1\phi(\bw_1^*)\sum_{k=1}^{t}\eta_k^2\|\bw_k-\bw_1^*\|^2_2\bigg) \\
    +\frac{C_6\log(2/\delta)}{\rho} \leq \sum_{k=1}^{t}\eta_k\big(\phi(\bw_k)-\phi(\bw_1^*)\big)+\frac{\phi(\bw_1^*)}{2C_4C_5}\sum_{k=1}^{t}\eta_k^2\|\bw_k-\bw_1^*\|_2^2+\frac{C_6\log(2/\delta)}{\rho},\label{bound-p-9}
\end{multline}
where we have used the inequality $2C_1\rho(\eta_1\|\bw_1^*\|_2^2+C_4C_5)\leq C_6$.
Plugging \eqref{bound-p-3}, \eqref{bound-p-4} and \eqref{bound-p-9} into \eqref{bound-p-2} gives the following inequality with probability $1-\delta$
\begin{multline*}
  \sum_{k=1}^{t}\eta_k\langle\bw_1^*-\bw_k,\hat{F}_k'(\bw_k;z_k)\rangle \leq  \sum_{k=1}^{t}C_{k,\delta}\eta_k(\|\bw_k-\bw_1^*\|_2^2+1)/\sqrt{k}+\sum_{k=1}^{t}\eta_k\big(f(\bw_1^*)-f(\bw_k)\big)\\
  +\sum_{k=1}^{t}\eta_k\big(\phi(\bw_k)-\phi(\bw_1^*)\big)+\frac{\phi(\bw_1^*)}{2C_4C_5}\sum_{k=1}^{t}\eta_k^2\|\bw_k-\bw_1^*\|_2^2+\frac{C_6\log(2/\delta)}{\rho}.
\end{multline*}
This together with \eqref{bound-p-1} shows the following inequality with probability $1-\delta$
\begin{multline*}
  \|\bw_{t+1}-\bw_1^*\|_2^2 \leq  \|\bw_1^*\|_2^2+\sum_{k=1}^{t}\frac{2C_{k,\delta}\eta_k(\|\bw_k-\bw_1^*\|_2^2+1)}{\sqrt{k}}
  +\frac{\phi(\bw_1^*)}{C_4C_5}\sum_{k=1}^{t}\eta_k^2\|\bw_k-\bw_1^*\|_2^2\\+\frac{2C_6\log(2/\delta)}{\rho}
  +2(C_1C_4+A_2)\sum_{k=1}^{t}\eta_k^2.
\end{multline*}
Note \eqref{bound-p-12} holds simultaneously for all $k=1,\ldots,\infty$. To derive the stated inequality for all $t=1,\ldots,T$, one needs to derive \eqref{bound-p-9} simultaneously for all $k=1,\ldots,T$.
This can be done by replacing $\log(2/\delta)$ in \eqref{bound-p-9} with $\log(2T/\delta)$.
The proof is complete.
\end{proof}

According to the assumption $\sum_{k=1}^{\infty}\eta_k^2<\infty$ and $\sum_{k=1}^{\infty}\eta_k\sqrt{\log k}/\sqrt{k}<\infty$, Proposition \ref{prop:bound-p} essentially implies that
\[
\max_{1\leq k\leq t}\|\bw_k-\bw_1^*\|_2^2\leq \frac{1}{2}\max_{1\leq k\leq t}\|\bw_k-\bw_1^*\|_2^2+\widetilde{C}\log\frac{1}{\delta}
\]
for a $\widetilde{C}>0$, from which we can derive an almost boundedness of $\{\bw_t\}_t$.
We will rigorously show this in the following proof.
\begin{proof}[Proof of Theorem \ref{thm:bound-prob}]
Introduce the set
\begin{multline*}
\Omega_T=\bigg\{(z_1,\ldots,z_T):
  \|\bw_{t+1}-\bw_1^*\|_2^2 \leq  \|\bw_1^*\|_2^2+\sum_{k=1}^{t}\frac{2C_{k,\delta}\eta_k(\|\bw_k-\bw_1^*\|_2^2+1)}{\sqrt{k}}+
  \\
  \frac{\phi(\bw_1^*)}{C_4C_5}\sum_{k=1}^{t}\eta_k^2\|\bw_k-\bw_1^*\|_2^2+\frac{2C_6\log(2T/\delta)}{\rho}
  +2(C_1C_4+A_2)\sum_{k=1}^{t}\eta_k^2\quad\text{for all }t=1,\ldots,T
\bigg\},
\end{multline*}
where $\rho$ is defined in Proposition \ref{prop:bound-p}. Proposition \ref{prop:bound-p} shows that $\text{Pr}(\Omega_T)\geq1-\delta$.
Since $\sum_{t=1}^{\infty}\eta_t\sqrt{\log t}/\sqrt{t}<\infty$ and $\sum_{t=1}^{\infty}\eta_t^2<\infty$, we can find a $t_2\in\nbb$ such that
\begin{equation}\label{bound-1}
  \sum_{k=t_2+1}^{\infty}\frac{2C_{k,\delta}\eta_k}{\sqrt{k}}<1/4\quad\text{and}\quad\sum_{k=t_2+1}^{\infty}\eta_k^2<\frac{C_4C_5}{4\phi(\bw_1^*)}.
\end{equation}
Conditioned on the event $\Omega_T$, we derive the following inequality for all $t=1,\ldots,T$
\begin{align*}
  & \|\bw_{t+1}-\bw_1^*\|_2^2 - \|\bw_1^*\|_2^2\\
  & \leq \sum_{k=1}^{t_2}\frac{2C_{k,\delta}\eta_k\|\bw_k-\bw_1^*\|_2^2}{\sqrt{k}}+\max_{1\leq \tilde{t}\leq T}\|\bw_{\tilde{t}}-\bw_1^*\|_2^2\sum_{k=t_2+1}^{T}\frac{2C_{k,\delta}\eta_k}{\sqrt{k}}
  +\frac{\phi(\bw_1^*)}{C_4C_5}\sum_{k=1}^{t_2}\eta_k^2\|\bw_k-\bw_1^*\|_2^2\\
  & +\frac{\phi(\bw_1^*)\max_{1\leq\tilde{t}\leq T}\|\bw_{\tilde{t}}-\bw_1^*\|_2^2}{C_4C_5}\sum_{k=t_2+1}^{T}\eta_k^2
  + \sum_{k=1}^{t}\frac{2C_{k,\delta}\eta_k}{\sqrt{k}}+\frac{2C_6\log(2T/\delta)}{\rho}
  +2(C_1C_4+A_2)\sum_{k=1}^{t}\eta_k^2\\
  & \leq \sum_{k=1}^{t_2}\frac{2C_{k,\delta}\eta_k\|\bw_k-\bw_1^*\|_2^2}{\sqrt{k}}+\frac{1}{4}\max_{1\leq \tilde{t}\leq T}\|\bw_{\tilde{t}}-\bw_1^*\|_2^2
  +\frac{\phi(\bw_1^*)}{C_4C_5}\sum_{k=1}^{t_2}\eta_k^2\|\bw_k-\bw_1^*\|_2^2\\
  & + \frac{1}{4}\max_{1\leq \tilde{t}\leq T}\|\bw_{\tilde{t}}-\bw_1^*\|_2^2 + \sum_{k=1}^{t}\frac{2C_{k,\delta}\eta_k}{\sqrt{k}}+\frac{2C_6\log(2T/\delta)}{\rho}
  +2(C_1C_4+A_2)\sum_{k=1}^{t}\eta_k^2.
\end{align*}
It then follows the following inequality under the event $\Omega_T$
\begin{multline*}
  \max_{1\leq\tilde{t}\leq T}\|\bw_{\tilde{t}}-\bw_1^*\|_2^2 \leq \|\bw_1^*\|_2^2+\sum_{k=1}^{t_2}\frac{2C_{k,\delta}\eta_k\|\bw_k-\bw_1^*\|_2^2}{\sqrt{k}}+\frac{1}{2}\max_{1\leq \tilde{t}\leq T}\|\bw_{\tilde{t}}-\bw_1^*\|_2^2
  \\
  +\frac{\phi(\bw_1^*)}{C_4C_5}\sum_{k=1}^{t_2}\eta_k^2\|\bw_k-\bw_1^*\|_2^2 + \sum_{k=1}^{T}\frac{2C_{k,\delta}\eta_k}{\sqrt{k}}+\frac{2C_6\log(2T/\delta)}{\rho}
  +2(C_1C_4+A_2)\sum_{k=1}^{T}\eta_k^2,
\end{multline*}
from which we derive the stated inequality with probability $1-\delta$ (notice $\|\bw_k-\bw_1^*\|_2^2\leq 2(\|\bw_1^*\|_2^2+C_4\sum_{j=1}^{k-1}\eta_j)$)
\begin{multline*}
  C_2=2\|\bw_1^*\|_2^2+\sum_{k=1}^{t_2}\frac{8C_7\eta_k\big(\|\bw_1^*\|_2^2+C_4\sum_{j=1}^{k-1}\eta_j\big)}{\sqrt{k}}
  +\\ \frac{4\phi(\bw_1^*)}{C_4C_5}\sum_{k=1}^{t_2}\eta_k^2\big(\|\bw_1^*\|_2^2+C_4\sum_{j=1}^{k-1}\eta_j\big)
  + \sum_{k=1}^{\infty}\frac{4C_7\eta_k}{\sqrt{k}}+\frac{4C_6}{\rho}
  + 4(C_1C_4+A_2)\sum_{k=1}^{\infty}\eta_k^2,
\end{multline*}
where we introduce (notice $C_{k,\delta}\leq C_7\sqrt{\log(T/\delta)}$)
$$
C_7=2\kappa^2\big(2+\sqrt{2\log12+4}\big)\max\Big\{C_p+1,4^{-1}(C_p\|\bw_1^*\|_2+3)^2\Big\}.
$$
The proof is complete.
\end{proof}

\subsection{Proofs for General Convergence Rates}\label{sec:proof-rate}
In this subsection, we prove Theorem \ref{thm:rate} on the probabilistic convergence rates by taking a deduction analogous to the proof of Proposition \ref{prop:bound-p}. The difference is to apply Part (a) of Lemma \ref{lem:martingale} together with the bound of $\|\bw_t\|_2$ established in Theorem \ref{thm:bound-prob} to control $\sum_{k=1}^{t}\xi_k$ in \eqref{bound-p-5}.
\begin{proof}[Proof of Theorem \ref{thm:rate}]
  According to Lemma \ref{lem:approx} followed with union bounds, we know the existence of $\Omega_T^{(1)}$ with $\text{Pr}(\Omega_T^{(1)})\geq1-\delta/3$ such that the following inequality holds with probability $1-\delta/3$ simultaneously for all $t=1,\ldots,T$ conditioned on $\Omega_T^{(1)}$
  $$
  \big\|\widetilde{F}'(\bw_t;z_t)-\hat{F}_t'(\bw_t;z_t)\big\|_2\leq \frac{2\kappa^2\big(2+\sqrt{2\log(9T/\delta)}\big)}{\sqrt{t}}\big(C_p\|\bw_t-\bw_1^*\|_2+3+C_p\|\bw_1^*\|_2\big).
  $$
  It then follows the following inequality conditioned on $\Omega_T^{(1)}$
  \begin{multline}\label{rate-4}
     \sum_{t=1}^{T}\eta_t\big\langle\bw_1^*-\bw_t,\hat{F}_t'(\bw_t;z_t)-\widetilde{F}'(\bw_t;z_t)\big\rangle\ibb_{[\|\bw_t-\bw_1^*\|_2^2\leq C_2\log(6T/\delta)]} \\
     \leq \sum_{t=1}^{T}\eta_t\|\bw_1^*-\bw_t\|_2\Big\|\hat{F}_t'(\bw_t;z_t)-\widetilde{F}'(\bw_t;z_t)\Big\|_2 \ibb_{[\|\bw_t-\bw_1^*\|_2^2\leq C_2\log(6T/\delta)]}
     \leq \widetilde{C}_{T,\delta}\sum_{t=1}^{T}\frac{\eta_t}{\sqrt{t}},
  \end{multline}
  where we introduce
  $$
  \widetilde{C}_{T,\delta}=2\kappa^2\sqrt{C_2}\big(2+\sqrt{2\log(9T/\delta)}\big)\big(C_p\sqrt{C_2}+3+C_p\|\bw_1^*\|_2\big)\log(6T/\delta).
  $$
  Introduce a sequence of random variables
  $$
  \xi_t'=\eta_t\big\langle\bw_1^*-\bw_t,\widetilde{F}'(\bw_t;z_t)-\nabla f(\bw_t)\big\rangle\ibb_{[\|\bw_t-\bw_1^*\|_2^2\leq C_2\log(6T/\delta)]},\quad t=1,\ldots,T.
  $$
  According to Schwartz's inequality, we derive
\begin{align*}
  |\xi_t'| & \leq \eta_t\Big[\|\bw_1^*-\bw_t\|_2^2+4^{-1}\|\widetilde{F}'(\bw_t;z_t)-\nabla f(\bw_t)\|_2^2\Big]\mathbb{I}_{[\|\bw_t-\bw_1^*\|_2^2\leq C_2\log\frac{6T}{\delta}]} \\
   & \leq \eta_t\Big[\|\bw_t-\bw_1^*\|_2^2+2^{-1}\|\widetilde{F}'(\bw_t;z_t)\|_2^2+2^{-1}\|\nabla f(\bw_t)\|_2^2\Big]\mathbb{I}_{[\|\bw_t-\bw_1^*\|_2^2\leq C_2\log\frac{6T}{\delta}]}.
\end{align*}
According to \eqref{grad-tf} and \eqref{bound-grad-f-w}, it is clear that
\begin{align}\label{grad-bound}
  \max\big\{\|\nabla f(\bw)\|_2,\|\widetilde{F}'(\bw;z)\|_2\big\}&\leq 8\kappa^2\|\bw\|_2+\kappa\\
  &\leq 8\kappa^2\|\bw-\bw_1^*\|_2+8\kappa^2\|\bw_1^*\|_2+\kappa.\notag
\end{align}
Therefore, there holds
$$
|\xi_t'|\leq C_8\eta_t\log(6T/\delta),\quad\text{where }C_8=C_2+2(8\kappa^2\|\bw_1^*\|_2+\kappa)^2+128\kappa^4C_2.
$$
It is clear that $\{\xi_t'\}$ is a martingale difference sequence and therefore we can apply Part (a) of Lemma \ref{lem:martingale} in the Appendix to show the existence of $\Omega_T^{(2)}$ with $\text{Pr}(\Omega_T^{(2)})\geq1-\delta/3$ such that the following inequality holds conditioned on $\Omega_T^{(2)}$
\begin{equation}\label{rate-5}
  \sum_{t=1}^{T}\xi_t'\leq C_8\sqrt{2\sum_{t=1}^{T}\eta_t^2\log\frac{3}{\delta}}\log\frac{6T}{\delta}.
\end{equation}
Theorem \ref{thm:bound-prob} implies the existence of $\Omega_T^{(3)}$ with $\text{Pr}(\Omega_T^{(3)})\geq1-\delta/3$ such that $\max_{1\leq\tilde{t}\leq T}\|\bw_{\tilde{t}}-\bw_1^*\|_2^2\leq C_2\log (6T/\delta)$.
According to \eqref{bound-p-2}, \eqref{bound-p-4}, \eqref{rate-4} and \eqref{rate-5}, it is clear that the following inequality holds under the event $\Omega_T^{(1)}\cap \Omega_T^{(2)}\cap \Omega_T^{(3)}$ (note $\xi_t'=\eta_t\big\langle\bw_1^*-\bw_t,\widetilde{F}'(\bw_t;z_t)-\nabla f(\bw_t)\big\rangle$ in this case)
$$
  \sum_{t=1}^{T}\eta_t\langle\bw_1^*-\bw_t,\hat{F}_t'(\bw_t;z_t)\rangle \leq \widetilde{C}_{T,\delta}\sum_{t=1}^{T}\frac{\eta_t}{\sqrt{t}} + C_8\log\frac{6T}{\delta}\sqrt{2\sum_{t=1}^{T}\eta_t^2\log\frac{3}{\delta}} + \sum_{t=1}^{T}\eta_t\big[f(\bw_1^*)-f(\bw_t)\big].
$$
Plugging the above inequality back into \eqref{bound-p-1} and noting $\text{Pr}\big(\Omega_T^{(1)}\cap \Omega_T^{(2)}\cap \Omega_T^{(3)}\big)\geq1-\delta$, we derive the following inequality with probability at least $1-\delta$
\begin{multline*}
  \|\bw_{T+1}-\bw_1^*\|_2^2 -\|\bw_1^*\|_2^2 \leq
  2\sum_{t=1}^{T}\eta_t\big(\phi(\bw_1^*)-\phi(\bw_t)\big)+2(C_1C_4+A_2)\sum_{t=1}^{T}\eta_t^2 \\
  + 2\widetilde{C}_{T,\delta}\sum_{t=1}^{T}\frac{\eta_t}{\sqrt{t}} + 2C_8\log\frac{6T}{\delta}\sqrt{2\sum_{t=1}^{T}\eta_t^2\log\frac{3}{\delta}}.
\end{multline*}
This combined with the convexity of $\phi$ establishes the stated inequality with probability $1-\delta$. The proof is complete.
\end{proof}
\begin{proof}[Proof of Corollary \ref{cor:ave-g}]
  We first prove Part (a). It is clear that the step sizes satisfy \eqref{bound-prob-cond} and therefore Theorem \ref{thm:rate} holds. Part (a) then follows from the standard inequality $\sum_{t=1}^{T}t^{-\theta}\geq (1-\theta)^{-1}(T^{1-\theta}-1),\theta\in(0,1)$. We now turn to Part (b). It is clear that
  $
    \sum_{t=1}^{\infty}\eta_t\log^{\frac{1}{2}}t/\sqrt{t}\leq \eta_1\sum_{t=1}^{\infty}\log^{\frac{1-\beta}{2}}(et)/t<\infty
  $
  and $\sum_{t=1}^{\infty}\eta_t^2<\infty$.
  Part (b) then follows from the inequality
  $
    \sum_{t=1}^{T}\big(t\log^\beta(et)\big)^{-\frac{1}{2}}\geq 2(\sqrt{T}-1)\log^{-\frac{\beta}{2}}(eT).
  $
  The proof is complete.
\end{proof}

\subsection{Proofs for Fast Convergence Rates}\label{sec:proof-rate-fast}
In this subsection, we prove Theorem \ref{thm:strong} on convergence rates for $\phi$ with a quadratic functional growth.
To this aim, we need to introduce some lemmas.
The following lemma provides probabilistic bounds for approximating $\widetilde{F}'(\bw_k;z_k)$ with $\hat{F}_k'(\bw_k;z_k)$ for $\{\bw_k\}$ produced by \eqref{SAUC} with specific step sizes.
\begin{lemma}\label{lem:approx-strong}
  Suppose Assumption \ref{ass:self-bounding} holds.
  Let $\{\bw_t\}_t$ be the sequence produced by \eqref{SAUC} with $\eta_t=\frac{2}{\sigma_\phi t+2\sigma_f+\sigma_\phi t_1}$, where $t_1\geq 4C_1\sigma_\phi^{-1}$. Then, for any $k\leq T$ the following inequality holds with probability $1-\delta$
  \begin{equation}\label{approx-strong}
    \big\| \widetilde{F}'(\bw_k;z_k)-\hat{F}_k'(\bw_k;z_k)\big\|_2\leq C_\delta\sqrt{\log(eT)}/\sqrt{k},
  \end{equation}
  where
  $
    C_\delta:=2\kappa^2\big(2+\sqrt{2\log(3/\delta)}\big)\big(C_p\sqrt{2C_4\sigma_\phi^{-1}}+3\big).
  $
\end{lemma}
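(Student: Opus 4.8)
The plan is to combine the pointwise approximation bound of Lemma~\ref{lem:approx}, which controls $\|\widetilde{F}'(\bw_k;z_k)-\hat{F}_k'(\bw_k;z_k)\|_2$ in terms of the random quantity $\|\bw_k\|_2$, with the \emph{deterministic} norm bound \eqref{boundness-a} of Corollary~\ref{lem:boundness}. Since \eqref{boundness-a} holds for every realization, once $\|\bw_k\|_2$ is bounded deterministically one may simply substitute that bound into the high-probability estimate of Lemma~\ref{lem:approx}; no union bound over $k$ is needed, because the claim concerns a single fixed $k$ and inherits the $1-\delta$ probability of Lemma~\ref{lem:approx} directly.

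First I would check that the prescribed step size is admissible for Corollary~\ref{lem:boundness}, i.e. that $\eta_t\leq(2C_1)^{-1}$ and $\eta_{t+1}\leq\eta_t$. Monotonicity is immediate, since $t\mapsto \sigma_\phi t+2\sigma_f+\sigma_\phi t_1$ is increasing. For the upper bound it suffices to treat $t=1$: using $\sigma_f\geq0$ and $t_1\geq 4C_1\sigma_\phi^{-1}$ one gets $\eta_1=\frac{2}{\sigma_\phi+2\sigma_f+\sigma_\phi t_1}\leq \frac{2}{\sigma_\phi t_1}\leq (2C_1)^{-1}$. This licenses the use of \eqref{boundness-a}.

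Next I would bound $\|\bw_k\|_2$. By \eqref{boundness-a}, $\|\bw_k\|_2^2\leq C_4\sum_{j=1}^{k-1}\eta_j$. Dropping the nonnegative terms $2\sigma_f+\sigma_\phi t_1$ in the denominator gives $\eta_j\leq 2/(\sigma_\phi j)$, so the harmonic-sum estimate $\sum_{j=1}^{k-1}j^{-1}\leq \log(ek)$ together with $k\leq T$ yields $\|\bw_k\|_2^2\leq 2C_4\sigma_\phi^{-1}\log(eT)$, that is, $\|\bw_k\|_2\leq \sqrt{2C_4\sigma_\phi^{-1}}\sqrt{\log(eT)}$.

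Finally, substituting this into Lemma~\ref{lem:approx} gives, with probability $1-\delta$,
\[
\big\|\widetilde{F}'(\bw_k;z_k)-\hat{F}_k'(\bw_k;z_k)\big\|_2\leq \frac{2\kappa^2\big(2+\sqrt{2\log(3/\delta)}\big)}{\sqrt{k}}\Big(C_p\sqrt{2C_4\sigma_\phi^{-1}}\sqrt{\log(eT)}+3\Big).
\]
Since $T\geq 1$ forces $\log(eT)\geq 1$, hence $\sqrt{\log(eT)}\geq 1$, the additive constant $3$ can be absorbed via $3\leq 3\sqrt{\log(eT)}$, which factors out a common $\sqrt{\log(eT)}$ and produces exactly $C_\delta\sqrt{\log(eT)}/\sqrt{k}$ with $C_\delta=2\kappa^2\big(2+\sqrt{2\log(3/\delta)}\big)\big(C_p\sqrt{2C_4\sigma_\phi^{-1}}+3\big)$. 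I do not expect a genuine obstacle here: the only substantive point is recognizing that the norm control \eqref{boundness-a} is deterministic, so that it plugs cleanly into the probabilistic Lemma~\ref{lem:approx}; the remainder is the harmonic-sum estimate and the elementary inequality $\sqrt{\log(eT)}\geq 1$.
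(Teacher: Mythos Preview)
Your proposal is correct and follows essentially the same route as the paper: verify $\eta_t\le(2C_1)^{-1}$ so that Corollary~\ref{lem:boundness} applies, use the deterministic bound \eqref{boundness-a} together with the harmonic-sum estimate $\sum_{j=1}^{k-1}\eta_j\le 2\sigma_\phi^{-1}\log(eT)$ to control $\|\bw_k\|_2$, and then plug into Lemma~\ref{lem:approx}. Your explicit observation that $\sqrt{\log(eT)}\ge1$ is needed to absorb the constant $3$ into the factor $\sqrt{\log(eT)}$ is exactly the (unstated) step the paper uses to reach the form $C_\delta\sqrt{\log(eT)}/\sqrt{k}$.
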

\begin{proof}
  Since $t_1\geq 4C_1\sigma_\phi^{-1}$ we know $\eta_t\leq (2C_1)^{-1}$ and therefore Corollary \ref{lem:boundness} holds.
  It follows from the definition of $\eta_t$ that
  \begin{equation}\label{series-a}
    \sum_{k=1}^{t}\eta_k \leq 2\sigma_\phi^{-1}\sum_{k=1}^{t}(k+t_1)^{-1}\leq 2\sigma_\phi^{-1}\log(et).
  \end{equation}
  This together with \eqref{boundness-a} shows
  \begin{equation}\label{w-t-bound}
    \|\bw_t\|_2^2\leq 2C_4\sigma_\phi^{-1}\log(et).
  \end{equation}
   For all $k=1,\ldots,T$, we can then apply Lemma \ref{lem:approx} to derive the following inequality with probability $1-\delta$
  \[
  \big\| \widetilde{F}'(\bw_k;z_k)-\hat{F}_k'(\bw_k;z_k)\big\|_2\leq 2\kappa^2\big(2+\sqrt{2\log(3/\delta)}\big)\big(C_p\sqrt{2C_4\sigma_\phi^{-1}}+3\big)\sqrt{\log(eT)}/\sqrt{k}.
  \]
  The proof is complete with the introduction of $C_\delta$.
\end{proof}

The following lemma plays a fundamental role in our analysis. It shows  that both $\|\bw_t-\bw_t^*\|_2^2$ and a weighted summation of $\phi(\bw_k)-\phi(\bw_k^*)$ can be controlled by a summation of martingale difference sequences. It is established by taking  a weighted summation of the one-step progress inequality \eqref{osp}.
\begin{lemma}\label{lem:step-strong}
Suppose Assumption \ref{ass:self-bounding} and Assumption \ref{ass:strong} hold.
Let $\{\bw_t\}_t$ be the sequence produced by \eqref{SAUC} with $\eta_t=\frac{2}{\sigma_\phi t+2\sigma_f+\sigma_\phi t_1}$ with $t_1\geq 4C_1\sigma_\phi^{-1}$.
Let $\delta\in(0,1)$ and $C_9=16(C_1C_4+A_2)$.
Then the following inequality holds with probability $1-\delta$ for all $t=1,2,\ldots,T$
\begin{multline}\label{step-strong}
\frac{\sum_{k=1}^t(k+t_1+1)(\phi(\bw_k)-\phi(\bw_k^*))}{(t+t_1+1)(t+t_1+2)\sigma_\phi}+\|\bw_{t+1}-\bw_{t+1}^*\|_2^2\leq \frac{(t_1+1)(t_1+2)\|\bw_1-\bw_1^*\|_2^2}{(t+t_1+1)(t+t_1+2)}\\
 +\frac{4\sum_{k=1}^t(k+t_1+1)\xi_k}{(t+t_1+1)(t+t_1+2)\sigma_\phi}+\frac{2\log^2(eT)(2C^2_{\delta/T}+C_9)}{(t+t_1+2)\sigma_\phi^2}.
\end{multline}
\end{lemma}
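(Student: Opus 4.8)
The plan is to turn the one-step progress inequality \eqref{osp} into a contraction recursion for $\|\bw_t-\bw_t^*\|_2^2$ and then telescope a suitably weighted version of it. First I would set $\bw=\bw_t^*$ in \eqref{osp} and split the stochastic gradient as $\hat{F}_t'(\bw_t;z_t)=\nabla f(\bw_t)+\bigl(\widetilde{F}'(\bw_t;z_t)-\nabla f(\bw_t)\bigr)+\bigl(\hat{F}_t'(\bw_t;z_t)-\widetilde{F}'(\bw_t;z_t)\bigr)$. The first piece, combined with the regularizer terms $\Omega(\bw_t^*)-\Omega(\bw_t)$ and the convexity of $f$, produces the suboptimality gap $\phi(\bw_t^*)-\phi(\bw_t)$; the second piece is the martingale difference $\xi_t=\langle\bw_t^*-\bw_t,\widetilde{F}'(\bw_t;z_t)-\nabla f(\bw_t)\rangle$, which has zero conditional mean by Proposition \ref{lem:unbiased} and is left intact on the right-hand side; the third piece is the approximation bias.

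Next I would invoke the quadratic functional growth \eqref{error-bound} to turn $-(\phi(\bw_t)-\phi(\bw_t^*))$ into a multiple of $-\|\bw_t-\bw_t^*\|_2^2$: I would retain a fixed fraction of the gap for the left-hand side of \eqref{step-strong} and convert the remainder into a contraction, deliberately keeping more contraction than bare telescoping requires so that the surplus $\sim\eta_t\|\bw_t-\bw_t^*\|_2^2$ can absorb, via Young's inequality, the cross term $\eta_t\|\bw_t-\bw_t^*\|_2\,\|\hat{F}_t'(\bw_t;z_t)-\widetilde{F}'(\bw_t;z_t)\|_2$ stemming from the bias. That bias norm is bounded by Lemma \ref{lem:approx-strong}, applied with $\delta/T$ and a union bound over $k\le T$ (the source of $C_{\delta/T}$), while the residual $O(\eta_t^2)$ curvature terms $\eta_t^2(C_1\hat{F}_t(\bw_t;z_t)+C_1\Omega(\bw_t)+A_2)$ are controlled by the self-bounding estimate of Corollary \ref{lem:boundness}; together these generate the additive term $\tfrac{2\log^2(eT)(2C_{\delta/T}^2+C_9)}{(t+t_1+2)\sigma_\phi^2}$. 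At this point I would also use the elementary fact $\|\bw_{t+1}-\bw_{t+1}^*\|_2^2\le\|\bw_{t+1}-\bw_t^*\|_2^2$ to replace the fixed center $\bw_t^*$ by the moving projection $\bw_{t+1}^*$, and carry the $-\eta_t\sigma_\Omega\|\bw_{t+1}-\bw_t^*\|_2^2$ term to the left.

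The recursion then reads $(1+\eta_t\sigma_\Omega)\|\bw_{t+1}-\bw_{t+1}^*\|_2^2\le(1-c\,\eta_t)\|\bw_t-\bw_t^*\|_2^2+(\text{kept gap})+2\eta_t\xi_t+(\text{residual})$, and the crux is the algebra of the prescribed step size. With $\eta_t=\tfrac{2}{\sigma_\phi(t+t_1)+2\sigma_f}$ one checks $1+\eta_t\sigma_\Omega=\tfrac{\sigma_\phi(t+t_1+2)}{\sigma_\phi(t+t_1)+2\sigma_f}$ and $\tfrac{\eta_t}{1+\eta_t\sigma_\Omega}=\tfrac{2}{\sigma_\phi(t+t_1+2)}$, from which $\tfrac{1-\eta_t\sigma_\phi}{1+\eta_t\sigma_\Omega}\le\tfrac{t+t_1}{t+t_1+2}$. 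Multiplying the recursion by the weight $(t+t_1+1)(t+t_1+2)$ makes the distance terms telescope exactly, since $(t+t_1+1)(t+t_1+2)\tfrac{t+t_1}{t+t_1+2}=(t+t_1)(t+t_1+1)$; summing over $k=1,\dots,t$ and dividing by $(t+t_1+1)(t+t_1+2)$ then yields \eqref{step-strong}, with the kept-gap coefficient collapsing to $(k+t_1+1)/\sigma_\phi$, the martingale weight to $4(k+t_1+1)/\sigma_\phi$, and the initial term to $(t_1+1)(t_1+2)\|\bw_1-\bw_1^*\|_2^2$. The condition $t_1\ge 4C_1\sigma_\phi^{-1}$ guarantees $\eta_t\le(2C_1)^{-1}$, so Corollary \ref{lem:boundness} is available throughout.

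I expect the main obstacle to be the simultaneous bookkeeping in the contraction step: the quadratic growth controls only $\|\bw_t-\bw_t^*\|_2^2$ rather than furnishing strong convexity, the optimal center $\bw_t^*$ moves with $t$, and the estimator is biased, so one must apportion the single available gap $\phi(\bw_t)-\phi(\bw_t^*)$ among three competing demands — retaining a term for the left-hand side, supplying the telescoping contraction, and creating Young slack to swallow the approximation error — all while keeping the weighting consistent with the exact step-size identities and leaving the martingale sum $\sum_k(k+t_1+1)\xi_k$ intact for the later concentration argument.
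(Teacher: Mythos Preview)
The proposal is correct and follows essentially the same approach as the paper: the paper likewise takes $\bw=\bw_k^*$ in \eqref{osp}, splits $\hat{F}_k'$ into the three pieces you describe, uses the $\tfrac12+\tfrac32$ apportionment of $2(\phi(\bw_k^*)-\phi(\bw_k))$ together with Young's inequality $2\langle\bw_k^*-\bw_k,\hat{F}_k'-\widetilde{F}'\rangle\le\tfrac{\sigma_\phi}{2}\|\bw_k^*-\bw_k\|_2^2+\tfrac{2}{\sigma_\phi}\|\hat{F}_k'-\widetilde{F}'\|_2^2$, applies Lemma~\ref{lem:approx-strong} with $\delta/T$, invokes \eqref{boundness-c} for the residual, and then multiplies by $(k+t_1+1)(k+t_1+2)$ to telescope using exactly the step-size identities you wrote down.
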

\begin{proof}
It follows from \eqref{osp} that
\begin{multline*}
    \|\bw_{k+1}-\bw\|_2^2 - \|\bw-\bw_k\|_2^2 \leq 2\eta_k\big\langle\bw-\bw_k,\hat{F}'_k(\bw_k;z_k)-\widetilde{F}'(\bw_k;z_k)\rangle+\\
    +2\eta_k\langle\bw-\bw_k,\widetilde{F}'(\bw_k;z_k)-\nabla f(\bw_k)\rangle+2\eta_k\langle\bw-\bw_k,\nabla f(\bw_k)\rangle
    +2\eta_k\big(\Omega(\bw)-\Omega(\bw_{k})\big)\\
  -\eta_k\sigma_\Omega\|\bw-\bw_{k+1}\|_2^2+2\eta_k^2\big(C_1\hat{F}_k(\bw_k;z_k)+C_1\Omega(\bw_k)+A_2\big).
\end{multline*}
Taking $\bw=\bw_k^*$ in the above inequality and introducing the sequence of random variables $\{\xi_k\}_k$ as
\begin{equation}\label{xi}
    \xi_k=\langle\bw_k^*-\bw_k,\widetilde{F}'(\bw_k;z_k)-\nabla f(\bw_k)\rangle,\quad k=1,2,\ldots,
  \end{equation}
we derive
\begin{multline*}
  (1+\eta_k\sigma_\Omega)\|\bw_{k+1}-\bw_k^*\|_2^2\leq\|\bw_k-\bw_k^*\|_2^2+2^{-1}\eta_k\sigma_\phi\|\bw_k^*-\bw_k\|_2^2+
  2\eta_k\sigma_\phi^{-1}\big\|\hat{F}_k'(\bw_k,z_k)-\widetilde{F}'(\bw_k;z_k)\big\|_2^2\\
  +2\eta_k\xi_k+2^{-1}\eta_k[\phi(\bw_k^*)-\phi(\bw_k)]-3\eta_k\sigma_\phi\|\bw_k-\bw_k^*\|_2^2/2+2\eta_k^2\big(C_1\hat{F}_k(\bw_k;z_k)+C_1\Omega(\bw_k)+A_2\big),
\end{multline*}
where we have used Schwartz's inequality
\[
2\big\langle\bw_k^*-\bw_k,\hat{F}'_k(\bw_k;z_k)-\widetilde{F}'(\bw_k;z_k)\rangle\leq \frac{\sigma_\phi}{2}\|\bw_k^*-\bw_k\|_2^2 + \frac{2}{\sigma_\phi}\big\|\hat{F}'_k(\bw_k;z_k)-\widetilde{F}'(\bw_k;z_k)\big\|_2^2
\]
and the following inequality due to Assumption \ref{ass:strong}
\begin{align*}
  2\langle\bw_k^*-\bw_k,\nabla f(\bw_k)\rangle+2\big(\Omega(\bw_k^*)-\Omega(\bw_{k})\big) &\leq \Big(\frac{1}{2}+\frac{3}{2}\Big)\big(\phi(\bw_k^*)-\phi(\bw_k)\big)\\
  & \leq \frac{1}{2}\Big(\phi(\bw_k^*)-\phi(\bw_k)\Big)-\frac{3}{2}\sigma_\phi\|\bw_k^*-\bw_k\|_2^2.
\end{align*}
It then follows from $\|\bw_{k+1}-\bw_{k+1}^*\|_2\leq\|\bw_{k+1}-\bw_k^*\|_2$ that
\begin{multline}
  \frac{\eta_k(\phi(\bw_k)-\phi(\bw_k^*))}{2(1+\eta_k\sigma_\Omega)}+\|\bw_{k+1}-\bw_{k+1}^*\|_2\leq \frac{1-\eta_k\sigma_\phi}{1+\eta_k\sigma_\Omega}\|\bw_k-\bw_k^*\|_2^2 +
  \\ \frac{2\eta_k\|\hat{F}'_k(\bw_k;z_k)-\widetilde{F}'(\bw_k;z_k)\|_2^2}{(1+\eta_k\sigma_\Omega)\sigma_\phi} + \frac{2\eta_k\xi_k}{1+\eta_k\sigma_\Omega}
  + \frac{2\eta_k^2\big(C_1\hat{F}_k(\bw_k;z_k)+C_1\Omega(\bw_k)+A_2\big)}{1+\eta_k\sigma_\Omega}.\label{strong-a}
\end{multline}
According to the step size choice $\eta_k=\frac{2}{\sigma_\phi k+2\sigma_f+\sigma_\phi t_1}$ and $\sigma_\phi=\sigma_f+\sigma_\Omega$ we know
\[\frac{1-\sigma_\phi\eta_k}{1+\sigma_\Omega\eta_k}\leq \frac{1-\sigma_f\eta_k}{1+\sigma_\Omega\eta_k}=\frac{k+t_1}{k+t_1+2}\quad\text{and}\quad\frac{\eta_k}{1+\sigma_\Omega\eta_k}=\frac{2}{\sigma_\phi(k+t_1+2)}.\]
According to Lemma \ref{lem:approx-strong}, we derive the following inequality with probability at least $1-\delta$ simultaneously for all $k=1,\ldots,T$
\[
\big\| \widetilde{F}'(\bw_k;z_k)-\hat{F}_k'(\bw_k;z_k)\big\|_2\leq C_{\delta/T}\sqrt{\log(eT)}/\sqrt{k}.
\]
Plugging the above two inequalities back into \eqref{strong-a}, we get the following inequality with probability $1-\delta$ for all $k=1,\ldots,T$
\begin{multline*}
  \frac{\phi(\bw_k)-\phi(\bw_k^*)}{\sigma_\phi(k+t_1+2)}+\|\bw_{k+1}-\bw_{k+1}^*\|_2^2\leq \frac{(k+t_1)\|\bw_k-\bw_k^*\|_2^2}{k+t_1+2}+\\
  \frac{4C_{\delta/T}^2\log(eT)}{\sigma_\phi^2k(k+t_1+2)}+\frac{4\xi_k}{\sigma_\phi(k+t_1+2)}+\frac{4\eta_k\big(C_1\hat{F}_k(\bw_k;z_k)+C_1\Omega(\bw_k)+A_2\big)}{\sigma_\phi(k+t_1+2)}.
\end{multline*}
Multiplying both sides with $(k+t_1+2)(k+t_1+1)$ implies the following inequality with probability $1-\delta$ for all $k=1,\ldots,T$
\begin{multline*}
  \frac{(k+t_1+1)(\phi(\bw_k)-\phi(\bw_k^*))}{\sigma_\phi}+(k+t_1+1)(k+t_1+2)\|\bw_{k+1}-\bw_{k+1}^*\|_2^2\\
  \leq (k+t_1)(k+t_1+1)\|\bw_k-\bw_k^*\|_2^2+
  \frac{4C_{\delta/T}^2\log(eT)(k+t_1+1)}{\sigma_\phi^2k}+\frac{4(k+t_1+1)\xi_k}{\sigma_\phi}\\
  +\frac{4\eta_k(k+t_1+1)\big(C_1\hat{F}_k(\bw_k;z_k)+C_1\Omega(\bw_k)+A_2\big)}{\sigma_\phi}.
\end{multline*}
Taking a summation of the above inequality from $k=1$ to $t$ shows the following inequality with probability $1-\delta$ for all $t=1,\ldots,T$
\begin{multline}
  \sigma_\phi^{-1}\sum_{k=1}^t(k+t_1+1)(\phi(\bw_k)-\phi(\bw_k^*))+(t+t_1+1)(t+t_1+2)\|\bw_{t+1}-\bw_{t+1}^*\|_2^2\\
  \leq (t_1+1)(t_1+2)\|\bw_1-\bw_1^*\|_2^2+
  \frac{4C_{\delta/T}^2\log(eT)}{\sigma_\phi^2}\sum_{k=1}^t\frac{k+t_1+1}{k}\\
  +4\sigma_\phi^{-1}\sum_{k=1}^t(k+t_1+1)\xi_k+16\sigma_\phi^{-2}\sum_{k=1}^t\big(C_1\hat{F}_k(\bw_k;z_k)+C_1\Omega(\bw_k)+A_2\big),\label{strong-b}
\end{multline}
where we have used $\eta_k\leq 4/((k+t_1+1)\sigma_\phi)$.
Since $t_1\geq 4C_1\sigma_\phi^{-1}$ we know $\eta_t\leq (2C_1)^{-1}$ and therefore Corollary \ref{lem:boundness} holds.
According to \eqref{series-a} and $\eta_t^{-1}\leq 2^{-1}\sigma_\phi(t+t_1+2)$, we know
\[
\big(\sum_{k=1}^{t}\eta_k\big)\eta_t^{-1}\leq \big(2\sigma_\phi^{-1}\log(et)\big)\Big(2^{-1}\sigma_\phi(t+t_1+2)\Big)=(t+t_1+2)\log(et).
\]
This together with \eqref{boundness-c} implies that
\begin{multline*}
  \sum_{k=1}^t\big(C_1\hat{F}_k(\bw_k;z_k)+C_1\Omega(\bw_k)+A_2\big) \\
  \leq (C_1C_4+A_2)t + C_1C_4\big(\sum_{k=1}^{t}\eta_k\big)\eta_t^{-1}\leq (C_1C_4+A_2)\log(eT)(2t+t_1+2).
\end{multline*}
Plugging the above inequality into \eqref{strong-b} and using $\sum_{k=1}^tk^{-1}\leq \log(eT)$ give the following inequality with probability $1-\delta$
\begin{align*}
  \sigma_\phi^{-1}\sum_{k=1}^t(k+t_1+1)&(\phi(\bw_k)-\phi(\bw_k^*))+(t+t_1+1)(t+t_1+2)\|\bw_{t+1}-\bw_{t+1}^*\|_2^2\\
  &\leq (t_1+1)(t_1+2)\|\bw_1-\bw_1^*\|_2^2+
  \frac{4C_{\delta/T}^2\log(eT)}{\sigma_\phi^2}\big(t+(t_1+1)\log(eT)\big)\\
  &+4\sigma_\phi^{-1}\sum_{k=1}^t(k+t_1+1)\xi_k+C_9\sigma_\phi^{-2}\log(eT)(2t+t_1+2).
\end{align*}
We can get the stated bound by dividing both sides by $(t+t_1+1)(t+t_1+2)$ and noting that
\[
4C_{\delta/T}^2\log(eT)\big(t+(t_1+1)\log(eT)\big)+C_9\log(eT)(2t+t_1+2)\leq 2(t+t_1+1)\log^2(eT)\big(2C^2_{\delta/T}+C_9\big).
\]
The proof is complete.
\end{proof}

To tackle the martingale difference sequence $\{\xi_k\}_k$ in \eqref{step-strong}, we need  to control the magnitudes and variances which are established in the following lemma.
\begin{lemma}\label{lem:var-strong}
  Let Assumption \ref{ass:self-bounding} and Assumption \ref{ass:strong} hold.
  Let $\{\bw_t\}_t$ be the sequence produced by \eqref{SAUC} with $\eta_t=\frac{2}{\sigma_\phi t+2\sigma_f+\sigma_\phi t_1}$, where $t_1\geq 4C_1\sigma_\phi^{-1}$.
  Let $\{\xi_k\}_{k=1}^t$ be defined by \eqref{xi}.
  Then for all $k\leq T$ we have
  \[
  |\xi_k|\leq C_{10}\log(eT)\quad\text{and}\quad\ebb_{z_k}\big[\big(\xi_k-\ebb_{z_k}[\xi_k]\big)^2\big]\leq C_1\phi(\bw_k)\|\bw_k^*-\bw_k\|_2^2,
  \]
  where $C_{10}=34\kappa^2C_4\sigma_\phi^{-1}+2\kappa\|\bw_1^*\|_2+(8\kappa\|\bw_1^*\|_2+1)^2$.
\end{lemma}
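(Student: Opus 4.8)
The plan is to bound $|\xi_k|$ and the conditional variance separately, using the convexity and self-bounding structure of $\widetilde{F}$ that has already been established, together with the bound on $\|\bw_k\|_2$ that follows from the specific step-size choice.

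I begin with the variance, which is the easier of the two. By definition $\xi_k=\langle\bw_k^*-\bw_k,\widetilde{F}'(\bw_k;z_k)-\nabla f(\bw_k)\rangle$, and since $\ebb[(\xi-\ebb[\xi])^2]\le\ebb[\xi^2]$ for any real random variable, I can drop the centering. Using $\ebb_{z_k}[\widetilde{F}'(\bw_k;z_k)]=\nabla f(\bw_k)$ from Proposition \ref{lem:unbiased}, Cauchy--Schwarz gives $\ebb_{z_k}[\xi_k^2]\le\|\bw_k^*-\bw_k\|_2^2\,\ebb_{z_k}[\|\widetilde{F}'(\bw_k;z_k)\|_2^2]$. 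Then the self-bounding property \eqref{bound-p-6}, namely $\|\widetilde{F}'(\bw_k;z_k)\|_2^2\le 16\kappa^2\widetilde{F}(\bw_k;z_k)\le C_1\widetilde{F}(\bw_k;z_k)$ (recall $C_1=\max\{A_1,16\kappa^2\}$), followed by $\ebb_{z_k}[\widetilde{F}(\bw_k;z_k)]=f(\bw_k)\le\phi(\bw_k)$, yields exactly $\ebb_{z_k}[(\xi_k-\ebb_{z_k}[\xi_k])^2]\le C_1\phi(\bw_k)\|\bw_k^*-\bw_k\|_2^2$. This part is essentially a repetition of the variance computation already carried out in the proof of Proposition \ref{prop:bound-p}.

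For the magnitude bound, I would first apply Cauchy--Schwarz to get $|\xi_k|\le\|\bw_k^*-\bw_k\|_2\big(\|\widetilde{F}'(\bw_k;z_k)\|_2+\|\nabla f(\bw_k)\|_2\big)$, then invoke the uniform gradient bound \eqref{grad-bound}, which gives $\max\{\|\nabla f(\bw_k)\|_2,\|\widetilde{F}'(\bw_k;z_k)\|_2\}\le 8\kappa^2\|\bw_k\|_2+\kappa$. The crucial input is the norm bound \eqref{w-t-bound} from Lemma \ref{lem:approx-strong}, valid precisely because $t_1\ge 4C_1\sigma_\phi^{-1}$ forces $\eta_t\le(2C_1)^{-1}$ so that Corollary \ref{lem:boundness} applies: $\|\bw_k\|_2^2\le 2C_4\sigma_\phi^{-1}\log(ek)\le 2C_4\sigma_\phi^{-1}\log(eT)$. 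Substituting and using $\|\bw_k^*-\bw_k\|_2\le\|\bw_k\|_2+\|\bw_1^*\|_2$ (since $\|\bw_k^*\|_2\le\|\bw_1^*\|_2$ by the projection characterization, or more directly bounding $\|\bw_k-\bw_k^*\|_2$) reduces everything to a polynomial in $\|\bw_k\|_2$ and $\|\bw_1^*\|_2$, each factor of which is $O(\sqrt{\log(eT)})$ or constant.

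The only real subtlety — and the step I expect to demand the most care — is the bookkeeping that collapses the product of two $\sqrt{\log(eT)}$-type factors into the claimed $C_{10}\log(eT)$ with the stated constant $C_{10}=34\kappa^2C_4\sigma_\phi^{-1}+2\kappa\|\bw_1^*\|_2+(8\kappa\|\bw_1^*\|_2+1)^2$. I would expand $\|\bw_k^*-\bw_k\|_2\big(8\kappa^2\|\bw_k\|_2+\kappa\big)$ and collect terms: the $\|\bw_k\|_2^2$ contribution produces $16\kappa^2\cdot 2C_4\sigma_\phi^{-1}\log(eT)=32\kappa^2C_4\sigma_\phi^{-1}\log(eT)$, the cross terms involving $\|\bw_k\|_2$ times constants contribute the remaining $2\kappa^2C_4\sigma_\phi^{-1}\log(eT)$ and $2\kappa\|\bw_1^*\|_2$ pieces, and the pure-$\bw_1^*$ terms assemble into $(8\kappa\|\bw_1^*\|_2+1)^2$, after absorbing all $\log(eT)^{1/2}$ factors into a single $\log(eT)$ via $\sqrt{\log(eT)}\le\log(eT)$ for $T\ge 2$. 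Matching the precise numerical coefficients (in particular the $34=32+2$) is where an explicit but routine constant-chasing is unavoidable; everything else follows mechanically from the already-established lemmas.
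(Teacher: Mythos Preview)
Your proposal is correct and follows essentially the same route as the paper: Cauchy--Schwarz on $\xi_k$, the gradient bound \eqref{grad-bound}, the projection inequality $\|\bw_k-\bw_k^*\|_2\le\|\bw_k-\bw_1^*\|_2\le\|\bw_k\|_2+\|\bw_1^*\|_2$, the norm bound \eqref{w-t-bound}, and an AM--GM step to absorb the single $\|\bw_k\|_2$ factor into a $\|\bw_k\|_2^2$ term---this is exactly how the paper obtains $17\kappa^2\|\bw_k\|_2^2+2\kappa\|\bw_1^*\|_2+(8\kappa\|\bw_1^*\|_2+1)^2$ and then $34\kappa^2C_4\sigma_\phi^{-1}\log(eT)$. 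One small correction: your first justification ``$\|\bw_k^*\|_2\le\|\bw_1^*\|_2$ by the projection characterization'' is actually backwards (since $\bw_1^*$ minimizes $\|\cdot\|_2$ over $S^*$, the inequality goes the other way), but your ``more directly'' alternative $\|\bw_k-\bw_k^*\|_2\le\|\bw_k-\bw_1^*\|_2$ is the right one and is what the paper uses.
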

\begin{proof}
It follows from the inequality $\|\bw_k-\bw_k^*\|_2\leq\|\bw_k-\bw_1^*\|_2$ and \eqref{grad-bound} that
\begin{align*}
  \langle\bw_k^*-\bw_k,\widetilde{F}'(\bw_k;z_k)-&\nabla f(\bw_k)\rangle \leq \|\bw_1^*-\bw_k\|_2 \big(\|\widetilde{F}'(\bw_k;z_k)\|_2+\|\nabla f(\bw_k)\|_2\big)\\
  & \leq 2\big(\|\bw_1^*\|_2+\|\bw_k\|_2\big)\big(8\kappa^2\|\bw_k\|_2+\kappa\big)\\
  & = 16\kappa^2\|\bw_k\|_2^2+2\kappa\|\bw_1^*\|_2+2\|\bw_k\|_2(8\kappa^2\|\bw_1^*\|_2+\kappa)\\
  & \leq 17\kappa^2\|\bw_k\|_2^2+2\kappa\|\bw_1^*\|_2+(8\kappa\|\bw_1^*\|_2+1)^2\\
  & \leq 34\kappa^2C_4\sigma_\phi^{-1}\log(ek)+2\kappa\|\bw_1^*\|_2+(8\kappa\|\bw_1^*\|_2+1)^2\leq C_{10}\log(eT),
\end{align*}
where we have used \eqref{w-t-bound}.

It is clear from Proposition \ref{lem:unbiased} that $\ebb_{z_k}[\xi_k]=0$ and therefore it follows from $\ebb[(\xi-\ebb[\xi])^2]\leq\ebb[\xi^2]$ for any real-valued random variables $\xi$ that
  \begin{align*}
    \ebb_{z_k}\big[\big(\xi_k-\ebb_{z_k}[\xi_k]\big)^2\big] &= \ebb_{z_k}[\xi_k^2] \leq \ebb_{z_k}\big[\langle \bw_k^*-\bw_k,\widetilde{F}'(\bw_k;z_k)\rangle^2\big] \\
     & \leq \|\bw_k^*-\bw_k\|_2^2\ebb_{z_k}[\|\widetilde{F}'(\bw_k;z_k)\|_2^2]\leq \|\bw_k^*-\bw_k\|_2^2C_1f(\bw_k)\\
     & \leq C_1\phi(\bw_k)\|\bw_k-\bw_k^*\|_2^2.
  \end{align*}
  where we have used
  $\ebb_{z_k}[\|\widetilde{F}'(\bw_k;z_k)\|_2^2]\leq C_1\ebb_{z_k}[\widetilde{F}(\bw_k;z_k)]=C_1f(\bw_k)$ which can be shown analogously to the proof of Lemma \ref{lem:self-bounding-F}.
  The proof is complete.
\end{proof}

We are now ready to prove Theorem \ref{thm:strong}. Our key idea is to apply Part (b) of Lemma \ref{lem:martingale} in the Appendix to show that $\sum_{k=1}^{t}(k+t_1+1)\xi_k$ can be controlled by $\sum_{k=1}^{t}\big(\phi(\bw_k)-\phi(\bw_k^*)\big)(k+t_1+1)$, which can be offset by the first term of \eqref{step-strong}.
Then we can apply the induction strategy to derive the stated bound.

\begin{proof}[Proof of Theorem \ref{thm:strong}]
Since $t_1\geq 32C_1\sigma_\phi^{-1}\log\frac{2T}{\delta}$ and $T\geq2$, we know $t_1\geq 4C_1\sigma_\phi^{-1}$ and therefore Lemmas \ref{lem:approx-strong}, \ref{lem:step-strong}, \ref{lem:var-strong}  hold.
According to Lemma \ref{lem:step-strong}, there exists a set $\Omega_T^{(1)}=\{(z_1,\ldots,z_T)\}$ with $\text{Pr}(\Omega_T^{(1)})\geq1-\delta/2$ such that for all $(z_1,\ldots,z_T)\in\Omega_T^{(1)}$ we have
\begin{multline}\label{strong-0}
\frac{\sum_{k=1}^t(k+t_1+1)(\phi(\bw_k)-\phi(\bw_k^*))}{(t+t_1+1)(t+t_1+2)\sigma_\phi}+\|\bw_{t+1}-\bw_{t+1}^*\|_2^2\leq \frac{(t_1+1)(t_1+2)\|\bw_1^*\|_2^2}{(t+t_1+1)(t+t_1+2)}+\\
 +\frac{4\sum_{k=1}^t(k+t_1+1)\xi_k}{(t+t_1+1)(t+t_1+2)\sigma_\phi}+\frac{2\log^2(eT)(2C^2_{\delta/(2T)}+C_9)}{(t+t_1+2)\sigma_\phi^2}.
\end{multline}
According to Lemma \ref{lem:var-strong}, we know the following inequalities for $k=1,\ldots,t$
\begin{gather*}
|(k+t_1+1)\xi_k|\leq C_{10}(t+t_1+1)\log(eT)\\
  \ebb_{z_k}\big[\big((k+t_1+1)\xi_k-\ebb_{z_k}[(k+t_1+1)\xi_k]\big)^2\big]\leq (k+t_1+1)^2C_1\phi(\bw_k)\|\bw_k^*-\bw_k\|_2^2.
\end{gather*}
Let $\rho\in(0,1]$ to be fixed later.
It then follows from Part (b) of Lemma \ref{lem:martingale} the following inequality with probability $1-\delta/(2T)$
\begin{equation}\label{strong-4}
  \sum_{k=1}^{t}(k+t_1+1)\xi_k \leq \frac{C_1\rho\sum_{k=1}^{t}\phi(\bw_k)(k+t_1+1)^2\|\bw_k^*-\bw_k\|_2^2}{C_{10}(t+t_1+1)\log(eT)}
    +\frac{C_{10}(t+t_1+1)\log(eT)\log\frac{2T}{\delta}}{\rho}.
\end{equation}
By the union bounds of probabilities, we know the existence of $\Omega_T^{(2)}=\{(z_1,\ldots,z_T)\}$ with probability $\text{Pr}(\Omega_T^{(2)})\geq1-\delta/2$ such that \eqref{strong-4} holds
under the event $\Omega_T^{(2)}$ simultaneously for all $t=1,\ldots,T$.
In the remainder of the proof, we always assume that $\Omega_T^{(1)}\cap\Omega_T^{(2)}$ holds (with probability $1-\delta$), and show by induction that $\|\bw_{\tilde{t}+1}-\bw_{\tilde{t}+1}^*\|_2^2\leq C_{T,\delta}/(\tilde{t}+t_1+2)$ for all $\tilde{t}=0,1,\ldots,T-1$ conditioned on $\Omega_T^{(1)}\cap\Omega_T^{(2)}$, where
we introduce
\[
C_{T,\delta}=\max\Big\{2(t_1+1)\|\bw_1^*\|_2^2+\frac{3t_1\phi(\bw^*)}{2\sigma_\phi}+\frac{4\log^2(eT)(2C^2_{\delta/(2T)}+C_9)}{\sigma_\phi^2},\frac{C_{10}t_1\log(eT)}{4C_1}\Big\}
\]
and $\rho=\frac{C_{10}t_1\log(eT)}{4C_1C_{T,\delta}}$. It is clear that $\rho\leq1$.
The case with $\tilde{t}=0$ is clear from the definition of $C_{T,\delta}$. We now show $\|\bw_{t+1}-\bw_{t+1}^*\|_2^2\leq C_{T,\delta}/(t+t_1+2)$ under the induction assumption
\begin{equation}\label{strong-2}
  \|\bw_{\tilde{t}+1}-\bw_{\tilde{t}+1}^*\|_2^2\leq C_{T,\delta}/(\tilde{t}+t_1+2)
\end{equation}
for $\tilde{t}=0,1,\ldots,t-1$.

Plugging the induction assumption \eqref{strong-2} into \eqref{strong-4} gives ($\phi(\bw_k^*)$ is the same for all $k$)
\begin{align*}
   & \sum_{k=1}^{t}(k+t_1+1)\xi_k
   \leq \frac{C_1\rho C_{T,\delta}\sum_{k=1}^{t}\phi(\bw_k)(k+t_1+1)}{C_{10}(t+t_1+1)\log(eT)}
    +\frac{C_{10}(t+t_1+1)\log(eT)\log\frac{2T}{\delta}}{\rho}\\
    & \leq \frac{t_1\sum_{k=1}^{t}\big(\phi(\bw_k)-\phi(\bw_k^*)\big)(k\!+\!t_1\!+\!1)}{4(t\!+\!t_1\!+\!1)}+\frac{t_1\phi(\bw_k^*)\sum_{k=1}^{t}(k\!+\!t_1\!+\!1)}{4(t\!+\!t_1\!+\!1)}
    + \frac{4C_1(t\!+\!t_1\!+\!1)C_{T,\delta}\log\frac{2T}{\delta}}{t_1}\\
    & \leq \frac{t_1\sum_{k=1}^{t}\big(\phi(\bw_k)-\phi(\bw_k^*)\big)(k+t_1+1)}{4(t+t_1+1)}+\frac{3t_1\phi(\bw_k^*)(t+t_1+1)}{16}
    + \frac{4C_1(t+t_1+1)C_{T,\delta}\log\frac{2T}{\delta}}{t_1},
\end{align*}
where the second inequality is due to the definition of $\rho$ and the last inequality is due to $\sum_{k=1}^{t}(k+t_1+1)\leq\frac{3(t+t_1+1)^2}{4}$.

Plugging the above inequality back into \eqref{strong-0} yields the following inequality
\begin{align}
  &\Big(1-\frac{t_1}{t+t_1+1}\Big)\frac{\sum_{k=1}^t(k+t_1+1)(\phi(\bw_k)-\phi(\bw_k^*))}{(t+t_1+1)(t+t_1+2)\sigma_\phi}+\|\bw_{t+1}-\bw_{t+1}^*\|_2^2\notag\\
  &\leq \frac{(t_1+1)\|\bw_1^*\|_2^2}{t+t_1+2}+
 \frac{3t_1\phi(\bw^*)}{4\sigma_\phi(t+t_1+2)}
    + \frac{16C_1C_{T,\delta}\log\frac{2T}{\delta}}{t_1(t+t_1+2)\sigma_\phi}+\frac{2\log^2(eT)(2C^2_{\delta/2T}+C_9)}{(t+t_1+2)\sigma_\phi^2}\notag\\%
 & \leq \frac{(t_1+1)\|\bw_1^*\|_2^2}{t+t_1+2}+
 \frac{3t_1\phi(\bw^*)}{4\sigma_\phi(t+t_1+2)}
    + \frac{C_{T,\delta}}{2(t+t_1+2)}+\frac{2\log^2(eT)(2C^2_{\delta/2T}+C_9)}{(t+t_1+2)\sigma_\phi^2}\label{strong-3},
\end{align}
where the last inequality is due to $t_1\geq 32C_1\sigma_\phi^{-1}\log\frac{2T}{\delta}$.
By the definition of $C_{T,\delta}$, it is clear that the right-hand side of \eqref{strong-3} is less than or equal to $\frac{C_{T,\delta}}{t+t_1+2}$. Therefore, we finish the
induction process and show \eqref{strong-2} for $\tilde{t}=t$.

We now prove the second inequality of \eqref{strong}. It follows from the convexity of $\phi$ and \eqref{strong-3} that
\begin{multline*}
  \phi(\bar{\bw}_t^{(2)})-\phi(\bw_1^*) \leq \Big(\sum_{k=1}^{t}(k+t_1+1)\Big)^{-1}\Big(\sum_{k=1}^{t}(k+t_1+1)\big(\phi(\bw_k)-\phi(\bw^*)\big)\Big)\\
  \leq \frac{2\sigma_\phi(t+t_1+1)^2}{t(t+1)(t+2t_1+3)}\Big((t_1+1)\|\bw_1^*\|_2^2+\frac{3t_1\phi(\bw^*)}{4\sigma_\phi}
    + \frac{C_{T,\delta}}{2}+\frac{2\log^2(eT)(2C^2_{\delta/2T}+C_9)}{\sigma_\phi^2}\Big).
\end{multline*}
The second inequality of \eqref{strong} then follows. 
The proof is complete.
\end{proof}

\section{Conclusion\label{sec:conclusion}}

In this paper, we presented a new stochastic gradient descent method for AUC maximization which can accommodate general penalty terms. Our algorithm can update the model parameter upon receiving individual data with favorable $\O(d)$ space and per-iteration time complexity, making it amenable for streaming data analysis.  We established a high-probability convergence rate $\widetilde{\O}(1/\sqrt{T})$ for the general convex setting, and  a fast convergence $\widetilde{\O}(1/T)$ for the cases of strongly convex regularizers and no regularization term (without strong convexity).

There are several directions for future work. Firstly, we focused on the least square loss and it remains unclear to us on how to  develop similar algorithms for general loss functions.  Secondly,  it would be very interesting to develop stochastic optimization algorithms for AUC maximization under nonlinear models. There are two possible approaches for developing nonlinear models for AUC maximization including the kernel trick and and deep neural networks.  For the approach using the kernel trick, one could use the techniques of random feature  \citep{rahimi2008random} for RBF kernels and then apply the linear model in this paper.  One can easily prove a similar saddle point formulation even for non-convex deep  neural network, and develop stochastic primal-dual stochastic gradient decent algorithms \citep{nemirovski2009robust} for deep AUC maximization models. However, it is not clear on how to establish theoretical guarantees for the convergence of such algorithms as the objective function is generally non-convex.

\bigskip

\noindent{\bf \large Acknowledgements}.  The work of Yiming is supported by the National Science Foundation (NSF) under Grant No. \#1816227. The work of Yunwen is supported by the National Natural Science Foundation of China under Grant No. 61806091
and the Shenzhen Peacock Plan under Grant No. KQTD2016112514355531.

\appendix
\renewcommand{\thesection}{{\Alph{section}}}
\renewcommand{\thesubsection}{\Alph{section}.\arabic{subsection}}
\renewcommand{\thesubsubsection}{\Roman{section}.\arabic{subsection}.\arabic{subsubsection}}
\setcounter{secnumdepth}{-1}
\setcounter{secnumdepth}{3}

\section{Lemmas}
In this section we provide some useful lemmas. Lemma \ref{lem:self-bounding} shows a self-bounding property for smooth and non-negative functions~\citep{nesterov2013introductory}.
\begin{lemma}\label{lem:self-bounding}
  If $h:\rbb^d\to\rbb$ is non-negative and $\beta$-smooth, i.e., $\|\nabla h(\bw)-\nabla h(\tilde{\bw})\|_2\leq\beta\|\bw-\tilde{\bw}\|_2$, then $\|\nabla h(\bw)\|_2^2\leq 2\beta h(\bw)$ for all $\bw\in\rbb^d$.
\end{lemma}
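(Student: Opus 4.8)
The plan is to derive the inequality directly from the quadratic upper bound guaranteed by $\beta$-smoothness, used in conjunction with the non-negativity hypothesis. First I would record the standard descent-type consequence of the Lipschitz gradient assumption: for any $\bw,\tilde{\bw}\in\rbb^d$,
$$
h(\tilde{\bw})\leq h(\bw)+\langle\nabla h(\bw),\tilde{\bw}-\bw\rangle+\frac{\beta}{2}\|\tilde{\bw}-\bw\|_2^2,
$$
which follows by writing $h(\tilde{\bw})-h(\bw)=\int_0^1\langle\nabla h(\bw+t(\tilde{\bw}-\bw)),\tilde{\bw}-\bw\rangle\,dt$ and bounding the integrand using $\|\nabla h(\bw+t(\tilde{\bw}-\bw))-\nabla h(\bw)\|_2\leq\beta t\|\tilde{\bw}-\bw\|_2$.

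The key step is then to exploit this bound at a cleverly chosen point rather than a generic one. I would plug in the one-step gradient-descent point $\tilde{\bw}=\bw-\beta^{-1}\nabla h(\bw)$, which is exactly the minimizer of the right-hand quadratic in $\tilde{\bw}$. Substituting gives $\langle\nabla h(\bw),\tilde{\bw}-\bw\rangle=-\beta^{-1}\|\nabla h(\bw)\|_2^2$ and $\tfrac{\beta}{2}\|\tilde{\bw}-\bw\|_2^2=\tfrac{1}{2\beta}\|\nabla h(\bw)\|_2^2$, so the inequality collapses to
$$
h(\tilde{\bw})\leq h(\bw)-\frac{1}{2\beta}\|\nabla h(\bw)\|_2^2.
$$
Finally I would invoke non-negativity: since $h(\tilde{\bw})\geq0$, rearranging the last display immediately yields $\|\nabla h(\bw)\|_2^2\leq 2\beta h(\bw)$, which is the claim.

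There is no substantive obstacle here; the only genuinely load-bearing idea is the choice of evaluation point $\tilde{\bw}=\bw-\beta^{-1}\nabla h(\bw)$, since it is precisely this choice that tightens the smoothness bound enough for non-negativity to bite. The sole edge case worth a remark is $\beta=0$: then $h$ is affine and non-negative on all of $\rbb^d$, hence constant, so $\nabla h\equiv0$ and the inequality holds trivially; for $\beta>0$ the division by $\beta$ above is legitimate and the argument goes through verbatim.
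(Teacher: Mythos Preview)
Your argument is correct and is precisely the standard derivation of this self-bounding inequality. The paper itself does not give a proof but merely cites \citet{nesterov2013introductory}; your proposal reproduces exactly the argument found there (descent lemma applied at the gradient-step point $\tilde{\bw}=\bw-\beta^{-1}\nabla h(\bw)$, followed by non-negativity), so there is nothing to compare.
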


Our discussion is also based on some concentration inequalities. Lemma \ref{lem:hoeffding} is the Hoeffding's inequality for vector-valued random variables~\citep{boucheron2013concentration}.
\begin{lemma}[Hoeffding's inequality]\label{lem:hoeffding}
  Let $Z_1,\ldots,Z_n$ be a sequence of i.i.d. random variables taking values in $\rbb^d$  with $\|Z_i\|_2\leq B$ for every $i$. Then, for any $0<\delta<1$, with probability $1-\delta$ we have
  $$
    \Big\|\frac{1}{n}\sum_{i=1}^{n}\big[Z_i-\ebb[Z_i]\big]\Big\|_2\leq\frac{B}{\sqrt{n}}\Big[2+\sqrt{2\log1/\delta}\Big].
  $$
\end{lemma}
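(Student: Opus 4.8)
The plan is to derive this vector-valued Hoeffding bound as a standard consequence of the bounded-difference (McDiarmid) inequality combined with a second-moment estimate of the mean deviation. The key simplification is that, although the $Z_i$ are vector valued, the quantity of interest is the \emph{scalar} function $g(z_1,\ldots,z_n):=\big\|\frac1n\sum_{i=1}^n(z_i-\ebb[Z_i])\big\|_2$ of the $n$ independent inputs, so a scalar concentration tool applies directly. First I would check that $g$ has small bounded differences: replacing a single argument $z_i$ by any $z_i'$ with $\|z_i'\|_2\le B$ perturbs $\frac1n\sum_j z_j$ by $\frac1n(z_i-z_i')$, so by the reverse triangle inequality for $\|\cdot\|_2$ the value of $g$ changes by at most $\frac1n\|z_i-z_i'\|_2\le 2B/n$. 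Thus $g$ satisfies the bounded-difference property with constants $c_i=2B/n$.

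Next I would invoke McDiarmid's inequality (the bounded-difference inequality from the concentration literature cited as Boucheron et al., which is Azuma–Hoeffding applied to the Doob martingale of $g$). Since $\sum_{i=1}^n c_i^2=4B^2/n$, it gives $\text{Pr}[\,g-\ebb[g]\ge t\,]\le\exp\!\big(-2t^2/\sum_i c_i^2\big)=\exp\!\big(-nt^2/(2B^2)\big)$. Setting the right-hand side equal to $\delta$ and solving for $t$ yields $t=\frac{B}{\sqrt n}\sqrt{2\log(1/\delta)}$, so with probability at least $1-\delta$ we obtain $g\le \ebb[g]+\frac{B}{\sqrt n}\sqrt{2\log(1/\delta)}$.

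The remaining task is to control the expectation $\ebb[g]$. I would apply Jensen's inequality to pass to the second moment, $\ebb[g]\le\sqrt{\ebb[g^2]}$, and expand $\ebb[g^2]=\frac{1}{n^2}\sum_{i,j}\ebb\big[\langle Z_i-\ebb Z_i,\,Z_j-\ebb Z_j\rangle\big]$. By independence all off-diagonal terms vanish, leaving $\ebb[g^2]=\frac{1}{n^2}\sum_{i=1}^n\ebb\|Z_i-\ebb Z_i\|_2^2$. Bounding each centered second moment by $\|Z_i-\ebb Z_i\|_2\le\|Z_i\|_2+\|\ebb Z_i\|_2\le 2B$ gives $\ebb[g^2]\le 4B^2/n$ and hence $\ebb[g]\le 2B/\sqrt n$. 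Combining this with the high-probability bound from the previous paragraph produces exactly the claimed estimate $g\le\frac{B}{\sqrt n}\big(2+\sqrt{2\log(1/\delta)}\big)$.

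There is no genuine obstacle here; the proof is short and the steps are routine. The one point worth care is constant bookkeeping: to recover the stated leading constant $2$ one must use the \emph{crude} variance bound $\ebb\|Z_i-\ebb Z_i\|_2^2\le(2B)^2$ rather than the sharper $\le B^2$ (the latter would only give the constant $1$), and one must track the factor in the bounded-difference constants through McDiarmid so that the deviation term comes out as $\frac{B}{\sqrt n}\sqrt{2\log(1/\delta)}$.
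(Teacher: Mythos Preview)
Your proof is correct and follows a standard route: McDiarmid on the scalar functional $g$ plus a second-moment bound for $\ebb[g]$. Note, however, that the paper does not actually prove this lemma---it is simply quoted from \citet{boucheron2013concentration} as a known concentration inequality---so there is no original proof here to compare against. Your argument is exactly the kind of derivation one finds in that reference (bounded differences for the deviation term, Jensen plus independence for the expectation term), and your final remark about the crude variance bound $(2B)^2$ versus the sharper $B^2$ correctly identifies why the statement carries the constant $2$ rather than $1$.
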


Part (a) of Lemma \ref{lem:martingale} is the Azuma-Hoeffding inequality for martingales with bounded increments \citep{hoeffding1963probability},
and part (b) is a conditional Bernstein inequality using the conditional variance to quantify better the concentration behavior of martingales \citep{zhang2005data}.
\begin{lemma}\label{lem:martingale}
  Let $z_1,\ldots,z_n$ be a sequence of random variables such that $z_k$ may depend on the previous random variables $z_1,\ldots,z_{k-1}$ for all $k=1,\ldots,n$. Consider a sequence of functionals $\xi_k(z_1,\ldots,z_k),k=1,\ldots,n$.
  Let $\sigma_n^2=\sum_{k=1}^{n}\ebb_{z_k}\big[\big(\xi_k-\ebb_{z_k}[\xi_k]\big)^2\big]$ be the conditional variance and $\delta\in(0,1)$.
  \begin{enumerate}[(a)]
    \item Assume that $|\xi_k-\ebb_{z_k}[\xi_k]|\leq b_k$ for each $k$. With probability at least $1-\delta$ we have
    \begin{equation}\label{hoeffding}
      \sum_{k=1}^{n}\xi_k-\sum_{k=1}^{n}\ebb_{z_k}[\xi_k]\leq \Big(2\sum_{k=1}^{n}b_k^2\log\frac{1}{\delta}\Big)^{\frac{1}{2}}.
    \end{equation}
    \item Assume that $\xi_k-\ebb_{z_k}[\xi_k]\leq b$ for each $k$ and $\rho\in(0,1]$. With probability at least $1-\delta$ we have
    \begin{equation}\label{bernstein}
      \sum_{k=1}^{n}\xi_k-\sum_{k=1}^{n}\ebb_{z_k}[\xi_k]\leq \frac{\rho\sigma_n^2}{b}+\frac{b\log\frac{1}{\delta}}{\rho}.
    \end{equation}
  \end{enumerate}
\end{lemma}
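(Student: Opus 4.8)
The plan is to prove both parts by the exponential--moment (Chernoff) method, the only difference lying in how the per-step conditional moment generating function is controlled. Throughout I would write $D_k=\xi_k-\ebb_{z_k}[\xi_k]$ and let $\mathcal{F}_{k}$ denote the $\sigma$-algebra generated by $z_1,\ldots,z_k$, so that $\ebb_{z_k}[\cdot]$ is the conditional expectation given $\mathcal{F}_{k-1}$. By construction $\ebb_{z_k}[D_k]=0$, i.e. $\{D_k\}$ is a martingale difference sequence, and the quantity to bound is $\sum_{k=1}^n D_k$. For any $\lambda>0$, Markov's inequality applied to $e^{\lambda\sum_k D_k}$ gives
\[
\EP\Big[\sum_{k=1}^n D_k\geq t\Big]\leq e^{-\lambda t}\,\ebb\Big[e^{\lambda\sum_{k=1}^n D_k}\Big],
\]
so everything reduces to bounding the joint moment generating function, which I would do by peeling off one factor at a time via the tower property: conditioning on $\mathcal{F}_{n-1}$ isolates $\ebb_{z_n}[e^{\lambda D_n}]$, and iterating downward controls the whole product.

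For part (a), I would invoke Hoeffding's lemma (a standard fact): since $\ebb_{z_k}[D_k]=0$ and $D_k\in[-b_k,b_k]$, one has $\ebb_{z_k}[e^{\lambda D_k}]\leq e^{\lambda^2 b_k^2/2}$. Iterating the conditioning yields $\ebb[e^{\lambda\sum_k D_k}]\leq e^{\lambda^2\sum_k b_k^2/2}$, whence $\EP[\sum_k D_k\geq t]\leq \exp(-\lambda t+\lambda^2\sum_k b_k^2/2)$. Optimizing over $\lambda$ (taking $\lambda=t/\sum_k b_k^2$) gives the bound $\exp(-t^2/(2\sum_k b_k^2))$; setting this equal to $\delta$ and solving for $t$ produces $t=\big(2\sum_k b_k^2\log(1/\delta)\big)^{1/2}$, which is exactly \eqref{hoeffding}. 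This part is routine.

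For part (b) the essential point, and the main obstacle, is that the conditional variance $\sigma_n^2=\sum_k\ebb_{z_k}[D_k^2]$ is itself a random variable depending on $z_1,\ldots,z_{n-1}$, so one cannot bound the joint moment generating function by a deterministic constant and apply Markov directly. I would proceed as follows. Fix $\rho\in(0,1]$ and set $\lambda=\rho/b$; because $D_k\leq b$ we have $\lambda D_k\leq\rho\leq1$, so the elementary inequality $e^{u}\leq 1+u+u^2$ (valid for all $u\leq1$) applied to $u=\lambda D_k$ gives, after taking $\ebb_{z_k}[\cdot]$ and using $\ebb_{z_k}[D_k]=0$,
\[
\ebb_{z_k}\big[e^{\lambda D_k}\big]\leq 1+\lambda^2\ebb_{z_k}[D_k^2]\leq \exp\big(\lambda^2\ebb_{z_k}[D_k^2]\big).
\]
To absorb the randomness of $\sigma_n^2$, I would form the sequence $M_k=\exp\big(\lambda\sum_{j=1}^k D_j-\lambda^2\sum_{j=1}^k\ebb_{z_j}[D_j^2]\big)$ and verify from the previous display that $\ebb_{z_k}[M_k]\leq M_{k-1}$, i.e. $\{M_k\}$ is a supermartingale with $\ebb[M_n]\leq\ebb[M_0]=1$. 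Applying Markov's inequality to $M_n$ at level $1/\delta$ then yields, with probability at least $1-\delta$,
\[
\lambda\sum_{k=1}^n D_k-\lambda^2\sigma_n^2\leq\log(1/\delta),
\]
and dividing by $\lambda$ and substituting $\lambda=\rho/b$ rearranges precisely into $\sum_k D_k\leq \rho\sigma_n^2/b+b\log(1/\delta)/\rho$, establishing \eqref{bernstein}. I expect the supermartingale construction (rather than a naive moment generating function bound) to be the one step needing care, since it is what legitimately converts the random conditional variance into the stated tail bound.
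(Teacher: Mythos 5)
Your proof is correct; note that the paper itself states this lemma without proof, simply citing the Azuma--Hoeffding inequality \citep{hoeffding1963probability} for part (a) and the conditional Bernstein inequality of \citet{zhang2005data} for part (b). Your Chernoff-method derivation --- Hoeffding's lemma with iterated conditioning for (a), and for (b) the choice $\lambda=\rho/b$ combined with $e^u\leq 1+u+u^2$ for $u\leq 1$ and the exponential supermartingale $M_k=\exp\big(\lambda\sum_{j\leq k}D_j-\lambda^2\sum_{j\leq k}\ebb_{z_j}[D_j^2]\big)$ to absorb the random conditional variance --- is exactly the standard argument behind those cited results, with the constants matching \eqref{hoeffding} and \eqref{bernstein} precisely.
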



\end{document}